\title{\textbf{Intrinsic dimensionality and generalization properties of the $\mathcal{R}$-norm inductive bias}}
\author{Navid Ardeshir}
\author{Daniel Hsu}
\author{Clayton Sanford}
\affil{Columbia University}
\setlist{leftmargin=0.5cm}
\newtheorem{theorem}{Theorem}
\newtheorem{remark}{Remark}
\newtheorem{proposition}{Proposition}
\newtheorem{lemma}[theorem]{Lemma}
\newtheorem{fact}{Fact}
\newtheorem{itheorem}{Informal Theorem}
\newcommand\ProbabilitySymbol{\mathbb{P}}
\newcommand\ExpectationSymbol{\mathbb{E}}
\newcommand{\clip}{\operatorname{clip}}
\DeclareMathOperator*{\argmin}{arg\,min}
\DeclareMathOperator*{\poly}{poly}
\DeclareMathOperator*{\polylog}{polylog}
\newcommand{\expectationl}[2][]{\operatorname{\ExpectationSymbol}\displaylimits_{#1}[ #2 ]}
\newcommand{\EE}[1]{\operatorname{\ExpectationSymbol}\left[ #1 \right]}
\newcommand{\EEl}[1]{\operatorname{\ExpectationSymbol}[ #1 ]}
\newcommand{\indicator}[1]{\mathbbm{1}\left\{ #1 \right\}}
\newcommand{\indicatorl}[1]{\mathbbm{1}\setl{ #1 }}
\newcommand{\Var}[2][]{\mathop{\operatorname{Var}}_{#1}\left( #2\right)}
\newcommand{\Varl}[2][]{\mathop{\operatorname{Var}}_{#1}(#2)}
\newcommand{\Prob}{\operatorname{\ProbabilitySymbol}}
\newcommand{\ellone}[1][]{{L_{1} \ifthenelse{\isempty{#1}}{}{ \left( #1 \right) }
}}
\newcommand{\pr}[1]{\Prob\left[ #1 \right]}
\newcommand{\prl}[1]{\Prob[ #1 ]}
\newcommand{\E}[1]{\operatorname{\ExpectationSymbol}\left[ #1 \right]}
\newcommand{\norm}[2][]{\left\| #2 \right\|_{#1}}
\newcommand{\norml}[2][]{\| #2 \|_{#1}}
\newcommand{\innerprod}[2]{\left\langle #1, #2 \right\rangle}
\newcommand{\innerprodl}[2]{\langle #1, #2 \rangle}
\newcommand{\sign}[1]{\operatorname{sign}\left( #1 \right)}
\newcommand{\signl}[1]{\operatorname{sign}( #1 )}
\newcommand{\relu}{\varphi_{\operatorname{r}}}
\newcommand{\parity}{\chi}
\newcommand{\floor}[1]{\left\lfloor #1 \right\rfloor}
\newcommand{\floorl}[1]{\lfloor #1 \rfloor}
\newcommand{\ceil}[1]{\left\lceil #1 \right\rceil}
\newcommand{\ceill}[1]{\lceil #1 \rceil}
\newcommand{\abs}[1]{\left| #1 \right|}
\newcommand{\absl}[1]{| #1 |}
\newcommand{\set}[1]{\left\{ #1 \right\}}
\newcommand{\setl}[1]{\{ #1 \}}
\newcommand{\paren}[1]{\left( #1 \right)}
\newcommand{\parenl}[1]{( #1 )}
\newcommand{\bracket}[1]{\left[#1\right]}
\newcommand{\R}{\mathbb{R}}
\newcommand{\Z}{\mathbb{Z}}
\newcommand{\calR}{\mathcal{R}}
\newcommand{\flip}{\{\pm 1\}}
\newcommand{\flipd}{\{\pm \frac1{\sqrt{d}}\}}
\newcommand\bA{{\mathbf{A}}}
\newcommand\bB{{\mathbf{B}}}
\newcommand\bg{{\mathbf{g}}}
\newcommand{\bmu}{\boldsymbol{\mu}}
\newcommand{\bc}{\mathbf{c}}
\newcommand\bG{{\mathbf{G}}}
\newcommand\bh{{\mathbf{h}}}
\newcommand\bu{{\mathbf{u}}}
\newcommand\bv{{\mathbf{v}}}
\newcommand\br{{\mathbf{r}}}
\newcommand\bw{{\mathbf{w}}}
\newcommand\bx{{\mathbf{x}}}
\newcommand\bX{{\mathbf{X}}}
\newcommand\by{{\mathbf{y}}}
\newcommand\bZ{{\mathbf{Z}}}
\newcommand\beps{{\boldsymbol{\epsilon}}}
\newcommand{\fracl}[2]{#1 / #2}
\newcommand\intco[1]{[#1)}
\newcommand\T{{\scriptscriptstyle{\mathsf{T}}}}
\newcommand{\rnorm}[1]{\norm[\mathcal{R}]{#1}}
\newcommand{\rnorml}[1]{\norml[\mathcal{R}]{#1}}
\newcommand{\vnorm}[1]{\norm[\mathscr{V}_2]{#1}}
\newcommand{\vnorml}[1]{\norml[\mathscr{V}_2]{#1}}
\newcommand{\tv}[1]{\norm[\operatorname{TV}]{#1}}
\newcommand{\Rad}{\operatorname{Rad}}
\newcommand{\sph}{\mathbb{S}^{d-1}}
\newcommand{\domain}{\varOmega}
\newcommand{\support}{\operatorname{supp}}
\newcommand{\nun}{\boldsymbol{\nu}_n}
\newcommand\simiid{\sim_{\operatorname{iid}}}
\newcommand{\Lp}[2]{L^{#1}{(#2)}}
\newcommand{\Lpu}[1]{\Lp{#1}{\nu}}
\newcommand{\Lpemp}[1]{\Lp{#1}{\nun}}
\newcommand{\cubenormp}[2]{\norm[\Lpu{#2}]{#1}}
\newcommand{\cubenormpl}[2]{\norml[\Lpu{#2}]{#1}}
\newcommand{\cubenormtwo}[1]{\cubenormp{#1}{2}}
\newcommand{\cubenormtwol}[1]{\cubenormpl{#1}{2}}
\newcommand{\cubenorminfty}[1]{\cubenormp{#1}{\infty}}
\newcommand{\cubeinnerprod}[1]{\left\langle #1 \right\rangle_{\Lpu{2}}}
\newcommand{\cubeinnerprodl}[1]{\langle #1 \rangle_{\Lpu{2}}}
\newcommand{\empnorm}[1]{ \norm[ \Lpemp{2} ]{#1} }
\newcommand{\empinnerprod}[1]{\left\langle #1 \right\rangle_{\Lpemp{2}}}
\newcommand{\unif}{\operatorname{Unif}}
\newcommand{\Rnorm}{$\mathcal{R}$-norm\xspace}
\newcommand{\measures}{\mathcal{M}}
\newcommand\vmeasures{\mathcal{M}'}
\DeclareMathOperator\dd{d\!}
\newcommand{\tr}{\operatorname{tr}}
\newcommand\ridge{\operatorname{Ridge}}
\newcommand{\normnuzero}[1]{\norm[L^2(\nu_0)]{#1}}
\newcommand{\normnuzeroinf}[1]{\norm[L^\infty(\nu_0)]{#1}}
\newcommand{\innerprodnuzero}[2]{\innerprod{#1}{#2}_{L^2(\nu_0)}}
\newcommand{\myrepeat}[2]{\begingroup
  \my@repeat@count=\z@
  \@whilenum\my@repeat@count<#1\do{#2\advance\my@repeat@count\@ne}\endgroup
}
\DeclareMathOperator*{\esssup}{ess\,sup}
\DeclareMathOperator{\Leb}{Leb}
\begin{document}
{\def\thefootnote{}\footnotetext{E-mail: \texttt{na2844@columbia.edu}, \texttt{djhsu@cs.columbia.edu}, \texttt{clayton@cs.columbia.edu}}}

\maketitle

\begin{abstract}We study the structural and statistical properties of $\mathcal{R}$-norm minimizing interpolants of datasets labeled by specific target functions. The $\mathcal{R}$-norm is the basis of an inductive bias for two-layer neural networks, recently introduced to capture the functional effect of controlling the size of network weights, independently of the network width. We find that these interpolants are intrinsically multivariate functions, even when there are ridge functions that fit the data, and also that the $\mathcal{R}$-norm inductive bias is not sufficient for achieving statistically optimal generalization for certain learning problems. Altogether, these results shed new light on an inductive bias that is connected to practical neural network training.
 \end{abstract}

\section{Introduction}\label{sec:intro}

The study of inductive biases in neural network learning is important for theoretical understanding and for developing practical guidance in network training.
Recent theories of generalization rely on inductive biases of training algorithms to explain how neural nets that (nearly) interpolate training data can be accurate out-of-sample~\citep{neyshabur2015search,zhang2021understanding}.
When inductive biases are made explicit and their effects are elucidated, they can be incorporated into training procedures when deemed appropriate for a problem at hand.

In this paper, we study the inductive bias for two-layer neural networks implied by a variational norm called the \emph{\Rnorm}, introduced by~\citet{sess19} and \citet{owss19} to capture the functional effect of controlling the size of network weights.
(A definition is given in Section~\ref{ssec:rnorm}.)
We focus on the \emph{approximation} and \emph{generalization} consequences of preferring networks with small \Rnorm in the context of learning explicit target functions.
It is well-known that the size of the weights can play a critical role in generalization properties of neural networks~\citep{bartlett1996valid}, and weight-decay regularization is a common practice in gradient-based training~\citep{hinton1987learning,hanson1988comparing}.
Thus, explicating the consequences of the \Rnorm inductive bias may advance our understanding of generalization in practical settings.

We investigate the $d$-dimensional variational problem \eqref{interpolating-r-norm}, which seeks a neural net $g \colon \domain \to \R$ of minimum \Rnorm among those that perfectly fit a given labeled dataset $\setl{(x_i, y_i)}_{i \in [n]} \subset \domain \times \R$:
\begin{alignat}{2}
    \label{interpolating-r-norm}
    \tag{VP}
    \inf_{g \colon \domain \to \R} \rnorm{g}
    & \quad \text{s.t.} \quad \;\; g(x_i) = y_i
    & \quad & \forall i \in [n] ;
    \\
    \intertext{as well as a variant \eqref{approximate-r-norm} that only requires $g$ to uniformly approximate labels up to error $\epsilon \in (0,1)$:}
    \label{approximate-r-norm}
    \tag{$\epsilon$-VP}
    \inf_{g \colon \domain \to \R} \rnorm{g}
    & \quad \text{s.t.} \quad \abs{g(x_i) - y_i} \leq \epsilon
    & & \forall i \in [n] .
\end{alignat}
Here, $\domain \subset \R^d$ is a $d$-dimensional domain of interest.
We study structural and statistical properties of solutions to these problems for datasets labeled by specific target functions in high dimensions.

The recent introduction of the \Rnorm
and its connections to
weight-decay regularization
have catalyzed research on the foundational properties of solutions to \eqref{interpolating-r-norm}.
In particular, solutions in the one-dimensional ($d=1$) setting have been precisely characterized and their generalization properties are now well-understood by their connections to splines~\citep{dduf21, sess19, pn21a, hanin21}.
However, far less is known about the solutions of \Rnorm-minimizing interpolants for the general $d$-dimensional case.

\paragraph{Key message.}
Inductive biases based on certain variational norms, such as the \Rnorm, are believed to offer a way around the curse of dimensionality suffered by kernel methods, because they are adaptive to low-dimensional structure.
Researchers have pointed to this adaptivity property in non-parametric settings~\citep{b17,pn21b} and specific learning tasks with low-dimensional structure~\citep{wei2019regularization} as mathematical evidence of the statistical advantage of neural networks over kernel methods.
One may hypothesize that
the \Rnorm inductive bias achieves this advantage by favoring functions with low-dimensional structure.
Indeed, many other forms of inductive bias used in statistics and machine learning are known to explicitly identify relevant low-dimensional structure~\citep{candes2006robust,donoho2006compressed,candes2009exact,bhojanapalli2016global,begkmz22,damian2022neural,frei2022random,mousavi2022neural,galanti2022sgd}. Our results provide theoretical evidence that this is not always the case with the \Rnorm inductive bias, in a very strong sense that becomes more pronounced in higher dimensions.

We show that, even in cases where the dataset can be perfectly fit by an intrinsically one-dimensional function, the solutions $g$ to \eqref{interpolating-r-norm} or \eqref{approximate-r-norm} are not necessarily the piecewise-linear ridge functions described in previous works~\citep{sess19, hanin21}.
Rather, the \Rnorm is far better minimized by a \emph{multi-directional}\footnote{By a multi-directional function, we mean a function that does not \emph{only} depend on a one-dimensional projection of its input---i.e., a function that is not a ridge function (defined in Section~\ref{ssec:notation}).}
neural network $g$ that averages several ridge functions pointing in different directions, each of which approximates a small fraction of the data.
\if0
Our experimental results suggest that deeper networks trained by gradient methods have similar inductive biases; the removal of weight-typing constraints results in trained networks with lower weight norms and a larger number of ``effective directions.''
\fi

\subsection{Our contributions}\label{ssec:contrib}

Our results are summarized by the following informal theorems concerning the structural and generalization properties of $\mathcal{R}$-norm interpolation.
Together, they show that
the $\mathcal{R}$-norm inductive bias (1) leads to interpolants that are qualitatively different from those that minimize width or intrinsic dimensionality of the learned network, and (2) is insufficient for obtaining optimal generalization for a well-studied learning problem.

\begin{itheorem}[\Rnorm minimizers of the parity dataset are not ridge functions]\label{ithm:approx}
  \sloppy
  Suppose the dataset $\{ (x_i,y_i) \}_{i\in[n]} \subset \flip^d \times \flip$ used in \eqref{interpolating-r-norm} and \eqref{approximate-r-norm} is the complete dataset of $2^d$ examples labeled by the $d$-variable parity function.
  \begin{itemize}
    \item The optimal value of \eqref{interpolating-r-norm} is $\Theta(d)$.
    \item The optimal value of \eqref{approximate-r-norm} for any $\epsilon\in\intco{0,1/2}$---with the additional constraint that $g$ be a ridge function---is $\Theta(d^{3/2})$.
  \end{itemize}
  \fussy
\end{itheorem}
This result is presented formally in Section~\ref{sec:parity-approx}.
In Section~\ref{ssec:parity-rnorm-ridge-approx-lb},
we show that every ridge function satisfying the constraints of \eqref{approximate-r-norm} has \Rnorm at least $\Omega(d^{3/2})$; this bound is tight for ridge functions, as there is a matching upper bound.
Using an averaging strategy, we show in Section~\ref{ssec:parity-rnorm-ub} the existence of multi-directional interpolants $g$ of the parity dataset with $\rnorm{g} = O(d)$, and we also establish the optimality of this construction in Section~\ref{ssec:parity-rnorm-approx-lb}.
These results characterize the optimal value of \eqref{interpolating-r-norm} in terms of the dimension $d$, and also establish the \Rnorm-suboptimality of ridge function interpolants.
(In Section~\ref{sec:cosine-approx}, we extend the averaging strategy to other types of target functions, expanding the scope of our structural findings.)

\begin{itheorem}[Min-$\mathcal{R}$-norm interpolation is sub-optimal for learning parities]
Suppose the dataset $\{ (\bx_i,\chi(\bx_i)) \}_{i\in[n]} \subset \flip^d \times \flip$ used in \eqref{interpolating-r-norm} is an i.i.d.~sample, where $\bx_i \sim \unif(\flip^d)$ is labeled by the $d$-variable parity function $\chi$ for all $i \in [n]$.
  If the sample size is $n = o(d^2/\sqrt{\log d})$, then with probability at least $1/2$, every solution to \eqref{interpolating-r-norm} has mean squared error at least $1-o(1)$ for predicting $\chi$ over $\unif(\flip^d)$.
\end{itheorem}
This result is presented formally in Section~\ref{ssec:parity-gen-lb}, and it is complemented by a sample complexity upper bound in Section~\ref{ssec:parity-gen-ub}.
The results are stated for the parity function on all $d$ variables, but the same holds for any parity function over $\Omega(d)$ variables.
It is well-known that an i.i.d.~sample of size $O(d)$ is sufficient for learning parity functions exactly ~\citep{helmbold1992learning,fischer1992learning}, and hence
we conclude that the \Rnorm inductive bias is insufficient for achieving the statistically optimal sample complexity for this learning problem.

\if 0 
\begin{enumerate}
    \item In Section~\ref{ssec:parity-rnorm-ridge-approx-lb}, we give lower bounds on the minimum \Rnorm needed for a ridge function $g$ to approximately interpolate a target dataset supported on the discrete hypercube.
    Concretely, we show that datasets labeled by parity functions require $\rnorm{g} = \Omega(d^{3/2})$.

    \item Using an averaging strategy, we show in Section~\ref{ssec:parity-rnorm-ub} the existence of multi-directional interpolants $g$ of the parity dataset with $\rnorm{g} = O(d)$ and establish the optimality of this construction in Section~\ref{ssec:parity-rnorm-approx-lb}. This shows that every ridge interpolant has suboptimal \Rnorm.

    \item In Section~\ref{sec:parity-gen}, we upper- and lower-bound the sample complexity of the learning algorithm that solves \eqref{interpolating-r-norm} for learning parity functions on $\flip^d$.
    We conclude that the \Rnorm inductive bias is insufficient to achieve the statistically optimal sample complexity.

    \item In Section~\ref{sec:cosine-approx}, we extend the averaging strategy to other target functions beyond parities, expanding the scope of our structural findings.

     \if 0
    \item In Section~\ref{sec:empirical},
    building on experiments by \citet{dsbb19},
    we empirically show that that ``un-sharing'' the weight parameters of partially-trained convolutional neural nets (CNNs), and then resuming training with stochastic gradient descent (SGD) on all weights, yields neural networks with smaller weight norms than that of the original CNNs.
    This result suggests that gradient methods when no longer constrained by weight-tying architectures (which reduce the intrinsic dimensionality of the learned networks) unlock lower variational norms.
\fi

\end{enumerate}
\fi 
\subsection{Related work}
\label{ssec:related}

\paragraph{Variational norms and inductive biases of optimization methods.}
Many variational norms (such as \Rnorm) from functional analysis
can be regarded as representational costs that induce topologies on the space of infinitely-wide neural networks with certain activation functions.
Prior works have analyzed these norms for homogeneous activation functions like ReLU~\citep[e.g.,][]{k01,m04,b17,sess19,owss19}; see \citet{sx21} and references therein for a comparison.
In particular, the work of \citet{owss19}
provided analytical descriptions of \Rnorm in terms of the Radon transform of the function itself.
This work was extended to higher powers of ReLU by \citet{pn21a}.

The variational norms are also connected to the implicit biases of optimization methods for training neural networks.
In the context of univariate functions, the dynamics of gradient descent was shown to be biased towards (adaptive) linear or cubic spline depending on the optimization regime~\citep{wtspzb19, skm21, hbg18}, and these results have been partially extended to the multivariate case~\citep{jm20}.
For classification problems, the implicit bias of gradient descent was connected to a variational problem related to \Rnorm with margin constraints on the data~\citep{bc21}.

\paragraph{Solutions to the variational problem.}
\citet{dduf21} and \citet{hanin21} fully characterized the form of all solutions of \eqref{interpolating-r-norm} for one-dimensional datasets (as discussed above).
However, pinning down even a single solution for general multidimensional datasets appears to be difficult; \citet{ep21} was able to do so for rank-one datasets, where all the feature vectors lie on a line.
The datasets we study do not satisfy the rank-one condition of \citet{ep21}, and thus we require different techniques to analyze multi-directional functions.

\paragraph{Adaptivity.}
In the context of non-parametric regression, it is well-known that (deep) neural networks achieve minimax-optimal rates in the presence of low-dimensional structure in the target function \citep[e.g.,][]{s20, bk19, kk05, g02}.
The convergence rates in these works depend only on the intrinsic dimension of the target function (and not the ambient dimension) and are achieved by optimally trading off accuracy and regularization in certain deep neural network architectures.
Recent works \citep{kb16, pn21b, b17} consider two-layer neural networks with variational norm (similar to \Rnorm) regularization, which also allows for adaptivity to low-dimensional structure.
That is, a function $g \colon \R^d \to \R$ depending only on a $k$-dimensional projection of its input $x$, i.e., $g(x) = \phi(Ux)$ for some $U \in \R^{k \times d}$ (with orthonormal rows) and $\phi \colon \R^k \to \R$ has variational norm no greater than that of the corresponding low-dimensional function $\phi$~\citep{b17}. 
In particular, \citet{b17} and \citet{kb16} studied minimax rates under ridge target functions where $k=1$. Our results on generalization are of a different flavor: 
rather than striking a careful balance between fitting and regularization to achieve minimax rates, we study the behavior of \Rnorm-minimizing interpolation.

Regularization based on weight decay (equivalent to \Rnorm for shallow networks) has also been used to obtain minimax rates for learning smooth target functions.
\citet{pn21b} do so by drawing analogies to spline theory, while \citet{wl21} consider a connection to the Group Lasso. 
\citet{zw22} exploits depth to promote stronger sparsity regularizes. 
This is distinct from the low-dimensional structures studied in this work and mentioned above.

\paragraph{Learning ridge functions and parity functions with neural nets.} 
Target functions that depend on low-dimensional projections of the input (of which ridge functions are the simplest case) have been long studied in statistics \citep[see, e.g.,][]{li18}, and learning such functions
is one of the simplest problems where neural network training demonstrates adaptivity.
Such demonstrations typically require going beyond the neural tangent kernel regime and have been used to explain the ``feature learning'' ability of neural networks \citep{frei2022random, damian2022neural, mousavi2022neural, bbss22}.
Several recent works have considered the prospects of learning (sparse) parity functions by training neural nets with gradient-based algorithms~\citep{as20, dm20, mkas21, begkmz22, telgarsky2022feature}.
The positive results express parities as low-weight linear combinations of (random) ReLUs, which motivates our focus on the variational norm of approximating neural nets.
Our sample complexity lower bound shows that, even if computational and optimization considerations are set aside, the inductive bias imposed by the \Rnorm may lead to suboptimal statistical performance.

\paragraph{Averaging and ensembling.} Neural networks have been interpreted as forms of averaging or ensemble methods to explain their statistical properties~\citep[e.g.,][]{bartlett1996valid,baldi2013understanding,gal2016dropout,olson2018modern}.
Our use of averaging is different in that it serves as an approximation-theoretic mechanism for achieving smaller \Rnorm.

\paragraph{Weight lower bounds for other explicit functions.} Representation costs for two-layer neural networks to approximate other explicit functions have been explored in several prior works~\citep{mcpz13,dan17,ss17,ses19}.
These works establish exponential lower-bounds on the width of two-layer networks needed to approximate functions that are represented more compactly by three-layer networks.
These results also imply lower-bounds on the size of second-layer weights in a two-layer network after fixing the width of the network.
In contrast, our results hold for networks of unbounded width and for a target function that can be exactly represented by a two-layer networks of $\poly(d)$ width.

 \section{Preliminaries}\label{sec:prelims}

\subsection{Notation}\label{ssec:notation}

In this work, we consider real-valued functions over the radius-$\sqrt{d}$ Euclidean ball $\domain := \setl{ x \in \R^d : \norml[2]{x} \leq \sqrt{d} }$.
Let $\parity_S \colon \domain \to \flip$ denote the multi-linear monomial $\parity_S(x) := \prod_{i \in S} x_i$ over variables indexed by $S \subseteq [d]$, and let $\parity := \parity_{[d]}$.
On input $x \in \flip^d$, $\parity_S(x)$ computes the \emph{parity} of $\{ x_i : i \in S \}$.
We say $g \colon \domain \to \R$ is a \emph{ridge function} if $g(x) = \phi(v^\T x)$ for some unit vector $v \in \sph$ and function $\phi \colon [-\sqrt{d},\sqrt{d}] \to \R$.
A function $\phi$ is \emph{$\rho$-periodic} if $\phi(z+\rho)=\phi(z)$ for all $z \in \R$.

We consider two-layer neural networks (of infinite and finite width) with ReLU activations $\relu(z) := \max\setl{0, z}$.
Let $\measures$ denote the space of signed measures over $\sph \times [-\sqrt{d},\sqrt{d}]$.
For $\mu \in \measures$, let $g_\mu: \domain \to \R$ denote the infinite-width neural network given by
\[g_\mu(x) := \int_{\sph \times [-\sqrt{d},\sqrt{d}]} \relu(w^\T x + b) \, \mu(\dd w, \dd b).\]
The total variation norm of $\mu$ is $\abs{\mu} := \int_{\sph \times [-\sqrt{d},\sqrt{d}]} \abs{\mu}(\dd w, \dd b)$, where $\abs{\mu}(\dd w,\dd b)$ is the corresponding total variation measure (somewhat abusing notation).
The width-$m$ neural network $g_{\theta}$ with parameters $\theta = (a^{(j)}, w^{(j)}, b^{(j)})_{j \in [m]} \in (\R \times \sph \times [-\sqrt{d},\sqrt{d}])^m$ is given by
\[ g_\theta(x) := \sum_{j=1}^m a^{(j)} \relu(w^{(j) \T} x + b^{(j)}). \]
We regard $g_\theta$ as an infinite-width neural network with the ``sparse'' atomic measure
\begin{equation*}
  \mu_\theta = \sum_{j=1}^m  a^{(j)} \delta_{(w^{(j)},b^{(j)})} .
\end{equation*}
Observe that $g_\theta = g_{\mu_\theta}$ and $\abs{\mu_\theta} = \sum_{j=1}^m \absl{a^{(j)}} = \norml[1]{a}$.

Our constructions frequently use \textit{sawtooth functions}, a family of ridge functions that are composed of $t+1$ repetitions of a triangular wave that draw inspiration from a construction of \citet[Proposition 4.2]{ys19}.
For $t \in \{0, \dots d\}$ with $t \equiv d \pmod2$ and $w \in \flip^d$, let $s_{w,t}(x) := \smash{(-1)^{d-t} \parity(w) \phi_t(w^\T x)}$ where $\phi_t: \R \to \R$ is a function that forms a piecewise affine interpolation between the points $(-t-1, 0)$, $\smash{\{(t - 2\tau, (-1)^\tau)\}_{\tau \in \{0, \dotsc, t\}}}$, $(t+1, 0)$, and $\phi_t(z) = 0$ for all $z \leq -t-1$ and $z \geq t+1$.
We refer to $t$ as the \emph{width} of the sawtooth function $s_{w,t}$.
Note that $s_{w,t}$ is $\sqrt{d}$-Lipschitz and $s_{w,t}(x) = \parity(x) \indicator{\abs{w^\T x} \leq t}$ for all $x \in \flip^d$.
Also, $s_{w,t}$ can be expressed as a neural network $g_\theta$ with width $m \leq O(t+1)$ and $\absl{a^{(i)}} \leq O(\sqrt{d})$ for each $i \in [m]$.

Let $\nu := \unif(\flip^d)$ denote the uniform distribution on $\flip^d$, and let $\nun$ denote the empirical distribution on $\bx_1,\dotsc,\bx_n \simiid \nu$.
We use the following inner products and norms over the vector space of real-valued functions on $\flip^d$ with respect to a distribution $\nu_0$ (such as $\nu$ or $\nun$):
\[\innerprodnuzero{g}{h} := \expectationl[\bx \sim \nu_0]{g(\bx)h(\bx)}, \quad \normnuzero{g} := \innerprodnuzero{g}{g}^{1/2}, \quad \normnuzeroinf{g} := \max_{x \in \support(\nu_0)} \absl{g(x)}.\]

\subsection{\Rnorm and attainment of the infimum}
\label{ssec:rnorm}

We now recall the definition of the \Rnorm of a function $g \colon \domain \to \R$, presented here in a variational form as the minimum cost of representing $g$ as an infinite-width neural network with a ``skip-connection'':
\begin{equation}
    \label{r-norm}
    \tag{$\calR$-norm}
    \rnorm{g} := \inf_{\mu \in \measures, v \in \R^d, c \in \R} \abs{\mu} \quad \text{s.t.} \quad g(x) = g_{\mu}(x) + v^\T x + c \ \ \forall x \in \domain.
\end{equation}
Indeed, $\rnorm{\cdot}$ is a semi-norm on the space of functions with finite \Rnorm.
It was initially introduced by \citet{owss19} along with explicit characterizations in terms of the Radon transform.
See the works of \citet{owss19}, \citet{pn21a}, and \citet{sx21} for more discussion about the \Rnorm and its connections to other function spaces.

The appearance of the affine component $v^\T x + c$ in the definition of \Rnorm has implications for how the bias terms are treated.
Notice that a neuron $x \mapsto \relu(w^\T x + b)$ with bias $\abs{b} \ge \sqrt{d}$ behaves as an affine function over the domain of interest $\domain$, so it can be absorbed into the ``free'' affine component (in the definition of \Rnorm) so as to not be counted towards the \Rnorm.
Other works \citep[e.g.,][]{sx21} consider a different variational norm, $\vnorm{\cdot}$, which does not have ``free'' affine components, but instead permits biases $b$ to be in the larger range $[-2\sqrt{d}, 2\sqrt{d}]$.
These differences in the way affine components are accommodated do not lead to different function spaces~\citep[see][Theorem~6]{pn21b}, and the results of this paper for \Rnorm also hold for these other variational norms, as we demonstrate in Appendix~\ref{asec:vnorm}.

Although the \Rnorm is defined in terms of an infimum,
it has been shown by \citet[Lemma 2; see also Proposition~\ref{prop:integral-representation} in Appendix~\ref{asec:rnorm}]{pn21b} that
the infimum is always achieved by a particular signed measure $\mu \in \measures$.
Since the total variation norm is sparsity-inducing, the objective in \eqref{interpolating-r-norm} favors finite-width networks.
It can be shown, using an extension of Caratheodory's theorem~\citep{rosset2007}, that \eqref{interpolating-r-norm} in fact always has a finite-width solution.
That is, \eqref{interpolating-r-norm} is solved by the sum of an affine function $x \mapsto v^\T x + c$ and a width-$m$ neural network, for some $m \leq \max\{0, n-(d+1)\}$.
This claim is formalized as Theorem~\ref{thm:sparse-vp-solution} and proved in Appendix~\ref{asec:rnorm}.
Thus, considering finite-width neural networks is sufficient to determine the value of \eqref{interpolating-r-norm}.\footnote{We note that the finite-width solution to \eqref{interpolating-r-norm} is not necessarily unique; \citet{hanin21} discusses this issue in the one-dimensional case ($d=1$) under general data models.}

The following lemma, which is a minor elaboration on Lemma 25 of \citet{pn21a}, relates the \Rnorm of a finite-width network to the $\ell_1$-norm of its top-layer weights.

\begin{lemma}\label{lemma:rnorm-relu-l1} 
Let $v \in \R^d$, $c \in \R$, and $\theta =  (a^{(j)}, w^{(j)}, b^{(j)})_{j \in [m]} \in (\mathbb{R} \times \sph \times [-\sqrt{d},\sqrt{d}])^m$ be the set of parameters of a finite neural network where $(w^{(i)},b^{(i)}) \neq (w^{(j)},b^{(j)})$ for all $i \neq j$. 
\begin{enumerate}[label=(\roman*)]
    \item The \Rnorm of the sum of $g_\theta$ and an affine function $v^\T x + c$ satisfies 
    \begin{equation}
        \label{eq:rnorm-relu-l1}
        \rnorm{g_{\theta}(x) + v^\T x + c} \le \norm[1]{a} =
        \absl{a^{(1)}} + \dotsb + \absl{a^{(m)}} .
\end{equation}
    \item Moreover, if the measure $\mu_\theta$ is even in a distributional sense (i.e., $\mu_{\theta}(w,b) = \mu_{\theta}(-w,-b)$), then the inequality in \eqref{eq:rnorm-relu-l1} holds with equality. \end{enumerate}
\end{lemma}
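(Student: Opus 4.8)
The plan is to treat the two parts separately. Part~(i) is immediate from the variational definition \eqref{r-norm}: I use the specific feasible representation $g = g_{\mu_\theta} + v^\T x + c$ with $\mu_\theta = \sum_{j=1}^m a^{(j)} \delta_{(w^{(j)},b^{(j)})}$, and since the atoms $(w^{(j)},b^{(j)})$ are pairwise distinct there is no cancellation in the total variation, so $\abs{\mu_\theta} = \sum_j \abs{a^{(j)}} = \norml[1]{a}$ (and in any case $\abs{\mu_\theta}\le\norml[1]{a}$, which is all part~(i) needs). Taking the infimum over all feasible $(\mu,u,c')$ in \eqref{r-norm} gives $\rnorm{g_\theta(x)+v^\T x+c}\le\norml[1]{a}$.

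For part~(ii) I need the matching lower bound under the hypothesis that $\mu_\theta$ is even. The key fact is that the \emph{even} part of any representing measure is determined up to measures living on the ``flat'' set $\sph\times\set{-\sqrt d,\sqrt d}$. Let $(\mu,u,c')$ be any point feasible for \eqref{r-norm}, and split $\mu=\mu^{\mathrm e}+\mu^{\mathrm o}$ into even and odd parts. First, using $\relu(z)-\relu(-z)=z$, a short computation shows $g_{\mu^{\mathrm o}}(x)=\tfrac12\int (w^\T x+b)\,\mu(\dd w,\dd b)$ is affine, so $g_\mu$ and $g_{\mu^{\mathrm e}}$ differ only by an affine function; since $\mu_\theta$ is already even, the feasibility identity $g_{\mu_\theta}+v^\T x+c=g_\mu+u^\T x+c'$ then forces $g_\lambda$ to be affine on $\domain$, where $\lambda:=\mu^{\mathrm e}-\mu_\theta$ is an even signed measure. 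Second, such a $\lambda$ must be supported on the flat set: for $\abs{b}<\sqrt d$ the hyperplane $\set{x:w^\T x+b=0}$ cuts the interior of $\domain$, the distributional Hessian of $\tfrac12\abs{w^\T x+b}$ there is $ww^\T$ times the induced surface measure, and curvature contributions at distinct hyperplane parameters cannot cancel --- this is exactly the injectivity of the Radon-type transform underlying the $\calR$-norm, which I would invoke through the integral representation (Proposition~\ref{prop:integral-representation}; \citet{owss19,pn21a}). Hence $\mu^{\mathrm e}$ and $\mu_\theta$ agree on $\set{\abs{b}<\sqrt d}$, where (after the normalization noted below) all of $\mu_\theta$ lives. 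Since total variation does not increase under the symmetrization $\mu\mapsto\mu^{\mathrm e}$ (it is an average of $\mu$ with its reflection) or under restriction to a subset, I get $\abs{\mu}\ge\abs{\mu^{\mathrm e}}\ge\abs{\mu_\theta}=\norml[1]{a}$; taking the infimum over feasible $\mu$ gives $\rnorm{g_\theta(x)+v^\T x+c}\ge\norml[1]{a}$, which together with part~(i) is the claimed equality.

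I expect the injectivity claim --- that the only even measure mapped to an affine function is supported on the flat set --- to be the real obstacle; everything else is bookkeeping, and this step is where the Radon transform characterization of the $\calR$-norm from \citet{owss19,pn21a} does the work (I would either cite it or write out the distributional-Hessian computation in detail). A minor point to dispatch first is a neuron with $\abs{b^{(j)}}=\sqrt d$: on $\domain$ the argument $w^\T x+b$ has constant sign, so such a neuron is itself affine on $\domain$ and can be absorbed into $v^\T x+c$ without changing $g$; one thereby reduces to the regime $\abs{b^{(j)}}<\sqrt d$ for all $j$, in which the equality in \eqref{eq:rnorm-relu-l1} is to be read. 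An alternative to the uniqueness argument is a dual certificate: a finite signed measure $\omega$ on $\domain$ that annihilates affine functions, satisfies $\abs{\int\relu(w^\T x+b)\,\omega(\dd x)}\le 1$ for all $(w,b)\in\sph\times[-\sqrt d,\sqrt d]$, and equals $\sign{a^{(j)}}$ at each $(w^{(j)},b^{(j)})$; pairing $\omega$ with $g_\theta(x)+v^\T x+c$ then gives $\norml[1]{a}\le\abs{\mu}$ for every feasible $\mu$. In either route, the evenness of $\mu_\theta$ is precisely the hypothesis that makes the certificate (equivalently, the uniqueness) available.
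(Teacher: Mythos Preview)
Your proposal is correct and aligns with the paper's approach, though the paper itself does not give a detailed proof: it simply states that the lemma ``is a minor elaboration on Lemma~25 of \citet{pn21a}'' and then adds the remark about why evenness is needed to preclude antipodal cancellation. Your sketch --- splitting any feasible $\mu$ into even and odd parts, observing that the odd part contributes only an affine function, and then invoking the injectivity of the Radon-type transform (so the even part must agree with $\mu_\theta$ off the flat set $\sph\times\{\pm\sqrt d\}$) --- is precisely the structure of the argument in \citet{pn21a,owss19} that the paper is citing. The chain $\abs{\mu}\ge\abs{\mu^{\mathrm e}}\ge\abs{\mu_\theta}$ is correct for the reasons you give.

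Two small remarks. First, Proposition~\ref{prop:integral-representation} in this paper is an \emph{existence} statement (an even $\mu$ attains the infimum), not the injectivity statement you need; the uniqueness of the even representing measure off the flat set is the content of the Radon-transform characterization in \citet{owss19} (or the ridgelet analysis in \citet{pn21a}), so your citation should point there directly or supply the distributional-Hessian computation you outline. Second, your observation about neurons with $\absl{b^{(j)}}=\sqrt d$ being affine on $\domain$ is apt: the lemma as stated would fail for such neurons (e.g., a symmetric pair at $(w,\sqrt d)$ and $(-w,-\sqrt d)$ has zero \Rnorm but positive $\norml[1]{a}$), so the equality in part~(ii) is indeed to be read after absorbing those boundary neurons into the affine part --- the paper does not flag this explicitly, but it does not arise in any of the applications.
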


Note that our assumption that $\mu_\theta$ is even in Lemma~\ref{lemma:rnorm-relu-l1}(ii) precludes the case where $a^{(i)} = -a^{(j)}$ and $(w^{(i)},b^{(i)}) = (-w^{(j)},-b^{(j)})$ for some $i \neq j$.
This is needed because if such a case were allowed, we would have $a^{(i)} \relu(w^{(i) \T} x + b^{(i)}) + a^{(j)} \relu(w^{(j) \T} x + b_j) = a^{(i)} (w^{(i) \T} x + b^{(i)})$ for all $x \in \domain$---an affine function.
After ruling out these cases, we can apply the argument of \citet{pn21a} to prove Lemma~\ref{lemma:rnorm-relu-l1}(ii). 

\subsection{\Rnorm of ridge functions}
\label{sec:rnorm-ridge}

Prior works illuminate precise formulations of the \Rnorm, and characterize solutions to \eqref{interpolating-r-norm}, albeit only for the one-dimensional setting \citep{hanin21, sess19, ep21}.
These results are nevertheless useful for analyzing ridge functions in $d$-dimensional space.

\begin{theorem}
\label{thm:single-index-r-norm}
For any ridge function $g \colon \domain \to \R$ of the form $g(x) = \phi(w^\T x)$ where $w \in \sph$ and $\phi \colon [-\sqrt{d}, \sqrt{d}] \to \R$ is Lipschitz, we have
$$ \rnorm{g} = \tv{\phi'} \coloneqq \esssup_{- \sqrt{d} \leq t_0 < t_1 < \dotsb < t_r \leq \sqrt{d}; \, r \in \mathbb{N}} \sum_{i=1}^{r} \abs{\phi'(t_{i}) - \phi'(t_{i-1})} , $$
where $\phi'$ is a right continuous derivative of $\phi$.\footnote{Take $\phi'(u) = \lim_{t \downarrow 0}{\frac{\phi(u+t) - \phi(u)}{t} }$; the limit exists almost everywhere by Rademacher's theorem.}
\end{theorem}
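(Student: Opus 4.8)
The plan is to establish the two bounds $\rnorm{g}\le\tv{\phi'}$ and $\rnorm{g}\ge\tv{\phi'}$ separately, in each case reducing to the one-dimensional picture: for the upper bound I would \emph{lift} a good univariate representation of $\phi$ to $\domain$ along the direction $w$, and for the lower bound I would \emph{restrict} an arbitrary $\calR$-norm representation of $g$ to the line $\R w$ and read off a univariate representation of $\phi$. Write $R:=\sqrt d$. Since $\phi$ is Lipschitz, $\phi'$ is bounded; if $\phi'$ is not of bounded variation then $\tv{\phi'}=\infty$ and (as the lower-bound argument shows) $g$ has no finite-cost representation, so both sides are $+\infty$. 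Otherwise $\phi'$ is BV, its distributional derivative is a finite signed measure $\rho$ on $[-R,R]$ with $\abs{\rho}([-R,R])=\tv{\phi'}$, and a Green's-function/Fubini computation yields $\phi(z)=\phi(-R)+\phi'(-R)(z+R)+\int_{[-R,R]}\relu(z-t)\,\rho(\dd t)$ for $z\in[-R,R]$.

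For $\rnorm{g}\le\tv{\phi'}$ I would substitute $z=w^\T x$ into this univariate representation. Since $\norm[2]{w}=1$ and $\norm[2]{x}\le R$ on $\domain$, we have $w^\T x\in[-R,R]$, so the identity holds on all of $\domain$; the first two terms are affine in $x$, and $\int\relu(w^\T x-t)\,\rho(\dd t)=g_\mu(x)$ where $\mu$ is the pushforward of $\rho$ under $t\mapsto(w,-t)$, a measure on $\sph\times[-R,R]$ (the bias $-t$ stays in range exactly because $t\in[-R,R]$). Hence $g$ has a representation of cost $\abs{\mu}=\tv{\phi'}$, so $\rnorm{g}\le\tv{\phi'}$. (When $\phi$ is piecewise affine this is just the explicit univariate spline representation of \citet{sess19,hanin21}.)

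For $\rnorm{g}\ge\tv{\phi'}$ I would fix any representation $g(x)=g_\mu(x)+v^\T x+c$ and restrict to $x=zw$, $z\in[-R,R]$, obtaining
\[
\phi(z)\;=\;\int_{\sph\times[-R,R]}\relu(\langle w,w'\rangle\,z+b)\,\mu(\dd w',\dd b)\;+\;(v^\T w)\,z+c .
\]
Differentiating twice in $z$ distributionally and using $\tfrac{d^2}{dz^2}\relu(\alpha z+b)=\abs{\alpha}\,\delta_{-b/\alpha}$ for $\alpha\neq0$ (and $\equiv0$ for $\alpha=0$), the affine part drops out, so the distributional second derivative of $\phi$ on $[-R,R]$ equals the signed measure $\int\abs{\langle w,w'\rangle}\,\delta_{-b/\langle w,w'\rangle}\,\mu(\dd w',\dd b)$ (retaining only atoms landing in $[-R,R]$); its total variation is at most $\int\abs{\langle w,w'\rangle}\,\abs{\mu}(\dd w',\dd b)\le\abs{\mu}$ because $\abs{\langle w,w'\rangle}\le1$. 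Since the total variation of the distributional derivative of $\phi'$ is exactly $\tv{\phi'}$, this gives $\tv{\phi'}\le\abs{\mu}$, and taking the infimum over representations yields the claim (and shows along the way that $\rnorm{g}<\infty$ forces $\phi'\in\mathrm{BV}$).

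The substitution and the two elementary distributional identities are routine; the parts that need care are (i) handling a general Lipschitz $\phi$ rather than a piecewise-affine one, which forces the BV/measure-theoretic description of $\phi'$ and Fubini in place of a finite sum of shifted ReLUs, and (ii)---the main obstacle---the fact that after restriction the effective first-layer weights $\langle w,w'\rangle$ lie in $[-1,1]$ rather than on $\{\pm1\}$, which is precisely why I would argue through the second derivative instead of quoting the univariate $\calR$-norm formula verbatim: the bound $\abs{\langle w,w'\rangle}\le1$ does the work, matching the intuition that a slope-$\alpha$ neuron equals $\abs{\alpha}$ times a unit-slope neuron, so shrinking slopes never lowers cost. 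A minor further point is that atoms of the second derivative at the endpoints $\pm R$ are ``free'' (a kink at $-R$ is affine on $\domain$, and one at or beyond $+R$ contributes nothing to the integral), but since the two inequalities already sandwich $\rnorm{g}$ between $\tv{\phi'}$ and $\tv{\phi'}$, such boundary atoms must be absent and do not affect the statement.
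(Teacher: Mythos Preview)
Your argument is correct, but it takes a substantially different route from the paper's. The paper's own proof is only two sentences: it first invokes the rotation invariance of the \Rnorm \citep[Proposition~11 of][]{owss19} to assume $w=e_1$, so that $g$ is a function of $x_1$ alone, and then cites Remark~4 of \citet{pn21b}, which already states the one-dimensional identity $\rnorm{g}=\tv{\phi'}$. In other words, the paper outsources both inequalities to prior work.

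Your approach is a self-contained proof of both directions. For the upper bound you build an explicit one-directional representation from the Green's-function formula for $\phi$, which is essentially what the cited univariate results do internally. Your lower bound is the more interesting departure: rather than reducing to a univariate \Rnorm and quoting the one-dimensional theorem, you restrict an arbitrary multi-directional representation to the line $\R w$ and observe that each neuron contributes $|\langle w,w'\rangle|\,\delta$ to the distributional second derivative of $\phi$, with the key inequality $|\langle w,w'\rangle|\le 1$ doing all the work. This is a clean and instructive argument that makes transparent \emph{why} no off-axis mass can lower the cost, and it avoids needing rotation invariance as a separate lemma. The trade-off is length and the measure-theoretic care you flag (Fubini for the derivative/integral interchange, and the endpoint bookkeeping for atoms at $\pm\sqrt d$), whereas the paper's citation-based proof sidesteps all of that at the expense of being non-self-contained.
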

\begin{remark}
If $\phi$ is twice differentiable, then $\rnorm{g} = \int_{-\sqrt{d}}^{\sqrt{d}}{ |\phi''(u)|\dd u } = \norm[1]{\phi''}$.
Intuitively, this $\ell_1$-norm penalty induces sparsity in the second derivative, leading to representations that use few neurons.
In contrast, minimizing the $\ell_2$-norm penalty $\norm[2]{\phi''}$ on the second derivative yields a cubic spline~\citep{kw71}.
\end{remark}
\begin{proof}
  Without loss of generality, $g$ only depends on the first coordinate $x_{1}$ due to the invariance of the \Rnorm to rotation \citep[cf.~Proposition 11 of][]{owss19}. The result then follows from Remark~4 of \citet{pn21b}.
\end{proof}

This bound on the \Rnorm for ridge functions (and univariate functions) is critical for analyses of the solutions to \eqref{interpolating-r-norm} for $d = 1$ \citep{hanin21, sess19}.
It suggests a potential approach for our high-dimensional setting: project the dataset to every one-dimensional subspace, interpolate the data with a ridge function that points in that direction, directly compute the \Rnorm of each using Theorem~\ref{thm:single-index-r-norm}, and return the ridge function with the lowest \Rnorm.
In the sequel, we examine the optimality of this approach, and find that ridge functions \textit{cannot} be optimal solutions to \eqref{interpolating-r-norm}, even when the dataset can be perfectly fit by a ridge function.

 \section{Solutions to the variational problem for parity are multi-directional}\label{sec:parity-approx}

In this section, we study the \Rnorm of neural networks that solve \ref{interpolating-r-norm} or \ref{approximate-r-norm} for
the (full) \emph{parity dataset} $\{ (x,\parity(x)) : x \in \flip^d \}$, which has size $n = 2^d$.
For simplicity, the labels are provided by the parity function $\parity$ over all $d$ variables, although the same quantitative results (up to constant factor differences) hold for any $\parity_S$ with $|S| = \Theta(d)$.

The high level message is that, despite the fact that
this dataset
can be exactly fit using ridge functions, the solutions to \eqref{interpolating-r-norm} and \eqref{approximate-r-norm} are \emph{not} ridge functions and instead must be multi-directional.

\subsection{Every ridge parity interpolant has \Rnorm $\Omega(d^{3/2})$}\label{ssec:parity-rnorm-ridge-approx-lb}

We first show that any $\epsilon$-approximate interpolant of the parity dataset \emph{that is also a ridge function} must have \Rnorm $\Omega(d^{3/2})$.
This lower-bound is established even for $\epsilon=1/2$.

\begin{restatable}{theorem}{thmridgeparitylb}\label{thm:ridge-parity-lb}
For $d\geq2$, let $\ridge_d$ be the set of functions $g \colon \domain \to \R$ such that $g(x) = \phi(w^\T x)$  for some $w \in \sph$ and Lipschitz continuous $\phi \colon [-\sqrt{d},\sqrt{d}] \to \R$.
Then
\[ \inf\setl{ \rnorm{g} : g \in \operatorname{Ridge}_d , \, \cubenorminfty{g - \chi} \leq 1/2 } \geq d^{3/2}/(2\sqrt2) . \]
\end{restatable}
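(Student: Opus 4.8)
The plan is to invoke Theorem~\ref{thm:single-index-r-norm} to identify $\rnorm{g}$ with $\tv{\phi'}$ for a feasible ridge function $g(x)=\phi(w^\T x)$, and then to show that the parity constraint forces $\phi$ to oscillate $\Omega(d)$ times inside a window of width only $O(\sqrt d)$, which makes $\tv{\phi'}$ large by a harmonic--arithmetic mean argument. First, some reductions. Replacing each $x_i$ by $\sign{w_i}\,x_i$ turns $g$ into $\phi$ applied to $(\abs{w_1},\dots,\abs{w_d})^\T x$ and multiplies $\chi$ by the fixed sign $\prod_i\sign{w_i}$; since $\tv{\phi'}=\tv{(-\phi)'}$, we may assume $w_i\ge 0$ for all $i$. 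If some $w_i=0$, then a point $x$ and the point obtained by flipping coordinate $i$ have equal projection $w^\T x$ but opposite parities, so no single value of $\phi$ can lie within $1/2$ of both $+1$ and $-1$; the feasible set is then empty and the bound holds vacuously. So assume $w_i>0$ for every $i$, and recall $\norm[1]{w}\le\sqrt d$ by Cauchy--Schwarz.

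Next, build a monotone path on the hypercube: let $x^{(0)}=(-1,\dots,-1)$ and let $x^{(j)}$ flip the first $j$ coordinates of $x^{(0)}$ to $+1$, for $j=0,1,\dots,d$. Set $z_j:=w^\T x^{(j)} = 2\sum_{i\le j}w_i-\norm[1]{w}$. Since all $w_i>0$, the $z_j$ are strictly increasing, with $z_d-z_0=2\norm[1]{w}\le 2\sqrt d$, and all lie in $[-\sqrt d,\sqrt d]$. Consecutive path points differ in exactly one coordinate, so $\chi(x^{(j)})=(-1)^{d-j}$ alternates in sign; feasibility, $\abs{g(x^{(j)})-\chi(x^{(j)})}\le 1/2$, then forces $\phi(z_j)\ge 1/2$ when $d-j$ is even and $\phi(z_j)\le -1/2$ when $d-j$ is odd, hence $\abs{\phi(z_{j+1})-\phi(z_j)}\ge 1$ for every $j\in\{0,\dots,d-1\}$.

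Finally, convert the oscillations into total variation of $\phi'$. On each $[z_j,z_{j+1}]$, since $\phi$ is Lipschitz (hence absolutely continuous) and its net change there has magnitude at least $1$, the set of points at which $\phi'$ has the sign of $\phi(z_{j+1})-\phi(z_j)$ and magnitude at least $1/(z_{j+1}-z_j)$ has positive Lebesgue measure (otherwise $\abs{\phi(z_{j+1})-\phi(z_j)}=\abs{\int_{z_j}^{z_{j+1}}\phi'}<1$, a contradiction); pick such a $\xi_j$. These signs alternate with $j$, so using the partition $\xi_0<\dots<\xi_{d-1}$ in the definition of $\tv{\phi'}$ from Theorem~\ref{thm:single-index-r-norm}, together with $\abs{a-b}=\abs a+\abs b$ for $ab\le 0$,
\begin{align*}
  \rnorm{g}=\tv{\phi'}
  &\;\ge\; \sum_{j=0}^{d-2}\bigl(\abs{\phi'(\xi_j)}+\abs{\phi'(\xi_{j+1})}\bigr)
  \;\ge\; \sum_{j=0}^{d-1}\abs{\phi'(\xi_j)}
  \;\ge\; \sum_{j=0}^{d-1}\frac{1}{z_{j+1}-z_j}\\
  &\;\ge\; \frac{d^2}{z_d-z_0}
  \;=\; \frac{d^2}{2\norm[1]{w}}
  \;\ge\; \frac{d^2}{2\sqrt d}
  \;=\; \frac{d^{3/2}}{2}
  \;\ge\; \frac{d^{3/2}}{2\sqrt 2},
\end{align*}
where the fourth inequality is the harmonic--arithmetic mean inequality for the $d$ positive numbers $z_{j+1}-z_j$.

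The main obstacle is arranging that $\phi$ is genuinely pinned down at $d+1$ \emph{distinct} points lying in a window of width only $O(\sqrt d)$: this is precisely the point where being a ridge function conflicts with computing parity, and the degenerate cases (some $w_i=0$, or coincidences among projections) must be disposed of first by observing that the feasible set then becomes empty; everything else is sign bookkeeping. The identical argument yields the corresponding $\Omega(d^{3/2})$ lower bound for every $\epsilon\in\intco{0,1/2}$ in Informal Theorem~\ref{ithm:approx}, since in that case $\abs{\phi(z_{j+1})-\phi(z_j)}\ge 2(1-\epsilon)\ge 1$.
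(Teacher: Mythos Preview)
Your proof is correct and follows essentially the same approach as the paper: reduce to $\tv{\phi'}$ via Theorem~\ref{thm:single-index-r-norm}, rule out $w_i=0$, build a monotone hypercube path whose projections are $d+1$ strictly increasing points at which $\phi$ must alternate in sign, extract a point of large derivative with the correct sign on each subinterval, and finish with the HM--AM inequality. Your mean-value step is in fact slightly sharper than the paper's Lemma~\ref{lemma:calc} (you get $|\phi'(\xi_j)|\ge 1/(z_{j+1}-z_j)$ rather than half of that, hence $d^{3/2}/2$ instead of $d^{3/2}/(2\sqrt2)$); the only point worth tightening is that since $\tv{\phi'}$ is defined as an \emph{essential} supremum, you should note---as the paper does---that the product of your positive-measure sets $A_j$ gives a positive-measure set of partitions on which your bound holds, rather than ``picking'' a single tuple $(\xi_0,\dots,\xi_{d-1})$.
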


The proof
constructs a labeled dataset of $d+1$ points, and shows that any ridge function $g(x) = \phi(w^\T x)$ that approximates that dataset must have many high-magnitude oscillations.
These oscillations imply a lower bound on $\tv{\phi'}$, which proves the claim by way of Theorem~\ref{thm:single-index-r-norm}.

\begin{proof}
Take any $g \in \ridge_d$ of the form \smash{$g(x) = \phi(w^\T x)$} for some function $\phi$ and vector $w$ satisfying the approximation constraint \smash{$\cubenorminfty{g-\parity} \leq 1/2$}.
Suppose for sake of contradiction that $w_i = 0$ for some $i \in [d]$.
Then, there exists a pair of points $x, x' \in \{-1, 1\}^d$ that are identical except in the $i$-th positions, $x_i$ and $x'_i$.
Thus, $\parity(x) = -\parity(x')$, but $w^\T x = w^\T x'$ and hence $g(x) = g(x')$; this contradicts the approximation constraint.
So, we may henceforth assume that $w_i \neq 0$ for all $i\in [d]$.

For each $i \in \{0,1,\dotsc,d\}$, define 
\[
    x^{(i)} := (\signl{w_1}, \dots, \signl{w_i}, -\signl{w_{i+1}}, \dots, -\signl{w_d}) .
\]
Because the parity of $x^{(i)}$ alternates with $i$, i.e., $\parity(x^{(i)}) \neq \parity(x^{(i+1)})$, $\signl{g(x^{(i)})}$ also alternates because $g$ satisfies the approximation constraint.
Furthermore, again due to the approximation constraint, we have $\absl{g(x^{(i)}) - g(x^{(i+1)})} \geq 1$.
We claim that, because $\phi$ interpolates $d+1$ well-separated data points $(w^\T x^{(i)}, \phi(w^\T x^{(i)}))$ that satisfy $w^\T x^{(i)} < w^\T x^{(i+1)}$ for all $i \in \{0,1,\dotsc,d-1\}$,
there must be a large cost for representing $\phi$ using a neural network.
By Theorem~\ref{thm:single-index-r-norm}, it suffices to obtain a lower bound on $\tv{\phi'}$, since this will imply a lower bound on $\rnorm{g}$.

By Lemma~\ref{lemma:calc} (in Appendix~\ref{assec:parity-rnorm-ridge-approx-lb}; essentially the mean value theorem), for every $i \in \{0,1,\dotsc,d-1\}$, there exists $A_i \subseteq [w^\T x^{(i)}, w^\T x^{(i+1)}]$ with Lebesgue measure $\Leb(A_i)>0$ such that, for every $z^{(i)} \in  A_i$, we have
\[ \absl{\phi'(z^{(i)})} 
\geq \frac12 \cdot \frac{\absl{\phi(w^\T x^{(i+1)}) - \phi(w^\T x^{(i)})}}{w^\T x^{(i+1)} - w^\T x^{(i)}},\]
and $\smash{\sign{\phi'(z^{(i)})} = \sign{\phi(w^\T x^{(i+1)}) - \phi(w^\T x^{(i)})}}$.
The fact that the signs of $\phi(w^\T x^{(i)})$ alternate with $i$ implies that the signs of  $\phi'(z^{(i)})$ also alternate with $i$.
We now lower-bound the total variation of $\phi'$ using the fact that $\prod_{i=1}^{d}\Leb(A_i) > 0$ and taking advantage of the alternating signs:
\begin{align*}
2\tv{\phi'} &= 2 \esssup_{- \sqrt{d} \leq t_0 < t_1 < \dotsb < t_r \leq \sqrt{d}; \, r \in \mathbb{N}} \sum_{i=1}^{r} \abs{\phi'(t_{i}) - \phi'(t_{i-1})} \\
&\geq 2\sum_{i=1}^{d-1} \absl{\phi'(z^{(i)}) - \phi'(z^{(i-1)})}
= 2\sum_{i=1}^{d-1}\parenl{ \absl{\phi'(z^{(i)})} +  \absl{\phi'(z^{(i-1)})}}
\geq 2 \sum_{i=0}^{d-1} \absl{\phi'(z^{(i)})}\\
&\geq \sum_{i=0}^{d-1} \frac{\absl{\phi(w^\T x^{(i+1)}) - \phi(w^\T x^{(i)})}}{w^\T x^{(i+1)} - w^\T x^{(i)}} 
\geq \sum_{i=0}^{d-1}  \frac{1}{w^\T x^{(i+1)} - w^\T x^{(i)}} \\
&\ge  \frac{d^2}{\sum_{i=0}^{d-1} w^\T x^{(i+1)} - w^\T x^{(i)}}
= \frac{ d^2}{ w^\T x^{(d)} - w^\T x^{(0)}} 
\ge \frac{d^2}{\norml[2]{w}\norml[2]{x^{(d)}-x^{(0)}}}
= \frac{d^{3/2}}{\sqrt{2}} .
\end{align*}
The second-to-last inequality is a consequence of Cauchy-Schwarz: for any $a_1, \dots, a_d > 0$, $d^2 = \smash{(\sum_i \sqrt{a_i} / \sqrt{a_i})^2 \leq (\sum_i a_i) (\sum_i 1/a_i)}$.
Therefore, $\rnorm{g} = \tv{\phi'} \geq d^{3/2}/(2\sqrt2)$.
\end{proof}

The lower-bound in Theorem~\ref{thm:ridge-parity-lb} is tight up to constants, because the sawtooth function
$s_{\vec{1},d}$ satisfies the constraints of \eqref{interpolating-r-norm} and has $\rnorm{s_{\vec{1},d}} = O(d^{3/2})$.
 \subsection{Existence of a multi-directional parity interpolant with \Rnorm $O(d)$}\label{ssec:parity-rnorm-ub}

We now show that the $\Omega(d^{3/2})$ \Rnorm lower-bound from Theorem~\ref{thm:ridge-parity-lb} for ridge functions can be avoided by neural networks that are not ridge functions.
The main idea is to employ an \emph{averaging strategy} that combines a collection of distinct ridge functions, each of which perfectly fits a small fraction of the parity dataset---those on the ``equator'' relative to the ridge direction---while ignoring the ``outliers'' in that direction.
Because all points on the cube are ``outliers'' for some directions and on the ``equator'' for others, this strategy ultimately ensures that every example is perfectly fit.

\begin{theorem}\label{thm:parity-rnorm-ub}
For any even\footnote{Our results also hold for odd $d$, but the proofs are more tedious.} $d$, there exists a neural network $g: \domain \to \R$ having $g(x) = \parity(x)$ for all $x \in \flip^d$ such that $\rnorm{g} \leq O(d)$. 
\end{theorem}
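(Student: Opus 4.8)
The plan is to construct $g$ explicitly as a \emph{uniform average over all $2^d$ directions $w\in\flip^d$ of sawtooth ridge functions of one common width $t=\Theta(\sqrt d)$}, and then to bound $\rnorm g$ via the triangle inequality for the semi-norm together with the per-sawtooth estimate $\rnorm{s_{w,t}}=O(\sqrt d\,(t+1))$. Assuming $d$ is large enough that there is an even integer $t$ with $2\sqrt d\le t\le d$ (the remaining $d$ are $O(1)$ and the bound is then trivial), fix such a $t$ — note $t\equiv d\pmod 2$ — set $N_t:=\sum_{k\,:\,|d-2k|\le t}\binom dk$, and define
\[
  g \;:=\; \frac1{N_t}\sum_{w\in\flip^d} s_{w,t}.
\]
First I would verify exact interpolation. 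Using $s_{w,t}(x)=\parity(x)\,\indicator{|w^\T x|\le t}$ for $x\in\flip^d$ (from Section~\ref{ssec:notation}), we get $g(x)=\frac{\parity(x)}{N_t}\sum_{w}\indicator{|w^\T x|\le t}$. For any fixed $x\in\flip^d$, the number of $w\in\flip^d$ with $w^\T x=d-2k$ equals $\binom dk$, since it depends only on the Hamming distance between $w$ and $x$; hence $\sum_{w}\indicator{|w^\T x|\le t}=N_t$ for \emph{every} $x$, and so $g(x)=\parity(x)$ on all of $\flip^d$. As each $s_{w,t}$ is a finite-width ReLU network, so is their finite average $g$.

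Next I would bound $\rnorm g$. Each $s_{w,t}$ is realized (Section~\ref{ssec:notation}) by a ReLU network of width $O(t+1)$ with top-layer weights of magnitude $O(\sqrt d)$, so Lemma~\ref{lemma:rnorm-relu-l1}(i) gives $\rnorm{s_{w,t}}\le O(\sqrt d\,(t+1))$. (Equivalently, $s_{w,t}=\pm\,\psi(v^\T x)$ with $v=w/\sqrt d\in\sph$ and $\psi$ piecewise affine with $O(t)$ breakpoints and slopes of magnitude $O(\sqrt d)$, so $\tv{\psi'}=O(\sqrt d\,t)$ and Theorem~\ref{thm:single-index-r-norm} gives the same bound.) By the triangle inequality for $\rnorm{\cdot}$,
\[
  \rnorm g\;\le\;\frac1{N_t}\sum_{w\in\flip^d}\rnorm{s_{w,t}}\;\le\;\frac{2^d}{N_t}\cdot O(\sqrt d\,(t+1)).
\]
It remains to show $N_t=\Omega(2^d)$. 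With $S_d\sim\mathrm{Bin}(d,1/2)$ we have $N_t=2^d\,\pr{|d-2S_d|\le t}\ge 2^d\,\pr{|S_d-d/2|\le\sqrt d}$ (using $t\ge2\sqrt d$), and Chebyshev's inequality gives $\pr{|S_d-d/2|>\sqrt d}\le\Var{S_d}/d=1/4$, hence $N_t\ge\tfrac34 2^d$. Combining, $\rnorm g\le\tfrac43\,O(\sqrt d\,(t+1))=O(d)$ since $t=\Theta(\sqrt d)$.

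The conceptual heart — and the step I expect to be the main obstacle — is the choice of $t$. The per-direction cost $\rnorm{s_{w,t}}$ grows like $\sqrt d\,t$, pushing $t$ down, while the normalization $2^d/N_t$ remains $O(1)$ only when a window of radius $\Theta(t)$ about $d/2$ carries a constant fraction of the binomial mass, pushing $t$ up to $\Omega(\sqrt d)$. The balance $t=\Theta(\sqrt d)$ yields the $O(d)$ bound, and it is exactly here that the gap with the ridge lower bound $\Omega(d^{3/2})$ of Theorem~\ref{thm:ridge-parity-lb} originates: a single ridge direction must correctly label \emph{all} $2^d$ points, including those with $w^\T x$ of magnitude $\Theta(d)$, and so needs width $\Theta(d)$; averaging instead lets every direction ignore its outliers and fit only its $\Omega(1)$-fraction of ``equatorial'' points with width $\Theta(\sqrt d)$. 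The only genuinely technical ingredient is the binomial anti-concentration, but Chebyshev suffices, so there is no substantive obstacle beyond bookkeeping.
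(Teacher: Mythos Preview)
Your proof is correct and uses the same averaging-over-directions idea as the paper, but with a different choice of sawtooth width. The paper takes the minimal width $t=0$: then $s_{w,0}(x)=\parity(x)\,\indicator{w^\T x=0}$, the normalization is exactly $q=\binom{d}{d/2}/2^d=\Theta(1/\sqrt d)$, and the bound follows from $\rnorm{g}\le q^{-1}\cdot O(\sqrt d)=O(d)$ with no concentration argument needed---only the standard estimate on the central binomial coefficient. You instead take $t=\Theta(\sqrt d)$, which makes each sawtooth cost $O(\sqrt d\cdot t)=O(d)$ but keeps the normalization $2^d/N_t=O(1)$ via Chebyshev. Both land at $O(d)$ because the product (normalization)$\times$(per-sawtooth cost) is invariant across $t\in\{0,\dots,\Theta(\sqrt d)\}$; the paper's $t=0$ choice is the cleaner endpoint since it avoids the anti-concentration step entirely. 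One small presentational point: you write ``fix such a $t$'' with $2\sqrt d\le t\le d$ but only invoke $t=\Theta(\sqrt d)$ at the very end; it would be clearer to fix, say, $t=2\lceil\sqrt d\rceil$ (adjusted for parity) up front.
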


\begin{proof}
Recall that the sawtooth function $s_{w, 0}: \domain \to \R$ satisfies \smash{$s_{w, 0}(x) = \parity(x) \indicator{w^\T x = 0}$} for all \smash{$x \in \flip^d$}. 
By construction, $s_{w,0}$ is a ridge function that is a single ``bump'' around zero in the direction of $w$,
and $\rnorml{s_{w, 0}} \leq \smash{O(\sqrt{d})}$.
Consider \smash{$\bw \sim \unif(\flip^d)$}. 
By symmetry, $\pr{\bw^\T x = 0} = \pr{\bw^\T x' = 0}$ for all $x, x' \in \flip^d$, so 
\[
  \EE{s_{\bw, 0}(x)} = \parity(x) \cdot \pr{\bw^\T x = 0} = \parity(x) \cdot 2^{-d} \cdot \absl{\{v \in \flip^d : v^\T x = 0\}} = \parity(x) \cdot q ,
\]  where \smash{$q := \binom{d}{d/2} / 2^d = \Theta(1/\sqrt{d})$}.
Define \smash{$g(x) := \frac{1}{q 2^d} \sum_{w \in \flip^d} s_{w, 0}(x)$}. 
Then $g(x) = \frac{1}{q} \EE{s_{\bw, 0}(x)} =  \parity(x)$ for each $x \in \flip^d$, i.e., $g$ interpolates the parity dataset.
Finally, we bound the \Rnorm: \[\rnorm{g} \leq \frac1{q2^d} \sum_{w \in \flip^d} \rnorm{s_{w,0}} \leq \frac1q \cdot O(\sqrt{d}) \leq O(d).\qedhere\]
\end{proof}

While Theorem~\ref{thm:parity-rnorm-ub} successfully exhibits a neural network $g$ that perfectly fits the parity dataset with $\rnorm{g} = O(d)$, the width of $g$ is $\Omega(2^d)$.
We next show that by allowing non-zero $\Lpu{\infty}$ error in the approximation, we can achieve a construction with both $O(d)$ \Rnorm and $\poly(d)$ width.

\begin{restatable}{theorem}{thmparityrnormapproxub}
\label{thm:parity-rnorm-approx-ub}
There is a universal constant $c>0$ such that the following holds.
For any even $d$, any $\epsilon \in (0,1)$, and any even $t \in \setl{0, 2,\dotsc,d}$,
there exists a function $g \colon \domain \to \R$ that can be represented by a width-$m$ neural network such that $\cubenorminfty{g - \parity} \leq \epsilon$, where
\begin{align*}
  m & \leq O(d^{3/2} \sqrt{\log(1/\epsilon)}/\epsilon^2) & \text{and} \quad \rnorm{g} & \leq O(d \log(1/\epsilon)) && \text{if $t \leq c\sqrt{d \log(1/\epsilon)}$} ; \\
  m & \leq O(d^2/(\epsilon t)) & \text{and} \quad \rnorm{g} & \leq O(t\sqrt{d}) && \text{otherwise} .
\end{align*}
Moreover, $g$ can be expressed as a linear combination of width-$t$ sawtooth functions.\end{restatable}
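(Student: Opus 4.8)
The plan is to build $g$ as a suitably normalized average of $N$ width-$t$ sawtooth functions $s_{\bw_1,t},\dots,s_{\bw_N,t}$ whose directions $\bw_1,\dots,\bw_N$ are drawn i.i.d.\ uniformly from $\flip^d$ --- the sample-based analogue of the exact averaging argument proving Theorem~\ref{thm:parity-rnorm-ub}. Two facts drive this: on $\flip^d$ we have $s_{w,t}(x)=\parity(x)\indicator{|w^\T x|\le t}$; and for fixed $x\in\flip^d$ the variable $\bw^\T x$ is distributed as a sum of $d$ i.i.d.\ Rademacher variables \emph{regardless} of $x$, so $p_t:=\pr{|\bw^\T x|\le t}$ is independent of $x$ and $\expectation[\bw]{s_{\bw,t}(x)}=p_t\,\parity(x)$. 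Setting $g:=\frac{1}{p_tN}\sum_{j=1}^N s_{\bw_j,t}$ gives $g(x)-\parity(x)=\frac{\parity(x)}{p_t}\bigl(\hat p_N(x)-p_t\bigr)$, where $\hat p_N(x):=\frac1N\sum_{j=1}^N\indicator{|\bw_j^\T x|\le t}$ is an average of $N$ i.i.d.\ $\mathrm{Bernoulli}(p_t)$ variables. Hence $\cubenorminfty{g-\parity}\le\epsilon$ is equivalent to $|\hat p_N(x)-p_t|\le p_t\epsilon$ holding at each of the $2^d$ points $x\in\flip^d$, and the task reduces to showing that for a suitable $N$ this holds simultaneously over $\flip^d$ with probability, say, $\ge 1/2$ (a Chernoff bound plus a union bound); the probabilistic method then yields the desired $g$. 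For any such $g$: it is visibly a linear combination of width-$t$ sawtooths; its width is $m=O(N(t+1))$ since each $s_{\bw_j,t}$ is a width-$O(t+1)$ ReLU network; and by the triangle inequality together with $\rnorm{s_{w,t}}\le O((t+1)\sqrt d)$ (Theorem~\ref{thm:single-index-r-norm}, or Lemma~\ref{lemma:rnorm-relu-l1} applied to the sawtooth's network), $\rnorm{g}\le\frac{1}{p_tN}\sum_j\rnorm{s_{\bw_j,t}}\le O\bigl((t+1)\sqrt d/p_t\bigr)$.

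It remains to estimate $p_t$ and to determine how large $N$ must be. A Stirling estimate of central binomial coefficients gives $p_t=\Theta(\min\{1,(t+1)/\sqrt d\})$, and Hoeffding's inequality gives $1-p_t\le 2\exp(-t^2/(2d))$. The threshold $t\approx c\sqrt{d\log(1/\epsilon)}$ in the statement is precisely where $t^2/(2d)$ overtakes $\log(1/\epsilon)$. In the regime $t\le c\sqrt{d\log(1/\epsilon)}$ the $\mathrm{Bernoulli}(p_t)$'s are not sharply concentrated near $1$, so we simply use the two-sided multiplicative Chernoff bound $\pr{|\hat p_N(x)-p_t|>p_t\epsilon}\le 2\exp(-Np_t\epsilon^2/3)$; forcing this below $2^{-d-1}$ requires $N=O(d/(p_t\epsilon^2))$. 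In the regime $t>c\sqrt{d\log(1/\epsilon)}$, with $c$ a large enough universal constant (which makes $1-p_t\le\tfrac12 p_t\epsilon$, rendering the lower deviation $\hat p_N(x)<p_t-p_t\epsilon$ vacuous), we instead control the rare event that an average of $N$ i.i.d.\ $\mathrm{Bernoulli}(1-p_t)$ variables exceeds $(1-p_t)+p_t\epsilon$, via the relative-entropy form of the Chernoff bound; the resulting exponent is $\Omega\bigl(\epsilon\log\tfrac{p_t\epsilon}{1-p_t}\bigr)=\Omega(\epsilon t^2/d)$, using $\log\tfrac{p_t\epsilon}{1-p_t}\ge\tfrac{t^2}{2d}-\log\tfrac4\epsilon\ge\tfrac{t^2}{4d}$ once $c$ is large. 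Forcing $\exp(-N\cdot\Omega(\epsilon t^2/d))\le 2^{-d-1}$ then requires only $N=O(d^2/(\epsilon t^2))$.

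Substituting these choices of $N$ into $m=O(N(t+1))$ and $\rnorm{g}=O((t+1)\sqrt d/p_t)$ and simplifying with the $p_t$ estimate yields the four claimed bounds: in the small-$t$ case $m=O(d(t+1)/(p_t\epsilon^2))$, which is $O(d^{3/2}/\epsilon^2)$ when $t+1\le\sqrt d$ and $O(d(t+1)/\epsilon^2)=O(d^{3/2}\sqrt{\log(1/\epsilon)}/\epsilon^2)$ when $\sqrt d<t+1=O(\sqrt{d\log(1/\epsilon)})$, while $\rnorm{g}=O((t+1)\sqrt d/p_t)$ is $O(d)$ or $O(d\sqrt{\log(1/\epsilon)})$ respectively, hence $O(d\log(1/\epsilon))$; in the large-$t$ case $p_t\ge\tfrac12$, so $m=O(d^2(t+1)/(\epsilon t^2))=O(d^2/(\epsilon t))$ and $\rnorm{g}=O(t\sqrt d)$. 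The one step needing genuine care is the large-$t$ concentration: a variance-based (Bernstein) bound would only give $N=O(d/\epsilon)$ and hence the weaker width $O(dt/\epsilon)$, so one must exploit the extreme bias of the indicators through the relative-entropy Chernoff bound to extract the $\epsilon t^2/d$ exponent, and must choose $c$ large enough that the lower-order $-\log(4/\epsilon)$ and $-O(1)$ terms in that exponent (and the lower tail) are absorbed. Everything else --- the $p_t$ asymptotics, the union bound over $\flip^d$, and collecting $O(\cdot)$'s --- is routine.
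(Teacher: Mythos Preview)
Your proposal is correct and follows the same construction and overall strategy as the paper: take $g$ to be a normalized average of $N$ random width-$t$ sawtooths, reduce $\cubenorminfty{g-\parity}\le\epsilon$ to a uniform Bernoulli concentration statement, and close with a union bound over $\flip^d$. Your estimate $p_t=\Theta(\min\{1,(t+1)/\sqrt d\})$ is slightly cleaner than the paper's (which works instead with the weaker bound $q\ge(t+1)/(4\sqrt{d\ln(4/\epsilon)})$ obtained by comparing to the threshold $T$), but the small-$t$ case is otherwise the same.

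The one substantive difference is the large-$t$ concentration step. The paper applies Bernstein's inequality there and asserts that $k\ge Cd^2/(\epsilon t^2)$ suffices, via the implication $d^2/(\epsilon t^2)\ge d/(2\epsilon)+de^{-2t^2/d}/(2\epsilon^2)$. That implication fails throughout the large-$t$ regime (the first summand already requires $t\le\sqrt{2d}$, contrary to $t\ge T$). As you correctly observe, Bernstein only delivers $N=O(d/\epsilon)$ and hence width $O(dt/\epsilon)$, which does not meet the stated $O(d^2/(\epsilon t))$. Your use of the relative-entropy Chernoff bound, exploiting that the complementary indicators are $\mathrm{Bernoulli}(1-p_t)$ with $1-p_t\le 2e^{-t^2/(2d)}\ll\epsilon$, extracts the exponent $\Omega(\epsilon t^2/d)$ and recovers the claimed $N=O(d^2/(\epsilon t^2))$. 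So your argument actually repairs a gap in the paper's Case~2.
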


\begin{remark}
Suppose $\epsilon$ is a constant.
Using $t = \Theta(d)$, we obtain a neural network of width $m = O(d)$ and $\rnorm{g} = O(d^{3/2})$, matching the properties of
the sawtooth (ridge function) interpolant $s_{w, d}$.
Using $t = \Theta(1)$, we obtain a neural network of width $m = O(d^{3/2})$ and $\rnorm{g} = O(d)$, matching the properties of
the interpolant from Theorem~\ref{thm:parity-rnorm-ub} but with almost exponentially smaller width.
\end{remark}

A more detailed version of Theorem~\ref{thm:parity-rnorm-approx-ub} (which also specifies the \emph{intrinsic dimensionality} of $g$) is stated and proved in Appendix~\ref{assec:parity-rnorm-ub}.
The proof uses a similar
technique as that of Theorem~\ref{thm:parity-rnorm-ub}, but
instead averages randomly sampled sawtooth functions \smash{$s_{\bw^{(1)}, t}, \dots, s_{\bw^{(k)}, t}$} for \smash{$\bw^{(j)} \sim \unif(\flip^d)$} of width $t$.
We show that for sufficiently large $k$, every $x \in \flip^d$ lies the in the ``active'' region of about the same number of sawtooth functions; this yields a good approximation of $\parity(x)$ for all $x$.
 \subsection{Every parity interpolant has \Rnorm $\Omega(d)$}\label{ssec:parity-rnorm-approx-lb}

Finally, we show that \Rnorm upper-bounds from Theorems~\ref{thm:parity-rnorm-ub} and~\ref{thm:parity-rnorm-approx-ub} are tight.
That is, we show that every solution to \eqref{approximate-r-norm} for the parity dataset has \Rnorm $\Omega(d)$, even for constant $\epsilon$.
This is implied by the following stronger result, which requires only $\Lpu{2}$ approximation, as opposed to $\Lpu{\infty}$.

\begin{restatable}{theorem}{thmparityrnormlb}\label{thm:parity-rnorm-lb}
For any $d \geq 8$ and $\alpha \in (0, 1)$, 
$\inf\setl{\rnorm{g}: \cubenormtwo{g - \parity} \leq 1- \alpha} \geq \alpha d/8$.
\end{restatable}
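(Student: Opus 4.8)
The plan is to convert the $L^2$ approximation hypothesis into a lower bound on the correlation $\cubeinnerprod{g, \parity}$, and then play it against an upper bound on how much a single ReLU ridge atom can correlate with $\parity$. We may assume $\rnorm{g} < \infty$ (otherwise there is nothing to prove).

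First I would record the consequence of the $L^2$ constraint: expanding $\cubenormtwo{g - \parity}^2 = \cubenormtwo{g}^2 - 2\cubeinnerprod{g, \parity} + 1 \le (1 - \alpha)^2$ and discarding $\cubenormtwo{g}^2 \ge 0$ gives $\cubeinnerprod{g, \parity} \ge \alpha - \alpha^2/2 > \alpha/2$. Next, take a representation $g = g_\mu + v^\T x + c$ attaining the infimum defining $\rnorm{g}$, so $\mu \in \measures$ and $\abs{\mu} = \rnorm{g}$. Since $d \ge 2$, every affine function is orthogonal to $\parity$ in $L^2(\nu)$, so $\cubeinnerprod{g, \parity} = \cubeinnerprod{g_\mu, \parity} = \int \cubeinnerprod{\relu(w^\T \cdot + b), \parity}\, \mu(\dd w, \dd b)$. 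Bounding this integral by $\abs{\mu}$ times the supremum of the integrand reduces the theorem to the claim
\[
  M_d := \sup_{w \in \sph,\ b \in [-\sqrt d, \sqrt d]} \abs{\cubeinnerprod{\relu(w^\T \cdot + b), \parity}} \ \le\ \frac{4}{d} \qquad (d \ge 8),
\]
because then $\alpha/2 < \rnorm{g}\cdot M_d \le 4\rnorm{g}/d$.

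For the key estimate on $M_d$, fix $w$ (with all coordinates nonzero, else the correlation vanishes) and $b$. Writing $\relu(z) = (z + \abs{z})/2$ and using orthogonality of the linear part to $\parity$, I get $\cubeinnerprod{\relu(w^\T \cdot + b), \parity} = \tfrac12 \expectation[x \sim \nu]{\abs{w^\T x + b}\, \parity(x)}$. I would then invoke the identity $\abs{t} = \tfrac{2}{\pi} \int_0^\infty \xi^{-2}(1 - \cos(t\xi))\, \dd \xi$ and exchange the order of integration; since $\expectation[x \sim \nu]{\parity(x)} = 0$ and $\expectation[x \sim \nu]{e^{i\xi w^\T x}\parity(x)} = \prod_j \expectation[x_j]{e^{i\xi w_j x_j} x_j} = i^d \prod_j \sin(\xi w_j)$, the cosine term contributes only $\pm\cos(b\xi)$ or $\pm\sin(b\xi)$ times $\prod_j \sin(\xi w_j)$, hence $\abs{\expectation[x \sim \nu]{\abs{w^\T x + b}\, \parity(x)}} \le \tfrac{2}{\pi}\int_0^\infty \xi^{-2} \prod_{j=1}^d \abs{\sin(\xi w_j)}\, \dd \xi$. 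The crucial step is AM--GM, $\prod_j \abs{\sin(\xi w_j)} \le \tfrac1d \sum_j \abs{\sin(\xi w_j)}^d$, which after the substitution $v = \xi w_j$ in each summand and the bound $\norml[1]{w} \le \sqrt d$ gives $\int_0^\infty \xi^{-2}\prod_j\abs{\sin(\xi w_j)}\,\dd \xi \le d^{-1/2}\int_0^\infty v^{-2}\abs{\sin v}^d\,\dd v$. Finally I would estimate the one-dimensional integral by splitting at $v = 1$ (using $\abs{\sin v}^d \le v^d$ near $0$) and the Wallis bound $\int_0^\pi \sin^d v\, \dd v \le \sqrt{2\pi/d}$, itself following from $\int_0^\pi \sin^d v\, \dd v \cdot \int_0^\pi \sin^{d-1} v\, \dd v = 2\pi/d$ and monotonicity in $d$; this yields $\int_0^\infty v^{-2}\abs{\sin v}^d\, \dd v = O(d^{-1/2}) \le 4 d^{-1/2}$ for $d \ge 8$. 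Chaining the displays gives $M_d \le \tfrac12 \cdot \tfrac2\pi \cdot d^{-1/2}\cdot 4 d^{-1/2} = 4/(\pi d) \le 4/d$, with room to spare.

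The main obstacle is the uniform control of $\prod_j \abs{\sin(\xi w_j)}$ over all unit directions $w$: the trivial bound $\prod_j \abs{\sin(\xi w_j)} \le 1$ on the tail $\xi \gtrsim \sqrt d$ would only give $M_d = O(d^{-1/2})$, hence the far weaker $\rnorm{g} = \Omega(\sqrt d)$; likewise, peeling off one discrete derivative $\partial_i$ at a time costs a factor $w_i$ and again loses a $\sqrt d$. The AM--GM step is exactly what collapses the $d$-fold product of sines into a single $d$-th power, recovering the missing $\sqrt d$ and matching the $O(d)$ upper bounds of Theorems~\ref{thm:parity-rnorm-ub} and~\ref{thm:parity-rnorm-approx-ub}.
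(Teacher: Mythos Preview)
Your proposal is correct and yields exactly the claimed bound $\rnorm{g} \geq \alpha d/8$, but the route to the single-neuron correlation estimate is genuinely different from the paper's.

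Both proofs share the outer skeleton: turn the $L^2$ constraint into a lower bound on $\cubeinnerprod{g,\parity}$ (the paper gets $\geq \alpha$ via Cauchy--Schwarz, you get $>\alpha/2$ by expanding the square), use an optimal representation $g = g_\mu + v^\T x + c$, kill the affine part by orthogonality, and reduce to $\sup_{w,b}\abs{\cubeinnerprod{\relu(w^\T\cdot+b),\parity}} = O(1/d)$. The divergence is in how this last bound is proved. The paper's Lemma~\ref{lemma:relu-parity-correlation} is combinatorial: it rewrites the correlation as an average of discrete second differences $r_{w,b}(x) - r_{w,b}(x^j) - r_{w,b}(x^{j'}) + r_{w,b}(x^{j,j'})$, observes this vanishes unless the ``square'' $\{x,x^j,x^{j'},x^{j,j'}\}$ is cut by the ReLU's hyperplane, bounds the number of cut edges by O'Neil's edge-isoperimetric inequality, and finally picks coordinates $j,j'$ that simultaneously have small $|w_j|$ and few cut edges. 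Your argument is instead Fourier-analytic: you use $\relu(t) = (t+|t|)/2$ together with $|t|=\tfrac{2}{\pi}\int_0^\infty\xi^{-2}(1-\cos(t\xi))\,\dd\xi$ to reduce to bounding $\int_0^\infty \xi^{-2}\prod_j|\sin(\xi w_j)|\,\dd\xi$, and the AM--GM step $\prod_j a_j \le \tfrac1d\sum_j a_j^d$ followed by Wallis does the work. Your route gives a slightly sharper constant ($4/(\pi d)$ versus $8/d$), is more systematic, and would adapt readily to other activations or characters whose Fourier structure is explicit; the paper's combinatorial argument is more elementary (no oscillatory integrals) and makes tightness transparent---its remark exhibits a neuron with correlation $\Omega(1/d)$ directly.
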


The core of the proof of Theorem~\ref{thm:parity-rnorm-lb} (given in Appendix~\ref{assec:parity-rnorm-approx-lb}) is an upper-bound on the correlation of any fixed ReLU neuron with the parity function $\parity$.

We note that a result analogous to Theorem~\ref{thm:parity-rnorm-lb} also holds for most \emph{sampled parity datasets} (defined in Section~\ref{sec:parity-gen}).
This result is stated and proved in Appendix~\ref{asec:parity-approx-sampled}.

  \section{Generalization properties of solutions to the variational problem}\label{sec:parity-gen}

In this section, we consider the generalization properties of a learning algorithm that returns a solution to \eqref{interpolating-r-norm} for a \emph{sampled parity dataset} $\{ (\bx_i,\parity(\bx_i)) : i \in [n] \}$ for $\bx_1,\dotsc,\bx_n \simiid \nu$.
(Again, for simplicity, we label data using $\parity$, but the same results hold for any $\parity_S$ with $|S| = \Theta(d)$.)

We show that $n = o(d^2/\sqrt{\log d})$ results in a predictor with nearly trivial accuracy.
Note that information-theoretically, $n \geq O(d)$ is sufficient for learning any parity function~\citep{helmbold1992learning,fischer1992learning}.
This means that the inductive bias based on \Rnorm is not sufficient to achieve statistically optimal sample complexity for learning parity functions.

\subsection{Poor generalization with $n \ll d^2/\sqrt{\log d}$ samples}
\label{ssec:parity-gen-lb}

We first give a lower bound on the sample size needed for non-trivial generalization for learning parity functions by solving \eqref{interpolating-r-norm} with the sampled parity dataset.

\begin{theorem}
  \label{thm:parity-gen-lb}
  If $n = o(d^2/\sqrt{\log d})$, then with probability at least $1/2$, every solution $\bg \colon \domain \to \R$ to \eqref{interpolating-r-norm} for the sampled parity dataset has $\cubenormtwo{\bg - \parity} \geq 1 - o(1)$.
\end{theorem}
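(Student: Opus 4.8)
The plan is to establish that a small sample forces every $\mathcal{R}$-norm minimizing interpolant to have small $\mathcal{R}$-norm, and then to argue that functions of small $\mathcal{R}$-norm cannot correlate well with $\chi$ over the \emph{full} cube, so the out-of-sample error is nearly trivial. The first ingredient comes from the upper-bound constructions: Theorem~\ref{thm:parity-rnorm-approx-ub}, or rather its sampled-dataset analogue, provides an interpolant of any $n$-point subset of $\flip^d$ whose $\mathcal{R}$-norm is controlled. Concretely, the averaging construction with width-$t$ sawtooth functions interpolates a fixed dataset of size $n$ as soon as $k = \tilde O(n / \text{(active fraction)})$ sawtooths are averaged, and the active fraction of a width-$t$ sawtooth is $\Theta(t/\sqrt d)$; choosing $t$ to balance, the resulting $\mathcal{R}$-norm scales like $\tilde O(\sqrt{n})$ up to $\polylog$ factors (more carefully, one should track the $\sqrt{\log d}$ factors so that $n = o(d^2/\sqrt{\log d})$ yields $\mathcal{R}$-norm $o(d)$). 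Thus with probability at least $1/2$ over the sample, the optimal value of \eqref{interpolating-r-norm} for the sampled parity dataset is $o(d)$, and hence \emph{every} solution $\bg$ has $\rnorm{\bg} = o(d)$.

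The second ingredient is the contrapositive of Theorem~\ref{thm:parity-rnorm-lb}: if $\cubenormtwo{g - \chi} \leq 1 - \alpha$ then $\rnorm{g} \geq \alpha d / 8$. Applying this with $g = \bg$ and $\rnorm{\bg} = o(d)$ forces $\alpha = o(1)$, i.e.\ $\cubenormtwo{\bg - \chi} \geq 1 - o(1)$, which is exactly the conclusion. The logical skeleton is therefore: (1) bound the optimal value of the sampled variational problem from above by $o(d)$ with probability $\geq 1/2$; (2) conclude every minimizer has $\mathcal{R}$-norm $o(d)$; (3) invoke Theorem~\ref{thm:parity-rnorm-lb} (full-cube version) to translate small $\mathcal{R}$-norm into near-trivial $L^2(\nu)$ error.

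The main obstacle is step (1): adapting the averaging upper bound from the \emph{full} parity dataset (Theorem~\ref{thm:parity-rnorm-approx-ub}, exact interpolation of all $2^d$ points) to an \emph{arbitrary $n$-point} dataset where we now need \emph{exact} interpolation (since \eqref{interpolating-r-norm} has equality constraints, not $\epsilon$-approximate ones). Averaging sawtooths gives an approximate fit; to get exact interpolation one can either (a) add a low-$\mathcal{R}$-norm correction term that fixes the residuals on the $n$ sample points — e.g.\ a further sparse combination of neurons whose total weight is controlled by the residual magnitude, which is $o(1)$ per point when $k$ is large enough — or (b) appeal directly to the sampled-dataset analogues already promised in the paper (the remark after Theorem~\ref{thm:parity-rnorm-lb} and Appendix~\ref{assec:parity-rnorm-ub}/Appendix~\ref{asec:parity-approx-sampled}), which presumably carry out exactly this analysis. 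One must also verify the concentration claim that $k = \tilde O(\sqrt d \cdot n / t)$ sawtooths suffice so that every one of the $n$ points lies in roughly the expected number of active regions; this is a standard Chernoff-plus-union-bound argument over $n$ points, and the resulting failure probability can be driven below $1/2$ by a constant-factor increase in $k$. Tracking the $\sqrt{\log d}$ factor carefully through the $t$-vs-$k$ tradeoff is what produces the precise threshold $n = o(d^2/\sqrt{\log d})$.
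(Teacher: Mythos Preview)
Your two-step skeleton --- (i) show the optimal value of \eqref{interpolating-r-norm} on the sampled dataset is $o(d)$ with probability $\geq 1/2$, then (ii) invoke the contrapositive of Theorem~\ref{thm:parity-rnorm-lb} --- is exactly the paper's argument, and step (ii) is correct as you state it.

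The gap is in step (i). The sawtooth-averaging construction cannot give $\mathcal{R}$-norm $o(d)$, no matter how you tune $k$ and $t$: the $\mathcal{R}$-norm of $\frac{1}{kq}\sum_{j=1}^k s_{\bw^{(j)},t}$ is at most $\frac{1}{kq}\cdot k \cdot O(t\sqrt d) = O(t\sqrt d / q)$, and since the active fraction is $q = \Theta(t/\sqrt d)$, this is $\Theta(d)$ independently of $k$, $t$, and $n$. Your claimed $\tilde O(\sqrt n)$ scaling does not arise from this construction, and the $\Theta(d)$ bound is in fact tight by Theorem~\ref{thm:parity-rnorm-lb} itself, since the average is an $L^\infty(\nu)$-approximation of $\chi$ on the full cube. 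Your option (a) does not help, because the $\Theta(d)$ cost is already sunk in the sawtooth part before any correction. Your option (b) points to the wrong places: Appendix~\ref{asec:parity-approx-sampled} proves a sampled \emph{lower} bound on the $\mathcal{R}$-norm, and Appendix~\ref{assec:parity-rnorm-ub} is the full-cube upper bound you already invoked.

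The paper instead uses a completely different interpolant (Lemma~\ref{lemma:small-sample-parity-approx}), the ``cap construction'' of \citet{bln20}: partition the $n$ samples into $m \sim n\ln d / d$ groups of size $\sim d/\ln d$ with constant label within each group, solve a least-squares system to get a weight $\bw^{(j)}$ per group, and use one ReLU neuron per group. Random-matrix bounds give $\norml[2]{\bw^{(j)}} = O(1/\sqrt{\ln d})$, and a concentration argument shows each neuron is inactive on all out-of-group samples. This yields exact interpolation with $\rnorml{\bg} = O(m/\sqrt{\ln d}) = O(n\sqrt{\ln d}/d)$, which is $o(d)$ exactly when $n = o(d^2/\sqrt{\ln d})$; this is where the threshold comes from. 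The construction exploits near-orthogonality of random points in $\flip^d$ and has nothing to do with the periodic structure underlying the sawtooth argument.
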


Its proof relies on the following approximation lemma, which shows the existence of a low-\Rnorm network $\bg$ that perfectly fits all $n$ samples. 
The lemma (which is proved in Appendix~\ref{assec:parity-gen-lb}) defines $\bg$ with the same ``cap construction'' used in Theorem~1 of \citet{bln20}. 
\begin{restatable}{lemma}{lemmasmallsampleparityapprox}\label{lemma:small-sample-parity-approx}
  There is an absolute constant $c > 0$ such that the following holds.
If $n \leq cd^2$,
and $\bx_1, \dotsc \bx_n \simiid \nu$, then with probability at least $1/2$, there exists $\bg \colon \domain \to \R$ with $\bg(\bx_i) = \parity(\bx_i)$ 
for all $i \in [n]$ and $\rnorm{\bg} \leq \smash{4n\sqrt{\ln d}/d}$.
\end{restatable}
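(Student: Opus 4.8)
The plan is to exhibit, on a high-probability event over the sample, an explicit finite-width network $\bg$ that interpolates the sampled parity labels and whose top-layer weights have small $\ell_1$-norm, so that Lemma~\ref{lemma:rnorm-relu-l1}(i) delivers the \Rnorm bound; the network is the ``cap'' construction of \citet[Theorem~1]{bln20}.

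The first step is to isolate a good event $\mathcal E$ on which the sample is well-separated: $\absl{\bx_i^\T \bx_j} \le C\sqrt{d\ln n}$ for all $i\neq j$, and each $d$-subset that the construction uses is affinely independent. By Hoeffding's inequality $\pr{\absl{\bx_i^\T\bx_j}\ge s}\le 2e^{-s^2/(2d)}$, so a union bound over the $\binom n2$ pairs gives the separation with probability $\ge 3/4$ at scale $s = \Theta(\sqrt{d\ln n})$; crucially $n\le cd^2$ forces $\ln n = O(\ln d)$, so this scale is $O(\sqrt{d\ln d})$. Affine independence of a fixed $d$-subset of $\flip^d$ fails with probability $2^{-\Omega(d)}$, and only $O(n/d)$ subsets are in play, so $\pr{\mathcal E}\ge 1/2$.

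Next, on $\mathcal E$, I would group the $n$ examples by their (parity) label into two color classes and tile each class into $O(n/d)$ blocks of at most $d$ examples; each block, being at most $d$ points of $\R^d$, lies on a common hyperplane $\{w^\T x = c\}$ with $\norml[2]{w}=1$, and on $\mathcal E$ one has $\absl c = \absl{w^\T\bx_i}\le\sqrt d$, so the biases that appear will be admissible in the variational form of the \Rnorm. For each block I would install one ReLU ``cap'' neuron $x\mapsto a\,\relu(w^\T x + b)$ whose boundary sits just inside the block's hyperplane, so that it takes a common positive value on the block; the first $d+1$ examples are absorbed by the free affine term. Since the caps overlap on the rest of the cube, their weights cannot be read off in isolation; instead, as in \citet{bln20}, one fixes them by solving the linear system enforcing exact interpolation at all $n$ points---equivalently, by running their corrective procedure---which on $\mathcal E$ is well-conditioned and yields cap weights of magnitude $O(\sqrt{\ln d})$. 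Summing the caps and the affine term gives $\bg$ with $\bg(\bx_i)=\parity(\bx_i)$ for all $i$.

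Finally, Lemma~\ref{lemma:rnorm-relu-l1}(i) bounds $\rnorm{\bg}$ by the $\ell_1$-norm of its top-layer weights, i.e.\ the sum over the $O(n/d)$ caps of their weights, which is $O(n/d)\cdot O(\sqrt{\ln d}) = O(n\sqrt{\ln d}/d)$; tracking the constants through the cap analysis gives the stated $4n\sqrt{\ln d}/d$ (Theorem~\ref{thm:single-index-r-norm} can instead be applied to each ridge cap). I expect the main obstacle to be exactly the reason a naive one-bump-per-point argument does not work---such a bump must be a ridge function with a jump, costing $\Omega(1/\sqrt d)$ and hence $\Omega(n/\sqrt d)$ in total---namely controlling the interaction between caps: each cap is active on a halfspace and therefore leaks onto sample points outside its block, so the cap weights and the interpolation residuals must be bounded jointly. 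Showing that the relevant matrix is well-conditioned, equivalently that the corrective residuals decay, is precisely where the separation estimate $\absl{\bx_i^\T\bx_j}=O(\sqrt{d\ln d})$ is used; verifying $\absl b\le\sqrt d$ for each cap and bookkeeping constants is the remaining, routine, part.
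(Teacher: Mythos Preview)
Your proposal has the right framework---the cap construction of \citet{bln20}, one ReLU neuron per block of same-label points---but it misses the parameter choice that makes the argument go through, and the corrective mechanism you propose in its place is not justified.

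The paper takes blocks of size $\Theta(d/\ln d)$, not $d$. For each block $G_j$, it sets $\bw^{(j)} := \bA_j^\dag(z^{(j)}\vec1)$ (the least-norm solution to $\bx_i^\T w = z^{(j)}$ for $i\in G_j$) and uses the neuron $z^{(j)}\relu(2z^{(j)}\bw^{(j)\T}x-1)$. Random-matrix singular value bounds give $\norml[2]{\bw^{(j)}} = O(\sqrt{n_j/d}) = O(1/\sqrt{\ln d})$, and then for any out-of-block sample $\bx_i$ (independent of $\bw^{(j)}$), subgaussian concentration gives $\absl{\bw^{(j)\T}\bx_i} = O(\norml[2]{\bw^{(j)}}\sqrt{\ln d}) = O(1)$. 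With $n_j \le d/(256\ln d)$ this is at most $1/2$, so the ReLU argument $2z^{(j)}\bw^{(j)\T}\bx_i - 1$ is nonpositive and the neuron outputs $0$ on every out-of-block point. There is \emph{no} leakage and \emph{no} system to solve: each neuron exactly returns the label on its block and vanishes elsewhere. The \Rnorm is then just $\sum_j 2\norml[2]{\bw^{(j)}} \le 4\sqrt{mn/d} \le 4n\sqrt{\ln d}/d$, using $m = O(n\ln d/d)$ blocks.

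With your block size $d$, the weight has $\norml[2]{\bw^{(j)}} = \Theta(1)$, the cross-term $\absl{\bw^{(j)\T}\bx_i}$ for $i\notin G_j$ is $\Theta(\sqrt{\ln d})$, and the cap is active on many foreign points. You acknowledge this and propose to ``solve the linear system enforcing exact interpolation,'' asserting the resulting cap weights are $O(\sqrt{\ln d})$; but you give no argument for that bound, and it is exactly the heart of the matter. (Your description of the cap---boundary ``just inside'' the block's hyperplane---is also not the right picture: the boundary sits halfway between the block's hyperplane and the origin in the $\bw^{(j)}$-direction, which is why the small norm of $\bw^{(j)}$ controls leakage.) Separately, the paper does not use the free affine term to absorb $d+1$ points or invoke affine independence of $d$-subsets; it uses singular value bounds on the $n_j\times d$ block matrices to guarantee feasibility and to bound $\norml[2]{\bw^{(j)}}$. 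The fix is simple: shrink your blocks to $\Theta(d/\ln d)$, and the corrective step disappears.
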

We conclude that generalization fails in this low-sample regime because Theorem~\ref{thm:parity-rnorm-lb} shows that no network with sufficiently small \Rnorm can correlate with parity.

\begin{proof}[Proof of Theorem~\ref{thm:parity-gen-lb}]
  Let $\alpha := \smash{64n\sqrt{\ln d}/d^2}$, so $\alpha = o(1)$ by assumption on $n$.
  By Theorem~\ref{thm:parity-rnorm-lb}, every $g \colon \domain \to \R$ with \smash{$\cubenormtwo{g - \parity} \leq 1 - \alpha$} has \smash{$\rnorm{g} \geq \alpha d / 8 \geq 8n\sqrt{\ln d}/d$}.
  However, by Lemma~\ref{lemma:small-sample-parity-approx}, 
with probability at least $1/2$, every solution $\bg$ to \eqref{interpolating-r-norm}  for the dataset $\setl{ (\bx_i,\parity(\bx_i)) }_{i \in [n]}$ has $\rnorm{\bg} \leq \smash{4n\sqrt{\ln d}/d}$.
  In this event, the solutions $\bg$ have $\cubenormtwo{\bg - \parity} \geq 1 - \alpha = 1 - o(1)$.
\end{proof}

\subsection{Good generalization with $n \gtrsim d^3$ samples}
\label{ssec:parity-gen-ub}

We complement the lower-bound in Theorem~\ref{thm:parity-gen-lb} with the following sample complexity upper-bound.

\begin{restatable}{theorem}{thmparitygenub}\label{thm:parity-gen-ub}
  There is an absolute constant $C>0$ such that the following holds.
  For any $\epsilon \in (0,1)$ and $\delta \in (0,1)$, if
  \smash{$n \geq C(\log(1/\delta) + d^3/\epsilon^2)$},
then
with probability at least $1-\delta$, every solution $\bg \colon \domain \to \R$ to \eqref{interpolating-r-norm} for the sampled parity dataset satisfies
  $\cubenormtwo{\parity - \clip \circ\, \bg}^2 \leq \epsilon$,
where $\clip(t) := \min\{\max\{t, -1\}, 1\}$.
\end{restatable}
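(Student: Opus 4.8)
The plan is a standard uniform-convergence argument over the class of clipped low-$\mathcal{R}$-norm networks, combined with the realizability guarantee that the sampled parity dataset \emph{is} interpolable at low $\mathcal{R}$-norm. First I would establish an a priori bound on the $\mathcal{R}$-norm of any solution $\bg$ to \eqref{interpolating-r-norm}: the full parity dataset admits an interpolant of $\mathcal{R}$-norm $O(d)$ (Theorem~\ref{thm:parity-rnorm-ub}), and its restriction to the $n$ sampled points trivially still interpolates, so the optimal value of \eqref{interpolating-r-norm} for the sampled dataset is $O(d)$; hence every solution $\bg$ has $\rnorm{\bg} \le O(d)$. Let $\mathcal{H}_B := \{\clip \circ\, g : \rnorm{g} \le B\}$ for $B = O(d)$; the target $\bg$ lies (after clipping) in $\mathcal{H}_B$, and by construction it has zero empirical squared error on the sample, since $\clip\circ\,\bg = \bg = \chi$ on each $\bx_i$ (labels are $\pm 1$, and interpolation makes $\bg(\bx_i)=\chi(\bx_i)\in\{\pm1\}$, which is fixed by $\clip$). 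So it suffices to prove a uniform deviation bound: with probability $1-\delta$, for every $h \in \mathcal{H}_B$, $|\normnuzero{\chi - h}^2 - \empnorm{\chi - h}^2|$ is small, which then transfers the zero empirical error of $\bg$ to small population error.

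The key quantitative step is bounding the Rademacher complexity of $\mathcal{H}_B$ on a sample of size $n$. A function with $\mathcal{R}$-norm at most $B$ is (up to a free affine term $v^\T x + c$) a convex combination, scaled by $B$, of neurons $x \mapsto \relu(w^\T x + b)$ with $w \in \sph$, $|b| \le \sqrt d$; over the domain $\domain = \{x : \norml[2]{x}\le\sqrt d\}$ each such neuron is bounded by $2\sqrt d$ and is $1$-Lipschitz in $w^\T x + b$. Standard arguments give that the empirical Rademacher complexity of the base neuron class on $n$ points in $\domain$ is $O(\sqrt{d/n})$ (the linear functionals $w \mapsto \sum_i \sigma_i (w^\T x_i)/n$ have complexity controlled by $\|\sum_i \sigma_i x_i\|_2 / n \le \sqrt{d/n}$ in expectation, and the $\relu$ contraction plus the bias add only constant/logarithmic factors). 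Taking the scaled convex hull multiplies this by $B = O(d)$, and the affine component contributes a further $O(d/\sqrt n)$ term; so $\rademacher{\mathcal{H}_B} = O(d^{3/2}/\sqrt n)$ after the $\clip$ (which is $1$-Lipschitz, hence harmless by Talagrand contraction). The loss $h \mapsto (\chi(x) - h(x))^2$ is bounded (all quantities lie in $[-1,1]$, so the loss is in $[0,4]$) and $O(1)$-Lipschitz in $h$ on that range, so another contraction yields that the loss class has Rademacher complexity $O(d^{3/2}/\sqrt n)$. A bounded-differences / Rademacher generalization bound then gives, with probability $\ge 1-\delta$, uniform deviation $O(d^{3/2}/\sqrt n + \sqrt{\log(1/\delta)/n})$. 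Setting this $\le \epsilon$ forces $n \ge C(d^3/\epsilon^2 + \log(1/\delta)/\epsilon^2)$; absorbing the $1/\epsilon^2$ into the constant on the $\log(1/\delta)$ term (or simply keeping it, since the stated bound $n \ge C(\log(1/\delta)+d^3/\epsilon^2)$ is implied when $\epsilon<1$ by a slightly larger constant — I would just phrase the deviation bound to make both terms appear as written) recovers the theorem. Combining: on the good event, $\normnuzero{\chi - \clip\circ\,\bg}^2 \le \empnorm{\chi-\bg}^2 + \epsilon = 0 + \epsilon$.

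The main obstacle is getting the Rademacher complexity of the $\mathcal{R}$-norm ball \emph{right}, i.e.\ tracking the correct power of $d$. Two subtleties require care: (i) the base class is indexed by $w$ on the whole sphere and $b$ on an interval, so I must handle the bias — one clean route is to append a constant coordinate and absorb $b$ into $w$, noting $\|(x_i,1)\|_2 \le \sqrt{d+1}$, which keeps the $O(\sqrt{d/n})$ bound; (ii) the $\mathcal{R}$-norm's ``free'' affine component $v^\T x + c$ has no a priori norm bound from the $\mathcal{R}$-norm alone, so I need the sparse-solution structure (Theorem~\ref{thm:sparse-vp-solution}) or a direct argument that an optimal solution's affine part has $\|v\|_2, |c| = O(\poly(d))$ — alternatively, observe that since $\bg$ interpolates $\pm1$ labels and $\rnorm{\bg}=O(d)$, one can bound $\|v\|$ and $|c|$ by evaluating at a few sample points, or simply restrict attention to the equivalent $\vnorm{}$ formulation (Appendix~\ref{asec:vnorm}) which has no free affine term and biases in $[-2\sqrt d, 2\sqrt d]$, making the whole function a scaled convex hull of bounded neurons and sidestepping the issue entirely. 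I would take the latter route for a cleaner write-up. Everything else — contraction lemmas, the bounded-differences concentration, converting empirical to population error — is routine.
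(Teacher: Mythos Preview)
Your plan is essentially the paper's: bound $\rnorm{\bg}\le O(d)$ via the interpolant of Theorem~\ref{thm:parity-rnorm-ub}, then run a Rademacher-complexity uniform-convergence argument on the clipped class with the $4$-Lipschitz squared loss. The paper bounds $\Rad_n$ of the $\mathcal R$-norm ball by citing \citet{pn21a} and the affine class by \citet{kakade2008complexity}, both giving $O(\sqrt{d^3/n})$, exactly matching your calculation.

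The one point where you diverge is your preferred fix for the free affine part, and there the $\vnorm{}$ shortcut does \emph{not} sidestep the obstacle. The bound $\vnorm{g}\le 12\rnorm{g}+18K$ in Theorem~\ref{thm:vr-norm} requires $K=\sup_{\norml[2]{x}\le 1}|g(x)|$; for an arbitrary solution $\bg$ of \eqref{interpolating-r-norm} you have no a priori control on $K$ (the sampled points live at radius $\sqrt d$, not $\le 1$), so you are back to bounding the affine component. The paper instead uses your option~(3), made precise as Lemma~\ref{lemm:linear-term-bound}: on a high-probability event over the sample (well-conditioned $\bX$, small empirical moments of $\by$ and $\by\bx$), the interpolation constraint $\empnorm{\bg-\parity}=0$ together with $\rnorm{\bg}\le Cd$ forces $\max\{|\bc|,\norml[2]{\bv}\}\le 8Cd$. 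This costs an extra $\delta/2$ in the probability budget and is why the paper first conditions on an event $\mathcal E$ before passing to the deterministic class $\mathcal G_0=\{g+v^\T x+c:\rnorm{g}\le Cd,\ \norml[2]{v},|c|\le 8Cd\}$. If you adopt that lemma (or rederive it---it is a short least-squares argument using the singular-value bounds of Lemma~\ref{lemma:concentration-sampled-parity}), your proof goes through verbatim.
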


For technical reasons, we only bound the $\Lpu{2}$ error of the natural truncation of a solution to \eqref{interpolating-r-norm}.
The proof in Appendix~\ref{assec:parity-gen-ub} is based on standard Rademacher complexity arguments.

We note that there is a gap between our lower bound (Theorem~\ref{thm:parity-gen-lb}) and upper bound (Theorem~\ref{thm:parity-gen-ub}): roughly a factor of $d\sqrt{\log d}$.
We believe that this gap could be narrowed if one resolves the open question raised by \citet{bln20} about the minimum Lipschitz constant achievable by two-layer ReLU networks of width $m$ networks that interpolate a sample of size $n$; Lemma~\ref{lemma:small-sample-parity-approx} is derived from a theorem that produces networks with smoothness conjectured to be sub-optimal.
Nevertheless, our lower bound in Theorem~\ref{thm:parity-gen-lb} is already high enough to establish the statistical suboptimality of solutions to \eqref{interpolating-r-norm}.
 \section{Generality of the averaging technique for minimizing \Rnorm}\label{sec:cosine-approx}

In this section, we show how the benefit of averaging goes beyond the parity dataset.
We consider an \emph{$f$-dataset} $\{(x, f(x)\}_{x \in \flip^d}$, a generalization of the parity dataset where $f(x) = \phi(v^\T x)$ is a ridge function with $L$-Lipschitz and $\rho$-periodic $\phi$.
For another dataset generated by oscillatory ridge functions, we prove the same contrast between minimum-\Rnorm interpolation with and without ridge constraints, so long as the periodicity $\rho$ is not too small (specifically, $\rho \geq 1/\sqrt{d}$).
More concretely,
suppose the dataset $\{ (x_i,f(x_i)) \}_{i\in[n]} \subset \flip^d \times \flip$ used in \eqref{interpolating-r-norm} and \eqref{approximate-r-norm} is the \emph{$f$-dataset}, where $v \in \flipd^d$ and $\phi$ is $\rho$-periodic and $\frac1\rho$-Lipschitz.
  Then we have the following:
  \begin{itemize}
    \item  The optimal value of \eqref{approximate-r-norm} for constant $\epsilon \in (0,1/2)$ is $\tilde{O}(\sqrt{d} / \rho)$. (Theorem~\ref{thm:periodic-ub}) 
    \item The optimal value of \eqref{approximate-r-norm} for constant $\epsilon \in (0,1/2)$---with the additional constraint that $g$ be a ridge function---is $\Omega(\sqrt{d} / \rho^2)$. (Theorem~\ref{thm:ridge-cosine-lb}) 
  \end{itemize}
  \fussy
Because the parity dataset is an $f$-dataset with a $1/{\sqrt{d}}$-periodic and $\sqrt{d}$-Lipschitz choice of $\phi$, the above results closely match those of Informal Theorem~\ref{ithm:approx}.
We give both results, starting with an upper bound on the minimum-\Rnorm approximate interpolant, which parallels Theorem~\ref{thm:parity-rnorm-approx-ub}.

\begin{restatable}{theorem}{thmperiodicub}\label{thm:periodic-ub}
  Suppose $f \colon \domain \to [-1,1]$ is given by $f(x) = \phi(v^\T x)$ for some unit vector $v \in \sph$ and some $\phi \colon [-\sqrt{d}, \sqrt{d}] \to [-1,1]$ that is $L$-Lipschitz and $\rho$-periodic for $\rho \in [\norm[\infty]{v},1]$.
  Fix any $\epsilon \in (0,1)$.
  There exists a function $g \colon \domain \to \R$ represented by a width-$m$ neural network such that:
  \begin{equation*}
    \cubenorminfty{f - g} \leq \epsilon ;
    \quad
    m \leq dL\polylog(1/\epsilon) \sqrt{\rho\norml[1]{v}} / \epsilon^2 ;
    \quad
    \rnorm{g} \leq L^2\polylog(d/\epsilon) \rho \norml[1]{v} / \epsilon .
  \end{equation*}
\end{restatable}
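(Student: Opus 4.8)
The plan is to generalize the averaging argument behind Theorem~\ref{thm:parity-rnorm-approx-ub}. First I would pass to a piecewise-linear profile: replace $\phi$ by its linear interpolant $\tilde\phi$ on a uniform grid of spacing $\Theta(\epsilon/L)$ commensurate with $\rho$, so $\tilde\phi$ is still $\rho$-periodic, $O(L)$-Lipschitz, $[-1,1]$-valued, has $O(L\rho/\epsilon)$ breakpoints per period, and satisfies $\tv{\tilde\phi'}\le O(L^2\rho/\epsilon)$ per period. Since $\phi$ is $L$-Lipschitz, $\norm[\infty]{\phi-\tilde\phi}\le\epsilon/3$, so it suffices to $(2\epsilon/3)$-approximate $\tilde f(x):=\tilde\phi(v^\T x)$ by a low-$\mathcal R$-norm network. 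I would also use that $\tilde\phi$ is $\rho$-periodic and $z^\T x\in\mathbb Z$ whenever $x\in\flip^d$, $z\in\mathbb Z^d$, to observe that replacing $v$ by $v+\rho z$ leaves $\tilde f$ unchanged on the cube; reducing each coordinate of $v$ modulo $\rho$ (which only decreases $\norm[1]{v}$) lets me assume $\norm[\infty]{v}\le\rho/2$, hence $\norm[1]{v}\le d\rho/2$.

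Next, for a direction $w$ drawn from a suitable distribution over vectors of the form $v+\rho z$, $z\in\mathbb Z^d$ (the mechanism that exploits periodicity), define the windowed ridge atom $h_w(x):=\tilde\phi(w^\T x)\,\operatorname{win}(w^\T x)$, where $\operatorname{win}$ is a trapezoidal window of half-width of order $t$ with ramps of width of order $r$. Periodicity gives $\tilde\phi(w^\T x)=\tilde\phi(v^\T x)=\tilde f(x)$ for every $x\in\flip^d$, so $h_w(x)=\tilde f(x)\operatorname{win}(w^\T x)$ there and $h_w$ agrees with $\tilde f$ on the band $\{x:|w^\T x|\le t\}$; by Theorem~\ref{thm:single-index-r-norm}, $\rnorm{h_w}=\norm[2]{w}\cdot\tv{(\tilde\phi\operatorname{win})'}$, controlled by the window support and the per-period bound, and $h_w$ is realized by a neural net whose width equals the number of linear pieces of $\tilde\phi\operatorname{win}$ on its support. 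Then I would average: draw $w^{(1)},\dots,w^{(k)}$ i.i.d., put $q:=\mathbb E[\operatorname{win}(w^\T\one)]$, and set $g:=\tfrac1{qk}\sum_j h_{w^{(j)}}$. Two estimates are needed. (i) \emph{Near-uniform coverage}: show $Q(x):=\mathbb E[\operatorname{win}(w^\T x)]$ lies in $(1\pm\epsilon/3)q$ for every $x\in\flip^d$; since the random part of $w^\T x$ has a lattice distribution not depending on $x$, this follows from a local-CLT/anticoncentration estimate once that lattice sum is spread out relative to $(t+r+\norm[1]{v})/\rho$ and $\operatorname{win}$ varies slowly on scale $\rho$. (ii) \emph{Concentration over the cube}: each atom has $|h_w(x)|\le1$, $\mathbb E[h_w(x)]=Q(x)\tilde f(x)$, and $\mathbb E[h_w(x)^2]\le Q(x)\le 2q$, so Bernstein's inequality and a union bound over the $2^d$ points of $\flip^d$ give $\norm[\Lp{\infty}{\nu}]{g-\mathbb E g}\le\epsilon/3$ with probability $\ge1-\delta$ once $k=\Theta((d+\log(1/\delta))/(q\epsilon^2))$. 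Combining (i), (ii), and $\norm[\infty]{\phi-\tilde\phi}\le\epsilon/3$ yields $\cubenorminfty{f-g}\le\epsilon$, while $\rnorm g\le q^{-1}\max_w\rnorm{h_w}$ and $m\le k\cdot\max_w(\text{width of }h_w)$.

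Finally I would optimize the free parameters $t$, $r$, and the spread of the direction distribution (equivalently $\norm[2]{w}$) to make these bounds come out to $\rnorm g=O(L^2\polylog(d/\epsilon)\rho\norm[1]{v}/\epsilon)$ and $m=O(dL\polylog(1/\epsilon)\sqrt{\rho\norm[1]{v}}/\epsilon^2)$. I expect the main obstacle to be precisely this last step together with estimate (i): one must exhibit a family of directions that is simultaneously (a) spread out enough for near-uniform coverage, (b) compatible with the periodicity of $\tilde\phi$ so atoms interpolate $\tilde f$ on their bands, and (c) of small enough $\ell_2$-norm that the $\mathcal R$-norm/width accounting closes — this is where periodicity is used most essentially and where the parity construction's reliance on the fully symmetric direction family $\flip^d$ must be generalized. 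A secondary subtlety is that the exponentially rare points $x\in\flip^d$ with $|v^\T x|$ near $\norm[1]{v}$ still have to be covered to within $\epsilon$ in the $\Lp{\infty}{\nu}$ sense, which constrains how small the window and direction spread can be.
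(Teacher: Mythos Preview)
Your plan captures the right high-level mechanism---perturb $v$ by $\rho z$ for random $z\in\Z^d$ to exploit periodicity, approximate each resulting ridge on a central band, and average---and this is exactly what the paper does. But the paper's execution is simpler in two linked ways, and the simplification dissolves precisely the obstacles you flag.

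First, the paper does \emph{not} normalize by $1/q$. Each approximating atom $\tilde h^{(j)}$ is built (via the Lipschitz-approximation lemma) so that $|\tilde h^{(j)}(x)-f(x)|\le\epsilon/3$ whenever $|x^\T(v+\rho\bw^{(j)})|\le\tau$ and $|\tilde h^{(j)}(x)-f(x)|\le 1$ always. Hence the plain average $\frac1k\sum_j\tilde h^{(j)}$ has error at most $(1-\br(x)/k)\cdot\epsilon/3+\br(x)/k$, where $\br(x)$ counts the ``off-band'' indices; no uniformity of $Q(x)$ is needed, only that $\br(x)/k$ is small. Second, the needed tail estimate is just Bernstein, not a local CLT. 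The paper takes $\bw_i=-2\sign{v_i}$ independently with probability $|v_i|/(2\rho)$ (and $0$ otherwise); this makes $(v+\rho\bw)^\T x$ mean-zero with variance $\sigma_{\rho,v}^2=2\rho\norml[1]{v}-1$ for \emph{every} $x\in\flip^d$, so $\Pr[|(v+\rho\bw)^\T x|>\tau]\le\epsilon/3$ uniformly in $x$ once $\tau\asymp\sigma_{\rho,v}\sqrt{\log(1/\epsilon)}+\rho\log(1/\epsilon)$. A Chernoff bound on $\br(x)$ and a union bound over $\flip^d$ finish the argument with $k\asymp d/\epsilon$.

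Your windowed-and-normalized scheme would also work, but the ``near-uniform coverage'' step you anticipate as the bottleneck becomes trivial once $\tau$ is chosen as above (then $Q(x)\in[1-\epsilon/3,1]$ for all $x$, so $q\approx 1$ and the normalization is cosmetic). Your stated worry about ``exponentially rare'' tail points in the direction $v$ also evaporates: it is the random direction $v+\rho\bw$, centered at zero, whose inner product with $x$ is controlled, so all $x\in\flip^d$ are treated identically. In short, your route is not wrong, but specifying the centering distribution for $\bw$ and dropping the $1/q$ normalization removes the need for any anticoncentration or local-CLT input and gives the claimed bounds directly.
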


\begin{remark}\label{rmk:cosine}
    Suppose $f(x) = \cos(\frac{2\pi}{\rho} v^
    \T x)$ for $v \in \flipd^d$ and $\rho \in [\frac1{\sqrt{d}}, 1]$.
    Theorem~\ref{thm:periodic-ub} implies that there exists an $\epsilon$-approximate interpolating neural network $g$ of width $\smash{\tilde{O}(\frac{d^{5/4}}{\sqrt\rho \epsilon^2})}$ and $\smash{\rnorm{g} = \tilde{O}(\frac{\sqrt{d}}{\rho \epsilon})}$.
    If $d$ is even and $\rho = 4/ \sqrt{d}$, then $f(x) = \parity(x)$ for $x \in \flip^d$, and the width and \Rnorm bounds of Theorem~\ref{thm:parity-rnorm-approx-ub} for small $t$ are approximately recovered.
\end{remark}
A detailed version of Theorem~\ref{thm:periodic-ub}
appears in Appendix~\ref{assec:periodic-rnorm-ub}.
The construction is more delicate than that in Theorem~\ref{thm:parity-rnorm-approx-ub} due to the potential lack of symmetries that had existed in the parity dataset.

We give the lower bound on the \Rnorm of all approximately interpolanting ridge functions, whose proof in Appendix~\ref{assec:periodic-rnorm-ridge-approx-lb} relies a reduction to the argument of Theorem~\ref{thm:ridge-parity-lb}.

\begin{restatable}{theorem}{thmridgecosinelb}\label{thm:ridge-cosine-lb}
Assume $d$ is even.
Let $\ridge_d$ be the set of functions $g \colon \domain \to \R$ such that $g(x) = \phi(w^\T x)$  for some $w \in \sph$ and Lipschitz continuous $\phi \colon [-\sqrt{d},\sqrt{d}] \to \R$.
Let $\rho := 4q/\sqrt{d}$ for $q \in \setl{1, 2, \dots, \floorl{\sqrt{d}/4}}$ and $f(x) := \cos((2\pi/(\rho\sqrt{d}))\vec1^\T x)$.
Then
\[
  \inf\setl{\rnorm{g}: g \in \ridge_d, \  \cubenorminfty{g - f} \leq 1/2}
  = \Omega(\sqrt{d}/\rho^2) .
 \]
\end{restatable}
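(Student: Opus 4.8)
The plan is to reduce Theorem~\ref{thm:ridge-cosine-lb} to the parity lower bound argument of Theorem~\ref{thm:ridge-parity-lb} by exploiting the fact that the target $f(x) = \cos((2\pi/(\rho\sqrt d))\vec 1^\T x)$, restricted to the hypercube $\flip^d$, is itself a transformed parity-like function. First I would observe that on $\flip^d$ the value $\vec 1^\T x$ is an even integer in $[-d,d]$ of the form $d - 2k$ where $k = |\{i : x_i = -1\}|$, so $f(x)$ depends only on $k \bmod$ (the period in the $k$-variable). With $\rho = 4q/\sqrt d$, the argument of the cosine is $(2\pi/(4q))(d - 2k) = (\pi/(2q))(d-2k)$, so $f(x) = \cos((\pi/(2q))(d-2k))$. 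The point is that, as $k$ increases by $q$, the argument shifts by $\pi/2 \cdot (\text{odd-ish})$ — more precisely I want to pick a sub-collection of cube points along which $f$ takes values alternating (or at least well-separated with sign changes) at consecutive steps, exactly as the points $x^{(i)}$ did in the proof of Theorem~\ref{thm:ridge-parity-lb}.

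The key steps, in order: (1) Fix a candidate ridge approximant $g(x) = \phi(w^\T x)$ with $\cubenorminfty{g - f} \le 1/2$. As in Theorem~\ref{thm:ridge-parity-lb}, argue each $w_i \ne 0$ — here I need that $f$ genuinely distinguishes points differing in a single coordinate \emph{in a way the approximation constraint can detect}; this requires care since $f$ is not $\pm 1$-valued, so I would instead directly build the separated point sequence and not worry about all coordinates. (2) Construct an ordered sequence of roughly $d/q$ cube points $x^{(0)}, x^{(1)}, \dots$ by flipping coordinates $q$ at a time in the order of $\sign(w_i)$, so that $w^\T x^{(i)}$ is strictly increasing and $f(x^{(i)})$ has magnitude bounded below by a constant with alternating sign (choosing $q$-blocks so the cosine argument advances by an odd multiple of $\pi/2$ — up to handling the phase offset $\pi d/(2q)$, which is a constant shift). (3) Apply the approximation constraint to get $|g(x^{(i)}) - g(x^{(i+1)})| \ge$ const and $\sign g(x^{(i)})$ alternating, hence by Lemma~\ref{lemma:calc} (mean value theorem) get points $z^{(i)}$ where $|\phi'(z^{(i)})|$ is at least a constant times $1/(w^\T x^{(i+1)} - w^\T x^{(i)})$ with alternating signs. (4) Run the same telescoping-plus-Cauchy-Schwarz computation as in Theorem~\ref{thm:ridge-parity-lb}: $\tv{\phi'} \gtrsim (\#\text{steps})^2 / (w^\T x^{(\text{last})} - w^\T x^{(0)}) \gtrsim (d/q)^2 / \sqrt d = \sqrt d / q^2 = \Omega(\sqrt d / \rho^2)$, using $\rho = 4q/\sqrt d$ so $q^2 = \Theta(d\rho^2)$ and $\sqrt d/q^2 = \Theta(1/(\sqrt d \rho^2)) \cdot d$... let me recheck: $\sqrt d / q^2 = \sqrt d/(d\rho^2/16) = 16/(\sqrt d \rho^2)$. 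That is not $\Omega(\sqrt d/\rho^2)$ — so the step count must be $\Theta(d/q)$ giving $(d/q)^2/\sqrt d$, and with $q = \rho\sqrt d/4$ this is $d^2/(q^2\sqrt d) = d^2/(d\rho^2 \sqrt d /16) = 16\sqrt d/\rho^2$, which matches. So the sequence genuinely needs $\Theta(d/q)$ well-separated points, which is available since flipping $q$ at a time through all $d$ coordinates gives $\floor{d/q}$ steps. Then conclude $\rnorm g = \tv{\phi'} = \Omega(\sqrt d/\rho^2)$ via Theorem~\ref{thm:single-index-r-norm}.

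The main obstacle I expect is \textbf{step (2)}: arranging that $f(x^{(i)})$ is bounded away from zero with alternating signs along the chosen sequence. Because $f$ is a cosine rather than a sign function, advancing the argument by a multiple of $\pi$ gives a sign flip but advancing by an odd multiple of $\pi/2$ gives a zero — so I must choose the block size (exactly $q$ coordinates, matching the period) and account for the constant phase offset $\pi d/(2q)$ so that consecutive points land near $\pm 1$ rather than near $0$. This is a parity/number-theoretic bookkeeping issue about $d$ and $q$; since $d$ is assumed even and $q \in \{1,\dots,\floor{\sqrt d/4}\}$ is arbitrary, I may need to either restrict to every \emph{other} block (doubling the step size to $2q$, which only costs a constant factor in the final bound) or split into cases on $d/q \bmod 4$. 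A secondary, more minor obstacle is the $w_i \ne 0$ / coordinate-ordering argument when $f$ is not Boolean-valued; I would sidestep it by noting that if some $w_i = 0$ we can perturb $w$ infinitesimally (the approximation is on finitely many points) or simply drop that coordinate from the flipping schedule, losing at most one step. The rest of the argument is a direct transcription of the Theorem~\ref{thm:ridge-parity-lb} proof.
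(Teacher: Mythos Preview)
Your outline is sound and the final arithmetic $(d/q)^2/\sqrt d = \Theta(\sqrt d/\rho^2)$ is correct, but both of your proposed fixes for the phase obstacle in step~(2) fail. You already computed that flipping $q$ coordinates shifts the cosine argument by exactly $\pi$; therefore flipping $2q$ shifts by $2\pi$ and yields \emph{no} sign change, so ``doubling the block size'' is fatal rather than a constant-factor loss. And the starting phase is $(\pi/(2q))\,\vec 1^\T x^{(0)}$ with $\vec 1^\T x^{(0)} = -\sum_j \sign{w_j}$, which depends on $w$ and is not determined by $d/q \bmod 4$. If that phase lands near an odd multiple of $\pi/2$, then $|f(x^{(i)})|$ is small for \emph{every} $i$ simultaneously (each $q$-flip negates $f$ but preserves its magnitude), and the $1/2$-approximation constraint no longer forces sign alternation in $g$. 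The approach is salvageable---one can reserve $O(q)$ of the smallest-$|w_j|$ coordinates to adjust the initial phase so that $|f(x^{(0)})| > 1/2$---but this is not what you proposed.

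The paper takes a cleaner route: rather than rerunning the oscillation argument, it reduces black-box to Theorem~\ref{thm:ridge-parity-lb}. It fixes, \emph{independently of $w$}, an embedding $z \mapsto x_z$ from $\flip^{d'}$ into $\flip^d$ (with $d' = 4\lfloor d/(4q)\rfloor - 4$) by repeating each coordinate of $z$ exactly $q$ times and padding the remaining $d - d'q$ coordinates with a fixed pattern of $\pm1$'s chosen so that $\vec 1^\T x_z \equiv q\,\vec 1^\T z \pmod{4q}$. This forces $f(x_z) = \cos\bigl((\pi/2)\,\vec 1^\T z\bigr) = \parity(z)$ exactly (using $4 \mid d'$), so the phase is pinned once and for all by the padding and never interacts with $w$. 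Any ridge $g(x) = \phi(w^\T x)$ then restricts to $\tilde g(z) = \phi(w'^\T z + c_w)$ with $w'_i$ the sum of the $w_j$ over block~$i$ and $\|w'\|_2 \le \sqrt q$; Theorem~\ref{thm:ridge-parity-lb} in dimension $d'$ gives $\rnorm{\tilde g} = \Omega(d'^{3/2})$, and since $\rnorm{\tilde g} \le \|w'\|_2 \,\tv{\phi'} = \|w'\|_2\,\rnorm g$ one gets $\rnorm g = \Omega\bigl((d/q)^{3/2}/\sqrt q\bigr) = \Omega(\sqrt d/\rho^2)$. The payoff is modularity: the $w$-dependent phase issue disappears, and residual subtleties like vanishing block-sums $w'_i = 0$ are already handled inside the parity theorem.
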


\begin{remark}
    By contrasting the above result to the $\tilde{O}(\frac{\sqrt{d}}{\rho \epsilon})$ \Rnorm of the averaging-based construction from Remark~\ref{rmk:cosine}, ridge functions are suboptimal solutions to \ref{approximate-r-norm} for constant $\epsilon$.
\end{remark}

\begin{remark}\label{rmk:periodic-ridge}
    Lemma~\ref{lemma:lipschitz-approx} (in Appendix~\ref{assec:periodic-rnorm-ub}) implies the existence of a neural network $g_{\ridge} \in \ridge_d$ that point-wise approximates $f$ (i.e., $\cubenorminfty{g_{\ridge} - f} \leq \epsilon$) and has $\smash{\rnorml{g_{\ridge}} = {O}(\frac{\sqrt{d}}{\rho^2 \epsilon})}$.
    Hence, the lower bound in Theorem~\ref{thm:ridge-cosine-lb} is tight when $\epsilon$ is constant.
\end{remark} \section{Conclusion and future work}\label{sec:conclusion}

In this work, we shed light on the \Rnorm inductive bias for learning neural networks, but numerous questions remain.
We are particularly interested in understanding the solutions to \eqref{interpolating-r-norm}
for other datasets, as well as the generality of the averaging techniques used in our constructions.
Extensions of the \Rnorm to deeper networks and analyzing solutions to \eqref{interpolating-r-norm} for other high dimensional datasets could also be useful for proving depth-separation results that focus on variational norm, complementing existing works that focus on width~\citep{tel16,es15,mcpz13,dan17,ss17,ses19}.
Finally, our work suggests that minimizing \Rnorm yields neural networks that are intrinsically high-dimensional, and we are interested in whether this phenomenon is borne out in architectures beyond two-layer fully-connected networks.

\subsubsection*{Acknowledgements}

This work was supported in part by NSF grants CCF-1740833 and IIS-1563785, a JP Morgan Faculty Award, and an NSF Graduate Research Fellowship.

\bibliographystyle{plainnat}
\bibliography{bib}

\begin{thebibliography}{66}
\providecommand{\natexlab}[1]{#1}
\providecommand{\url}[1]{\texttt{#1}}
\expandafter\ifx\csname urlstyle\endcsname\relax
  \providecommand{\doi}[1]{doi: #1}\else
  \providecommand{\doi}{doi: \begingroup \urlstyle{rm}\Url}\fi

\bibitem[Abbe and Sandon(2020)]{as20}
Emmanuel Abbe and Colin Sandon.
\newblock Poly-time universality and limitations of deep learning.
\newblock \emph{arXiv preprint arXiv:2001.02992}, 2020.

\bibitem[Bach(2017)]{b17}
Francis Bach.
\newblock Breaking the curse of dimensionality with convex neural networks.
\newblock \emph{Journal of Machine Learning Research}, 18\penalty0
  (1):\penalty0 629--681, 2017.

\bibitem[Bach and Chizat(2021)]{bc21}
Francis Bach and Lenaïc Chizat.
\newblock Gradient descent on infinitely wide neural networks: Global
  convergence and generalization.
\newblock \emph{arXiv preprint arXiv:2110.08084}, 2021.

\bibitem[Baldi and Sadowski(2013)]{baldi2013understanding}
Pierre Baldi and Peter~J Sadowski.
\newblock Understanding dropout.
\newblock In \emph{Advances in Neural Information Processing Systems 26}, 2013.

\bibitem[Barak et~al.(2022)Barak, Edelman, Goel, Kakade, Malach, and
  Zhang]{begkmz22}
Boaz Barak, Benjamin~L. Edelman, Surbhi Goel, Sham Kakade, Eran Malach, and
  Cyril Zhang.
\newblock Hidden progress in deep learning: {SGD} learns parities near the
  computational limit.
\newblock \emph{arXiv preprint arXiv:2207.08799}, 2022.

\bibitem[Bartlett(1996)]{bartlett1996valid}
Peter~L Bartlett.
\newblock For valid generalization the size of the weights is more important
  than the size of the network.
\newblock In \emph{Advances in Neural Information Processing Systems 9}, 1996.

\bibitem[Bauer and Kohler(2019)]{bk19}
Benedikt Bauer and Michael Kohler.
\newblock On deep learning as a remedy for the curse of dimensionality in
  nonparametric regression.
\newblock \emph{The Annals of Statistics}, 47\penalty0 (4):\penalty0
  2261--2285, 2019.

\bibitem[Bhojanapalli et~al.(2016)Bhojanapalli, Neyshabur, and
  Srebro]{bhojanapalli2016global}
Srinadh Bhojanapalli, Behnam Neyshabur, and Nati Srebro.
\newblock Global optimality of local search for low rank matrix recovery.
\newblock \emph{Advances in Neural Information Processing Systems}, 29, 2016.

\bibitem[Bietti et~al.(2022)Bietti, Bruna, Sanford, and Song]{bbss22}
Alberto Bietti, Joan Bruna, Clayton Sanford, and Min~Jae Song.
\newblock Learning single-index models with shallow neural networks.
\newblock \emph{arXiv preprint arXiv:2210.15651}, 2022.

\bibitem[Boucheron et~al.(2013)Boucheron, Lugosi, and Massart]{blm13}
St{\'e}phane Boucheron, G{\'a}bor Lugosi, and Pascal Massart.
\newblock \emph{Concentration Inequalities - A Nonasymptotic Theory of
  Independence}.
\newblock Oxford University Press, 2013.

\bibitem[Bubeck et~al.(2021)Bubeck, Li, and Nagaraj]{bln20}
S{\'e}bastien Bubeck, Yuanzhi Li, and Dheeraj~M Nagaraj.
\newblock A law of robustness for two-layers neural networks.
\newblock In \emph{Conference on Learning Theory}, 2021.

\bibitem[Cand{\`e}s and Recht(2009)]{candes2009exact}
Emmanuel~J Cand{\`e}s and Benjamin Recht.
\newblock Exact matrix completion via convex optimization.
\newblock \emph{Foundations of Computational Mathematics}, 9\penalty0
  (6):\penalty0 717--772, 2009.

\bibitem[Cand{\`e}s et~al.(2006)Cand{\`e}s, Romberg, and Tao]{candes2006robust}
Emmanuel~J Cand{\`e}s, Justin Romberg, and Terence Tao.
\newblock Robust uncertainty principles: Exact signal reconstruction from
  highly incomplete frequency information.
\newblock \emph{IEEE Transactions on information theory}, 52\penalty0
  (2):\penalty0 489--509, 2006.

\bibitem[Damian et~al.(2022)Damian, Lee, and Soltanolkotabi]{damian2022neural}
Alexandru Damian, Jason Lee, and Mahdi Soltanolkotabi.
\newblock Neural networks can learn representations with gradient descent.
\newblock In \emph{Conference on Learning Theory}, 2022.

\bibitem[Daniely(2017)]{dan17}
Amit Daniely.
\newblock Depth separation for neural networks.
\newblock In \emph{Conference on Learning Theory}, 2017.

\bibitem[Daniely and Malach(2020)]{dm20}
Amit Daniely and Eran Malach.
\newblock Learning parities with neural networks.
\newblock In \emph{Advances in Neural Information Processing Systems 33}, 2020.

\bibitem[Debarre et~al.(2022)Debarre, Denoyelle, Unser, and Fageot]{dduf21}
Thomas Debarre, Quentin Denoyelle, Michael Unser, and Julien Fageot.
\newblock Sparsest piecewise-linear regression of one-dimensional data.
\newblock \emph{Journal of Computational and Applied Mathematics},
  406:\penalty0 114044, 2022.

\bibitem[Donoho(2006)]{donoho2006compressed}
David~L Donoho.
\newblock Compressed sensing.
\newblock \emph{IEEE Transactions on Information Theory}, 52\penalty0
  (4):\penalty0 1289--1306, 2006.

\bibitem[E et~al.(2019)E, Ma, and Wu]{emw19}
Weinan E, Chao Ma, and Lei Wu.
\newblock The {Barron} space and the flow-induced function spaces for neural
  network models.
\newblock \emph{arXiv preprint arXiv:1906.08039}, 2019.

\bibitem[Eldan and Shamir(2016)]{es15}
Ronen Eldan and Ohad Shamir.
\newblock The power of depth for feedforward neural networks.
\newblock In \emph{Conference on Learning Theory}, 2016.

\bibitem[Ergen and Pilanci(2021)]{ep21}
Tolga Ergen and Mert Pilanci.
\newblock Convex geometry and duality of over-parameterized neural networks.
\newblock \emph{Journal of Machine Learning Research}, 22\penalty0
  (212):\penalty0 1--63, 2021.

\bibitem[Fischer and Simon(1992)]{fischer1992learning}
Paul Fischer and Hans-Ulrich Simon.
\newblock On learning ring-sum-expansions.
\newblock \emph{SIAM Journal on Computing}, 21\penalty0 (1):\penalty0 181--192,
  1992.

\bibitem[Frei et~al.(2022)Frei, Chatterji, and Bartlett]{frei2022random}
Spencer Frei, Niladri~S Chatterji, and Peter~L Bartlett.
\newblock Random feature amplification: Feature learning and generalization in
  neural networks.
\newblock \emph{arXiv preprint arXiv:2202.07626}, 2022.

\bibitem[Gal and Ghahramani(2016)]{gal2016dropout}
Yarin Gal and Zoubin Ghahramani.
\newblock Dropout as a {Bayesian} approximation: Representing model uncertainty
  in deep learning.
\newblock In \emph{International Conference on Machine Learning}, 2016.

\bibitem[Galanti et~al.(2022)Galanti, Siegel, Gupte, and
  Poggio]{galanti2022sgd}
Tomer Galanti, Zachary~S. Siegel, Aparna Gupte, and Tomaso Poggio.
\newblock {SGD} and weight decay provably induce a low-rank bias in neural
  networks.
\newblock \emph{arXiv preprint arXiv:2206.05794}, 2022.

\bibitem[Gy{\"o}rfi et~al.(2002)Gy{\"o}rfi, K{\"o}hler, Krzy{\.z}ak, and
  Walk]{g02}
L{\'a}szl{\'o} Gy{\"o}rfi, Michael K{\"o}hler, Adam Krzy{\.z}ak, and Harro
  Walk.
\newblock \emph{A distribution-free theory of nonparametric regression},
  volume~1.
\newblock Springer, 2002.

\bibitem[Hanin(2021)]{hanin21}
Boris Hanin.
\newblock Ridgeless interpolation with shallow {ReLU} networks in $1d$ is
  nearest neighbor curvature extrapolation and provably generalizes on
  {Lipschitz} functions.
\newblock \emph{arXiv preprint arXiv:2109.12960}, 2021.

\bibitem[Hanson and Pratt(1988)]{hanson1988comparing}
Stephen Hanson and Lorien Pratt.
\newblock Comparing biases for minimal network construction with
  back-propagation.
\newblock In \emph{Advances in Neural Information Processing Systems 1}, 1988.

\bibitem[Helmbold et~al.(1992)Helmbold, Sloan, and
  Warmuth]{helmbold1992learning}
David Helmbold, Robert Sloan, and Manfred~K Warmuth.
\newblock Learning integer lattices.
\newblock \emph{SIAM Journal on Computing}, 21\penalty0 (2):\penalty0 240--266,
  1992.

\bibitem[Hinton(1987)]{hinton1987learning}
Geoffrey~E Hinton.
\newblock Learning translation invariant recognition in a massively parallel
  networks.
\newblock In \emph{International Conference on Parallel Architectures and
  Languages Europe}, 1987.

\bibitem[Jin and Montúfar(2020)]{jm20}
Hui Jin and Guido Montúfar.
\newblock Implicit bias of gradient descent for mean squared error regression
  with wide neural networks.
\newblock \emph{arXiv preprint arXiv:2006.07356}, 2020.

\bibitem[Kakade et~al.(2008)Kakade, Sridharan, and
  Tewari]{kakade2008complexity}
Sham~M Kakade, Karthik Sridharan, and Ambuj Tewari.
\newblock On the complexity of linear prediction: Risk bounds, margin bounds,
  and regularization.
\newblock In \emph{Advances in Neural Information Processing Systems 21}, 2008.

\bibitem[Kimeldorf and Wahba(1971)]{kw71}
George Kimeldorf and Grace Wahba.
\newblock Some results on {Tchebycheffian} spline functions.
\newblock \emph{Journal of mathematical analysis and applications}, 33\penalty0
  (1):\penalty0 82--95, 1971.

\bibitem[Klusowski and Barron(2016)]{kb16}
Jason~M Klusowski and Andrew~R Barron.
\newblock Risk bounds for high-dimensional ridge function combinations
  including neural networks.
\newblock \emph{arXiv preprint arXiv:1607.01434}, 2016.

\bibitem[Kohler and Krzy{\.z}ak(2005)]{kk05}
Michael Kohler and Adam Krzy{\.z}ak.
\newblock Adaptive regression estimation with multilayer feedforward neural
  networks.
\newblock \emph{Nonparametric Statistics}, 17\penalty0 (8):\penalty0 891--913,
  2005.

\bibitem[Kurkov{\'a} and Sanguineti(2001)]{k01}
Vera Kurkov{\'a} and Marcello Sanguineti.
\newblock Bounds on rates of variable-basis and neural-network approximation.
\newblock \emph{IEEE Transactions on Information Theory}, 47\penalty0
  (6):\penalty0 2659--2665, 2001.

\bibitem[Li(2018)]{li18}
Bing Li.
\newblock \emph{Sufficient dimension reduction: Methods and applications with
  R}.
\newblock CRC Press, 2018.

\bibitem[Maennel et~al.(2018)Maennel, Bousquet, and Gelly]{hbg18}
Hartmut Maennel, Olivier Bousquet, and Sylvain Gelly.
\newblock Gradient descent quantizes {ReLU} network features.
\newblock \emph{arXiv preprint arXiv:1803.08367}, 2018.

\bibitem[Malach et~al.(2021)Malach, Kamath, Abbe, and Srebro]{mkas21}
Eran Malach, Pritish Kamath, Emmanuel Abbe, and Nathan Srebro.
\newblock Quantifying the benefit of using differentiable learning over tangent
  kernels.
\newblock \emph{arXiv preprint arXiv:2103.01210}, 2021.

\bibitem[Martens et~al.(2013)Martens, Chattopadhya, Pitassi, and Zemel]{mcpz13}
James Martens, Arkadev Chattopadhya, Toni Pitassi, and Richard Zemel.
\newblock On the representational efficiency of restricted {Boltzmann}
  machines.
\newblock In \emph{Advances in Neural Information Processing Systems 26}, 2013.

\bibitem[Meir and Zhang(2003)]{meir2003generalization}
Ron Meir and Tong Zhang.
\newblock Generalization error bounds for {Bayesian} mixture algorithms.
\newblock \emph{Journal of Machine Learning Research}, 4\penalty0
  (Oct):\penalty0 839--860, 2003.

\bibitem[Mhaskar(2004)]{m04}
Hrushikesh~Narhar Mhaskar.
\newblock On the tractability of multivariate integration and approximation by
  neural networks.
\newblock \emph{Journal of Complexity}, 20\penalty0 (4):\penalty0 561--590,
  2004.

\bibitem[Mitzenmacher and Upfal(2017)]{mu17}
Michael Mitzenmacher and Eli Upfal.
\newblock \emph{Probability and computing: Randomization and probabilistic
  techniques in algorithms and data analysis}.
\newblock Cambridge University Press, 2017.

\bibitem[Mousavi-Hosseini et~al.(2022)Mousavi-Hosseini, Park, Girotti,
  Mitliagkas, and Erdogdu]{mousavi2022neural}
Alireza Mousavi-Hosseini, Sejun Park, Manuela Girotti, Ioannis Mitliagkas, and
  Murat~A Erdogdu.
\newblock Neural networks efficiently learn low-dimensional representations
  with sgd.
\newblock \emph{arXiv preprint arXiv:2209.14863}, 2022.

\bibitem[Neyshabur et~al.(2015)Neyshabur, Tomioka, and
  Srebro]{neyshabur2015search}
Behnam Neyshabur, Ryota Tomioka, and Nathan Srebro.
\newblock In search of the real inductive bias: On the role of implicit
  regularization in deep learning.
\newblock In \emph{ICLR Workshop}, 2015.

\bibitem[Olson et~al.(2018)Olson, Wyner, and Berk]{olson2018modern}
Matthew Olson, Abraham Wyner, and Richard Berk.
\newblock Modern neural networks generalize on small data sets.
\newblock In \emph{Advances in Neural Information Processing Systems 31}, 2018.

\bibitem[O'Neil(1971)]{oneil71}
Patrick~E. O'Neil.
\newblock Hyperplane cuts of an $n$-cube.
\newblock \emph{Discrete Mathematics}, 1\penalty0 (2):\penalty0 193--195, 1971.

\bibitem[Ongie et~al.(2019)Ongie, Willett, Soudry, and Srebro]{owss19}
Greg Ongie, Rebecca Willett, Daniel Soudry, and Nathan Srebro.
\newblock A function space view of bounded norm infinite width {ReLU} nets: The
  multivariate case.
\newblock In \emph{International Conference on Learning Representations}, 2019.

\bibitem[Parhi and Nowak(2021{\natexlab{a}})]{pn21a}
Rahul Parhi and Robert~D Nowak.
\newblock Banach space representer theorems for neural networks and ridge
  splines.
\newblock \emph{Journal of Machine Learning Research}, 22\penalty0
  (43):\penalty0 1--40, 2021{\natexlab{a}}.

\bibitem[Parhi and Nowak(2021{\natexlab{b}})]{pn21b}
Rahul Parhi and Robert~D Nowak.
\newblock Near-minimax optimal estimation with shallow {ReLU} neural networks.
\newblock \emph{arXiv preprint arXiv:2109.08844}, 2021{\natexlab{b}}.

\bibitem[Rosset et~al.(2007)Rosset, Swirszcz, Srebro, and Zhu]{rosset2007}
Saharon Rosset, Grzegorz Swirszcz, Nathan Srebro, and Ji~Zhu.
\newblock $\ell_1$ regularization in infinite dimensional feature spaces.
\newblock In \emph{Conference on Learning Theory}, 2007.

\bibitem[Safran and Shamir(2017)]{ss17}
Itay Safran and Ohad Shamir.
\newblock Depth-width tradeoffs in approximating natural functions with neural
  networks.
\newblock In \emph{International Conference on Machine Learning}, 2017.

\bibitem[Safran et~al.(2019)Safran, Eldan, and Shamir]{ses19}
Itay Safran, Ronen Eldan, and Ohad Shamir.
\newblock Depth separations in neural networks: What is actually being
  separated?
\newblock In \emph{Conference on Learning Theory}, 2019.

\bibitem[Savarese et~al.(2019)Savarese, Evron, Soudry, and Srebro]{sess19}
Pedro Savarese, Itay Evron, Daniel Soudry, and Nathan Srebro.
\newblock How do infinite width bounded norm networks look in function space?
\newblock In \emph{Conference on Learning Theory}, 2019.

\bibitem[Schmidt-Hieber(2020)]{s20}
Johannes Schmidt-Hieber.
\newblock Nonparametric regression using deep neural networks with relu
  activation function.
\newblock \emph{The Annals of Statistics}, 48\penalty0 (4):\penalty0
  1875--1897, 2020.

\bibitem[Shevchenko et~al.(2021)Shevchenko, Kungurtsev, and Mondelli]{skm21}
Alexander Shevchenko, Vyacheslav Kungurtsev, and Marco Mondelli.
\newblock Mean-field analysis of piecewise linear solutions for wide {ReLU}
  networks.
\newblock \emph{arXiv preprint arXiv:2111.02278}, 2021.

\bibitem[Siegel and Xu(2021)]{sx21}
Jonathan~W Siegel and Jinchao Xu.
\newblock Characterization of the variation spaces corresponding to shallow
  neural networks.
\newblock \emph{arXiv preprint arXiv:2106.15002}, 2021.

\bibitem[Telgarsky(2016)]{tel16}
Matus Telgarsky.
\newblock Benefits of depth in neural networks.
\newblock In \emph{Conference on Learning Theory}, 2016.

\bibitem[Telgarsky(2022)]{telgarsky2022feature}
Matus Telgarsky.
\newblock Feature selection with gradient descent on two-layer networks in
  low-rotation regimes.
\newblock \emph{arXiv preprint arXiv:2208.02789}, 2022.

\bibitem[Vershynin(2018)]{vershynin2018high}
Roman Vershynin.
\newblock \emph{High-dimensional probability: An introduction with applications
  in data science}.
\newblock Cambridge University Press, 2018.

\bibitem[Wang and Lin(2021)]{wl21}
Huiyuan Wang and Wei Lin.
\newblock Harmless overparametrization in two-layer neural networks.
\newblock \emph{arXiv preprint arXiv:2106.04795}, 2021.

\bibitem[Wei et~al.(2019)Wei, Lee, Liu, and Ma]{wei2019regularization}
Colin Wei, Jason~D Lee, Qiang Liu, and Tengyu Ma.
\newblock Regularization matters: Generalization and optimization of neural
  nets vs their induced kernel.
\newblock \emph{Advances in Neural Information Processing Systems}, 32, 2019.

\bibitem[Williams et~al.(2019)Williams, Trager, Panozzo, Silva, Zorin, and
  Bruna]{wtspzb19}
Francis Williams, Matthew Trager, Daniele Panozzo, Claudio Silva, Denis Zorin,
  and Joan Bruna.
\newblock Gradient dynamics of shallow univariate {ReLU} networks.
\newblock In \emph{Advances in Neural Information Processing Systems 32}, 2019.

\bibitem[Yehudai and Shamir(2019)]{ys19}
Gilad Yehudai and Ohad Shamir.
\newblock On the power and limitations of random features for understanding
  neural networks.
\newblock In \emph{Advances in Neural Information Processing Systems 32}, 2019.

\bibitem[Zhang et~al.(2021)Zhang, Bengio, Hardt, Recht, and
  Vinyals]{zhang2021understanding}
Chiyuan Zhang, Samy Bengio, Moritz Hardt, Benjamin Recht, and Oriol Vinyals.
\newblock Understanding deep learning (still) requires rethinking
  generalization.
\newblock \emph{Communications of the ACM}, 64\penalty0 (3):\penalty0 107--115,
  2021.

\bibitem[Zhang and Wang(2022)]{zw22}
Kaiqi Zhang and Yu-Xiang Wang.
\newblock Deep learning meets nonparametric regression: Are weight-decayed dnns
  locally adaptive?
\newblock \emph{arXiv preprint arXiv:2204.09664}, 2022.

\end{thebibliography}

\appendix

\section{Additional preliminaries}
\label{asec:prelim}

\subsection{Additional definitions and notations}
\label{assec:notation}
We say that $g: \domain \to \R$ is \textit{$k$-index} if there exists a matrix $U \in \R^{k \times d}$ and $\phi: \R^k \to \R$ such that $g(x) = \phi(U x)$ for all $x \in \domain$.
A ridge function is 1-index.

For a matrix $M \in \R^{m \times n}$, we denote the $i$-th largest singular value of $M$ by $\sigma_i(M)$ for $i=1,\dotsc,\min\{m,n\}$.

A random variable $\bu$ is $c$-\emph{subgaussian} if $\norm[\psi_2]{\bu} := \inf\setl{ t\geq0 : \EEl{\exp\parenl{\bu^2/t^2}} \le 2 } \leq c$, and a random vector $\bv$ is $\sigma^2$-\emph{subgaussian} if every one-dimensional projection of $\bv$ is $c$-subgaussian.

The \emph{bias-corrected network} $\bar{g}_{\mu}$ obtained from the (infinite-width) neural network $g_{\mu}$ is given by $\bar{g}_{\mu}(x) := g_{\mu}(x) - g_{\mu}(0)$;
equivalently, $\bar{g}_{\mu}(x) = \int{\parenl{\relu(w^\T x + b) - \relu(b)} \, \mu(\dd w,\dd b)}$. 

The asymptotics implied in the Landau notation (big-$O$, big-$\Omega$, etc.) regard all quantities as potentially increasing functions (e.g., $t$) or decreasing functions (e.g., $\epsilon$, $\delta$, $\alpha$, $\rho$) of the dimension $d$.
The soft-$O$ notation $\tilde{O}(\cdot)$ (only used informally) suppresses terms that are poly-logarithmic in those that appear.
Some of our theorems and lemmas contain an ``if clause'' that uses Landau notation, such as ``if $n \geq O(d^2)$, [\ldots]''.
The interpretation of such a clause is: ``there exists $n_0(d) \in O(d^2)$ such that if $n \geq n_0(d)$, [\ldots]''.
(And, of course, an analogous interpretation should be used when ``$O(d^2)$'' is replaced by other expressions using Landau notation.)

\subsection{Concentration inequalities}
Our proofs make extensive use of textbook probability concentration inequalities.
We provide those results below.

\begin{lemma}[Hoeffding's inequality; Theorem~2.8 in \citealp{blm13}]\label{lemma:hoeffding}
  Let $\bu_1, \dots, \bu_n$ be independent, mean-zero random variables such that $\bu_i$ takes value in $[a_i, b_i]$ almost surely for all $i \in [n]$.
  Then, for any $t > 0$,
  \[\pr{\sum_{i=1}^n \bu_i \geq t} \leq \exp\paren{- \frac{2 t^2}{\sum_{i=1}^n (b_i - a_i)^2}}.\]
\end{lemma}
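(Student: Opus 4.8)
The plan is to prove the bound via the standard Chernoff bound (exponential-moment method), using Hoeffding's lemma to control the moment generating function of each bounded summand. First I would fix $\lambda > 0$, apply Markov's inequality to the nonnegative random variable $\exp(\lambda \sum_{i=1}^n \bu_i)$, and use independence of the $\bu_i$ to factor the resulting expectation:
\[
  \pr{\sum_{i=1}^n \bu_i \geq t} \;\leq\; \exp(-\lambda t)\, \EE{\exp\paren{\lambda \sum_{i=1}^n \bu_i}} \;=\; \exp(-\lambda t)\prod_{i=1}^n \EE{\exp(\lambda \bu_i)}.
\]

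The key ingredient is Hoeffding's lemma: if $\bu$ is mean-zero and supported on $[a,b]$, then $\EE{\exp(\lambda \bu)} \leq \exp(\lambda^2(b-a)^2/8)$. I would prove this by setting $\psi(\lambda) := \log \EE{\exp(\lambda \bu)}$, observing $\psi(0) = 0$ and $\psi'(0) = \EE{\bu} = 0$, and bounding $\psi''(\lambda) \leq (b-a)^2/4$ uniformly in $\lambda$. The second-derivative bound comes from recognizing that $\psi''(\lambda)$ is the variance of $\bu$ under the exponentially tilted law with density proportional to $\exp(\lambda u)$; since that law is still supported on $[a,b]$, its variance is at most $\EE{(\bu - \tfrac{a+b}{2})^2} \leq (\tfrac{b-a}{2})^2$. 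A second-order Taylor expansion of $\psi$ about $0$ then gives $\psi(\lambda) \leq \lambda^2(b-a)^2/8$, and exponentiating yields the claimed MGF bound. Substituting into the display above gives $\pr{\sum_{i=1}^n \bu_i \geq t} \leq \exp\paren{-\lambda t + \tfrac{\lambda^2}{8}\sum_{i=1}^n (b_i-a_i)^2}$ for every $\lambda>0$.

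Finally I would optimize the free parameter: the exponent is a convex quadratic in $\lambda$, minimized at $\lambda = 4t / \sum_{i=1}^n (b_i-a_i)^2$, where it equals $-2t^2 / \sum_{i=1}^n (b_i-a_i)^2$; this is exactly the stated bound. The main obstacle is Hoeffding's lemma itself — in particular justifying differentiation under the expectation sign and the tilted-measure interpretation of $\psi''$. An alternative that sidesteps the tilting argument is to use convexity of $u \mapsto \exp(\lambda u)$ to bound it on $[a,b]$ by the secant line through its endpoint values, take expectations (which kills the linear term since $\EE{\bu}=0$), and then verify by a one-variable calculus estimate that the resulting function of $\lambda$ and $p := -a/(b-a)$ is at most $\exp(\lambda^2(b-a)^2/8)$; this is elementary but the calculus step is somewhat fiddly.
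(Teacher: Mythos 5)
Your argument is correct and is exactly the standard Chernoff-plus-Hoeffding's-lemma proof; the paper does not reprove this statement but simply cites Theorem~2.8 of \citet{blm13}, whose proof is the same argument you outline (including the tilted-measure bound $\psi''(\lambda)\le (b-a)^2/4$ and the optimization $\lambda = 4t/\sum_i(b_i-a_i)^2$). Nothing further is needed.
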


\begin{lemma}[Multiplicative Chernoff bound; Theorem~4.4 in \citealp{mu17}]\label{lemma:chernoff}
  Let $\bu_1, \dots, \bu_n$ be independent Bernoulli random variables with $\pr{\bu_i = 1} = p \in [0,1]$ for all $i \in [n]$.
  Then, for any $\eta \in (0, 1]$, 
  \[\pr{\sum_{i=1}^n \bu_i \geq (1 + \eta)p} \leq \exp\paren{-\frac{p\eta^2}{3}}.\]
\end{lemma}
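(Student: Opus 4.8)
This is a textbook concentration inequality (Chernoff bound), so the plan is to reproduce the standard exponential-moment argument. Write $S := \sum_{i=1}^n \bu_i$ and let $\mu := \EE{S} = np$; the goal reduces to proving the standard form $\pr{S \ge (1+\eta)\mu} \le \exp(-\mu\eta^2/3)$. First I would apply Markov's inequality to the nonnegative random variable $e^{sS}$ for a free parameter $s > 0$:
\[ \pr{S \ge (1+\eta)\mu} = \pr{e^{sS} \ge e^{s(1+\eta)\mu}} \le e^{-s(1+\eta)\mu}\,\EE{e^{sS}} . \]
Then, using independence to factor $\EE{e^{sS}} = \prod_{i=1}^n \EE{e^{s\bu_i}}$ and bounding each factor by $\EE{e^{s\bu_i}} = 1 + p(e^s-1) \le \exp\paren{p(e^s-1)}$, I obtain $\EE{e^{sS}} \le \exp\paren{\mu(e^s-1)}$, hence $\pr{S \ge (1+\eta)\mu} \le \exp\paren{\mu(e^s-1) - s(1+\eta)\mu}$.

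Next I would optimize the exponent over $s > 0$: its derivative in $s$ is $\mu\paren{e^s - (1+\eta)}$, which vanishes at $s = \ln(1+\eta) > 0$; substituting gives $\pr{S \ge (1+\eta)\mu} \le \paren{e^{\eta}/(1+\eta)^{1+\eta}}^{\mu}$. It then remains to verify the elementary scalar inequality $e^{\eta}/(1+\eta)^{1+\eta} \le e^{-\eta^2/3}$ for $\eta \in (0,1]$, equivalently $h(\eta) := \eta - (1+\eta)\ln(1+\eta) + \eta^2/3 \le 0$ there. One computes $h(0) = 0$, $h'(\eta) = -\ln(1+\eta) + \tfrac{2}{3}\eta$ with $h'(0) = 0$, and $h''(\eta) = \tfrac{2}{3} - \tfrac{1}{1+\eta}$, so $h$ is concave on $[0,\tfrac12]$ — hence nonpositive there, since $h$ lies below its tangent line at $0$, where both $h$ and $h'$ vanish — and convex on $[\tfrac12,1]$, where its maximum is attained at an endpoint; since $h(\tfrac12) < 0$ and $h(1) = \tfrac43 - 2\ln 2 < 0$ by direct computation, we get $h \le 0$ on all of $(0,1]$. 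Combining yields $\pr{S \ge (1+\eta)\mu} \le \exp(-\mu\eta^2/3)$.

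The only step that requires any genuine care is the scalar inequality $h(\eta) \le 0$ in the last paragraph; everything else (Markov's inequality, factoring the moment generating function via independence, and optimizing over $s$) is routine Chernoff bookkeeping.
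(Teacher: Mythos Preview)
Your proof is correct and is exactly the standard textbook argument; the paper does not give its own proof of this lemma but simply cites Theorem~4.4 of Mitzenmacher and Upfal, whose proof is the same Markov/MGF optimization you reproduce. (Note that the lemma as stated in the paper has a typo---the threshold and exponent should involve $np$ rather than $p$---and you have correctly proved the intended statement with $\mu = np$.)
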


\begin{lemma}[Bernstein's inequality; Corollary~2.11 in \citealp{blm13}]\label{lemma:bernstein}
  Let $\bu_1, \dots, \bu_n$ be independent, mean-zero random variables with $\bu_i \leq K$ almost surely for all $i \in [n]$, and let $v \coloneqq \sum_{i=1}^n \EEl{\bu_i^2}$. 
  Then, for any $t > 0$,
  \[\pr{\sum_{i=1}^n \bu_i \geq t} \leq \exp\paren{- \frac{t^2}{2(v + K t/3)}}.\]
\end{lemma}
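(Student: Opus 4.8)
The plan is to prove this via the standard Cram\'er--Chernoff (exponential moment) method. For any $\lambda > 0$, Markov's inequality applied to $\exp\paren{\lambda \sum_{i=1}^n \bu_i}$, combined with independence, gives
\[
  \pr{\sum_{i=1}^n \bu_i \geq t} \;\leq\; e^{-\lambda t} \prod_{i=1}^n \EE{e^{\lambda \bu_i}} ,
\]
so it suffices to bound the moment generating function of each $\bu_i$ and then optimize over $\lambda$.

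The key single-variable estimate I would establish is: if $\bu$ is mean-zero with $\bu \leq K$ almost surely, then for every $\lambda \in (0, 3/K)$,
\[
  \EE{e^{\lambda \bu}} \;\leq\; \exp\!\paren{\frac{\lambda^2 \,\EEl{\bu^2}/2}{1 - \lambda K/3}} .
\]
This follows from the elementary inequality $e^x - 1 - x \leq \frac{x^2/2}{1 - x/3}$, valid for all $x < 3$: for $0 \le x < 3$ it is obtained by comparing the Taylor series $\sum_{k \geq 2} x^k/k!$ term-by-term with $\frac{x^2}{2}\sum_{k \geq 0}(x/3)^k$ using $k! \geq 2 \cdot 3^{k-2}$, and for $x \leq 0$ it follows from $e^x - 1 - x \leq x^2/2 \leq \frac{x^2/2}{1-x/3}$. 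Applying it with $x = \lambda \bu \leq \lambda K < 3$, taking expectations, and using $\EE{\bu} = 0$ bounds $\EE{e^{\lambda \bu}} - 1$ by $\EEl{\bu^2}\frac{\lambda^2/2}{1-\lambda K/3}$; then $1 + y \leq e^y$ converts this additive bound into the stated exponential one.

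Multiplying these bounds over $i = 1, \dots, n$ and writing $v = \sum_{i=1}^n \EEl{\bu_i^2}$ yields, for every $\lambda \in (0, 3/K)$,
\[
  \pr{\sum_{i=1}^n \bu_i \geq t} \;\leq\; \exp\!\paren{-\lambda t + \frac{\lambda^2 v/2}{1 - \lambda K/3}} .
\]
I would then make the explicit choice $\lambda = \frac{t}{v + Kt/3}$, which lies in $(0, 3/K)$ since $Kt/3 < v + Kt/3$. With this $\lambda$ one has $1 - \lambda K/3 = v/(v + Kt/3)$, so $\frac{\lambda^2 v/2}{1 - \lambda K/3} = \frac{t^2/2}{v + Kt/3}$ and $\lambda t = \frac{t^2}{v + Kt/3}$, whence the exponent collapses to $-\frac{t^2}{2(v + Kt/3)}$, which is exactly the claimed bound.

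The only real obstacle for this textbook inequality is the elementary moment generating function estimate in the second step: one must track that $e^x - 1 - x \leq \frac{x^2/2}{1-x/3}$ holds only for $x < 3$ (equivalently $\lambda < 3/K$) and verify that the chosen $\lambda$ stays in that range for every $t > 0$. An essentially equivalent alternative is to optimize $\lambda$ exactly, obtaining Bennett's sharper exponent $\frac{v}{K^2}\,\bennetth(Kt/v)$ with $\bennetth(u) = (1+u)\log(1+u) - u$, and then invoke the numerical inequality $\bennetth(u) \geq \frac{u^2}{2(1+u/3)}$; I would prefer the first route since it sidesteps that auxiliary lemma.
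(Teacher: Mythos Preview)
The paper does not prove this lemma; it simply cites Corollary~2.11 of \citet{blm13}. Your argument is the standard Cram\'er--Chernoff derivation and is essentially correct, and it is in fact slightly more direct than the cited source, which first proves Bennett's inequality and then deduces Bernstein's via the numerical bound $\bennetth(u)\geq u^2/(2(1+u/3))$ that you mention as the alternative route.

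One small slip to fix: in your justification of $e^x-1-x\leq \dfrac{x^2/2}{1-x/3}$ for $x\leq 0$, you write the chain $e^x-1-x\leq x^2/2\leq \dfrac{x^2/2}{1-x/3}$, but the second inequality goes the wrong way, since $1-x/3>1$ when $x<0$. The target inequality is nevertheless true on $(-\infty,0]$; a clean fix is to check directly that $p(x)\coloneqq (1-x/3)(e^x-1-x)-x^2/2$ satisfies $p(0)=p'(0)=0$ and $p''(x)=-\tfrac{x}{3}e^x\geq 0$ for $x\leq 0$, so $p$ is convex with a double root at $0$ and hence $p(x)\leq 0$ there. With that patched, the rest of your argument (bounding $\frac{(\lambda\bu)^2/2}{1-\lambda\bu/3}\leq \frac{(\lambda\bu)^2/2}{1-\lambda K/3}$ using $\bu\leq K$, taking expectations, and plugging in $\lambda=t/(v+Kt/3)$) goes through exactly as you wrote.
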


\begin{lemma}[McDiarmid's inequality; Theorem~6.2 in \citealp{blm13}]\label{lemma:mcdiarmid}
  Let $\bu_1, \dots, \bu_n$ be independent random variables, and let $f$ be a measurable function.
  Suppose, for each $i \in [n]$, the value of $f(u_1, \dots, u_n)$ can change by at most $c_i \geq 0$ by changing the value of $u_i$. Then, for any $t > 0$,
  \[\pr{f(\bu_1, \dots, \bu_n) - \EE{f(\bu_1, \dots, \bu_n)} \geq t} \leq \exp\paren{- \frac{2t^2}{\sum_{i=1}^n c_i^2}}.\]
\end{lemma}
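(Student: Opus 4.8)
The plan is to prove this by the \emph{Doob martingale} construction combined with the Azuma--Hoeffding argument (the so-called \emph{method of bounded differences}). Fix $f$ and the independent variables $\bu_1,\dots,\bu_n$, and for $i = 0,1,\dots,n$ set $V_i := \EE{ f(\bu_1,\dots,\bu_n) \mid \bu_1,\dots,\bu_i }$, so that $V_0 = \EE{ f(\bu_1,\dots,\bu_n) }$ and $V_n = f(\bu_1,\dots,\bu_n)$. Writing $D_i := V_i - V_{i-1}$ for the martingale differences, we have $f(\bu_1,\dots,\bu_n) - \EE{ f(\bu_1,\dots,\bu_n) } = \sum_{i=1}^n D_i$ and $\EE{ D_i \mid \bu_1,\dots,\bu_{i-1} } = 0$ by the tower property.

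The key step is to bound the conditional range of each $D_i$. Using independence of $\bu_{i+1},\dots,\bu_n$ from $\bu_1,\dots,\bu_i$, one can write $V_i = h_i(\bu_1,\dots,\bu_i)$ and $V_{i-1} = \int h_i(\bu_1,\dots,\bu_{i-1},u)\,\mathrm{d}P_i(u)$, where $P_i$ is the law of $\bu_i$ and $h_i(u_1,\dots,u_i) := \EE{ f(u_1,\dots,u_i,\bu_{i+1},\dots,\bu_n) }$. The hypothesized bound on the effect of $u_i$ on $f$ is inherited by $h_i$: for all $u,u'$, $\absl{ h_i(u_1,\dots,u_{i-1},u) - h_i(u_1,\dots,u_{i-1},u') } \le c_i$, since the pointwise bound for $f$ survives the expectation over $\bu_{i+1},\dots,\bu_n$. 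Hence, conditioned on $\bu_1,\dots,\bu_{i-1}$, the variable $D_i$ is supported in an interval $[L_i, U_i]$ with $U_i - L_i \le c_i$, where $L_i$ and $U_i$ are the ($\sigma(\bu_1,\dots,\bu_{i-1})$-measurable) infimum and supremum of $h_i(\bu_1,\dots,\bu_{i-1},\cdot) - V_{i-1}$ over the range of $\bu_i$.

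Next, I would invoke Hoeffding's lemma: any mean-zero random variable $X$ taking values in an interval of length $\ell$ satisfies $\EE{ e^{\lambda X} } \le e^{\lambda^2 \ell^2 / 8}$ for all $\lambda \in \R$. Applied conditionally to $D_i$ given $\bu_1,\dots,\bu_{i-1}$, this gives $\EE{ e^{\lambda D_i} \mid \bu_1,\dots,\bu_{i-1} } \le e^{\lambda^2 c_i^2/8}$. Peeling the differences off one at a time with the tower property,
\[
\EE{ e^{\lambda \sum_{i=1}^n D_i} }
= \EE{ e^{\lambda \sum_{i=1}^{n-1} D_i} \, \EE{ e^{\lambda D_n} \mid \bu_1,\dots,\bu_{n-1} } }
\le e^{\lambda^2 c_n^2 / 8} \, \EE{ e^{\lambda \sum_{i=1}^{n-1} D_i} } ,
\]
and iterating yields $\EE{ e^{\lambda( f(\bu_1,\dots,\bu_n) - \EE{ f(\bu_1,\dots,\bu_n) } )} } \le \exp\paren{ \tfrac{\lambda^2}{8} \sum_{i=1}^n c_i^2 }$. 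Finally, by the exponential Markov inequality, for every $\lambda > 0$,
\[
\pr{ f(\bu_1,\dots,\bu_n) - \EE{ f(\bu_1,\dots,\bu_n) } \ge t }
\le e^{-\lambda t}\, \exp\paren{ \tfrac{\lambda^2}{8} \sum_{i=1}^n c_i^2 } ,
\]
and choosing $\lambda = 4t / \sum_{i=1}^n c_i^2$ makes the exponent equal to $-2t^2 / \sum_{i=1}^n c_i^2$, which is exactly the claimed bound.

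The main obstacle is the conditional range estimate in the second paragraph: one must argue carefully that integrating out $\bu_{i+1},\dots,\bu_n$ preserves the bounded-differences property of $f$ and that the resulting conditional law of $D_i$ lives in an interval of width at most $c_i$, with $L_i$ and $U_i$ properly measurable; the Hoeffding's lemma used above is itself a short convexity estimate on the moment generating function of a bounded random variable. The remaining telescoping of the conditional moment generating functions and the Chernoff optimization over $\lambda$ are routine.
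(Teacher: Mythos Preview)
Your proof is correct and is the standard Doob-martingale / Azuma--Hoeffding argument for McDiarmid's inequality. Note, however, that the paper does not actually prove this lemma: it is stated with a citation to \citet{blm13} (Theorem~6.2) and used as a black-box concentration tool, so there is no ``paper's own proof'' to compare against. Your write-up essentially reproduces the textbook proof, which is fine.
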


\begin{lemma}[Properties of subgaussian random variables]\label{lemma:subg}
  Let $\bu_1, \dots, \bu_n$ be independent random variables with $\norm[\psi_2]{\bu_i} < \infty$ for all $i \in [n]$.
  There is a universal constant $C>0$ such that the following hold.
  \begin{enumerate}[label=(\roman*)]
      \item (Concentration; Section~2.5.2 in \citealp{vershynin2018high}) For any $t > 0$, $\pr{\abs{\bu_1} \geq t} \leq 2 \exp(-t^2 / (C \norm[\psi_2]{\bu_1}))$. 
      \item (Maximum; Exercise~2.5.10 in \citealp{vershynin2018high}) $\EE{\max_{i \in [n]} \abs{\bu_i}} \leq C \sqrt{\ln n} \max_{i \in [n]} \norml[\psi_2]{\bu_i}$.
      \item (Averaging: Proposition~2.6.1 in \citealp{vershynin2018high}) $\norml[\psi_2]{\sum_{i=1}^n \bu_i}^2 \leq C \sum_{i=1}^n \norml[\psi_2]{\bu_i}^2$.
      \item (Centering; Lemma~2.6.8 in \citealp{vershynin2018high}) $\norm[\psi_2]{\bu_1 - \EE{\bu_1}} \leq C \norm[\psi_2]{\bu_1}$.
      \item (Lipschitzness)
      For any $1$-Lipschitz function $\phi \colon \R \to \R$ with $\phi(0) = 0$, $\norm[\psi_2]{\phi(\bu_1)} \leq \norm[\psi_2]{\bu_1}$.
  \end{enumerate}
\end{lemma}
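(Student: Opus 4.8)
The plan is to observe that items (i)--(iv) are restatements of textbook facts, with the locations in \citet{vershynin2018high} already indicated, so the only actual work is (a) to check that the paper's normalization of the Orlicz norm, $\norm[\psi_2]{\bu} = \inf\setl{t \ge 0 : \EEl{\exp(\bu^2/t^2)} \le 2}$, agrees up to universal constants with the quantities used there (any discrepancy being absorbed into $C$), and (b) to supply the one-line argument for (v), which is not in the references. For concreteness I would record the short derivations of (i) and (ii) directly. For (i): Markov's inequality applied to the increasing map $z \mapsto \exp(z^2/\norm[\psi_2]{\bu_1}^2)$ gives $\prl{\absl{\bu_1} \ge t} \le \EEl{\exp(\bu_1^2/\norm[\psi_2]{\bu_1}^2)}\,\exp(-t^2/\norm[\psi_2]{\bu_1}^2) \le 2\exp(-t^2/\norm[\psi_2]{\bu_1}^2)$, which is of the claimed form. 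For (ii): put $K := \max_{i\in[n]}\norml[\psi_2]{\bu_i}$; then by Jensen and a union bound, $\exp(\EEl{\max_i \bu_i^2}/K^2) \le \EEl{\max_i \exp(\bu_i^2/K^2)} \le \sum_{i=1}^n \EEl{\exp(\bu_i^2/\norm[\psi_2]{\bu_i}^2)} \le 2n$, so $\EEl{\max_i \bu_i^2} \le K^2\ln(2n)$, and another application of Jensen yields $\EEl{\max_i \absl{\bu_i}} \le K\sqrt{\ln(2n)} \le C\sqrt{\ln n}\,K$ for a universal $C$ (with the $n=1$ case folded into $C$, exactly as in the cited exercise).

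For (iii) and (iv) I would simply invoke the references: (iii) is Proposition~2.6.1 of \citet{vershynin2018high}, obtained from the moment-generating-function characterization of subgaussianity together with independence of the $\bu_i$; (iv) is Lemma~2.6.8 there, obtained from the triangle inequality for $\norm[\psi_2]{\cdot}$ and the elementary bounds $\absl{\EEl{\bu_1}} \le \EEl{\absl{\bu_1}} \le C'\norm[\psi_2]{\bu_1}$ (the last of which itself follows from (i) by integrating the tail). For (v), the only genuinely new item: since $\phi$ is $1$-Lipschitz with $\phi(0)=0$, we have $\absl{\phi(u)} = \absl{\phi(u) - \phi(0)} \le \absl{u}$ for every $u \in \R$, hence $\phi(\bu_1)^2 \le \bu_1^2$ almost surely, hence $\exp(\phi(\bu_1)^2/t^2) \le \exp(\bu_1^2/t^2)$ and therefore $\EEl{\exp(\phi(\bu_1)^2/t^2)} \le \EEl{\exp(\bu_1^2/t^2)}$ for every $t > 0$. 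Thus every $t$ feasible in the infimum defining $\norm[\psi_2]{\bu_1}$ is also feasible in the one defining $\norm[\psi_2]{\phi(\bu_1)}$, so $\norm[\psi_2]{\phi(\bu_1)} \le \norm[\psi_2]{\bu_1}$.

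There is essentially no obstacle here, as this is a convenience lemma collecting standard concentration facts used throughout the appendices; the only point requiring mild care is exhibiting a single universal constant $C$ that makes all five claims hold simultaneously for all $n \ge 1$, which is routine since each individual bound already holds with an absolute constant and one may take the maximum.
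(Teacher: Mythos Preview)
Your proposal is correct and takes essentially the same approach as the paper: the paper also proves only (v), citing \citet{vershynin2018high} for (i)--(iv), and its argument for (v) is exactly your Lipschitz-monotonicity chain $\absl{\phi(\bu_1)} \le \absl{\bu_1} \Rightarrow \EEl{\exp(\phi(\bu_1)^2/t^2)} \le \EEl{\exp(\bu_1^2/t^2)}$. Your additional short derivations for (i) and (ii) are fine but go beyond what the paper records.
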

\begin{proof}
  The only property not already proved in \citep{vershynin2018high} is (v).
  Since $\phi$ is $1$-Lipschitz and $\phi(0) = 0$,
  \[
    \absl{\phi(\bu_1)} = \absl{\phi(\bu_1) - \phi(0)} \leq \absl{\bu_1 - 0} = \absl{\bu_1} .
  \]
  Hence
  \[
    \EE{\exp(\phi(\bu_1)^2/\norml[\psi_2]{\bu_1}^2)} \leq  \EE{\exp(\bu_1^2/\norml[\psi_2]{\bu_1}^2)} \leq 2 ,
  \]
  which implies $\norml[\psi_2]{\phi(\bu_1)} \leq \norml[\psi_2]{\bu_1}$.
\end{proof}

\begin{lemma}[Singular values of random matrices; Theorem~4.6.1 in \citealp{vershynin2018high}]
  \label{lemma:random-matrix-singular-values}
  There is a positive constant $C>0$ such that the following holds.
  Let $\bA$ be a $m \times n$ random matrix whose rows are independent, mean-zero, $v$-subgaussian random vectors in $\R^n$.
  For any $t\geq0$, with probability at least $1-2e^{-t}$, the singular values $\sigma_1(\bA), \sigma_2(\bA), \dotsc, \sigma_n(\bA)$ of $\bA$ satisfy
  \begin{equation*}
    \sqrt{m} - C v (\sqrt{n} + \sqrt{t}) \leq \sigma_i(\bA) \leq \sqrt{m} + Cv (\sqrt{n} + \sqrt{t}) \quad \text{for all $i \in [n]$} .
  \end{equation*}
\end{lemma}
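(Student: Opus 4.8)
The plan is to prove the equivalent statement that $\frac1m \bA^\T \bA$ concentrates around the identity $I_n$ in operator norm, and then translate this into the two-sided bound on the singular values. The reduction is deterministic linear algebra: since $\sigma_i(\bA)^2$ are the eigenvalues of $\bA^\T\bA$, if $\norm[\mathrm{op}]{\frac1m\bA^\T\bA - I_n} \leq \max\{\delta,\delta^2\}$ for some $\delta \geq 0$, then $\absl{\sigma_i(\bA)^2/m - 1} \leq \max\{\delta,\delta^2\}$ for every $i$, and an elementary manipulation gives $\sqrt{m}(1-\delta) \leq \sigma_i(\bA) \leq \sqrt{m}(1+\delta)$. (Here we use, as in the cited reference, that the rows are isotropic, so that $\EEl{\bA_i \bA_i^\T} = I_n$ and hence $\EEl{\frac1m\bA^\T\bA} = I_n$.) So it suffices to show that with probability at least $1 - 2e^{-t}$ one has $\norm[\mathrm{op}]{\frac1m\bA^\T\bA - I_n} \leq \max\{\delta,\delta^2\}$ with $\delta = C'v(\sqrt{n}+\sqrt{t})/\sqrt{m}$.

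Next I would pass to an $\varepsilon$-net of the sphere. Fix $\varepsilon = 1/4$ and take a net $\mathcal{N} \subseteq \mathbb{S}^{n-1}$ with $\absl{\mathcal{N}} \leq 9^n$. A standard covering estimate gives $\norm[\mathrm{op}]{M} \leq 2\max_{x \in \mathcal{N}}\absl{\innerprodl{Mx}{x}}$ for symmetric $M$, so it is enough to control $\absl{\frac1m\norml[2]{\bA x}^2 - 1}$ for each fixed $x$ in the net. For such $x$, write $\norml[2]{\bA x}^2 = \sum_{i=1}^m \innerprodl{\bA_i}{x}^2$, where $\bA_1,\dotsc,\bA_m$ are the independent rows; each $\innerprodl{\bA_i}{x}$ is a mean-zero $v$-subgaussian scalar with $\EEl{\innerprodl{\bA_i}{x}^2} = 1$, so $\innerprodl{\bA_i}{x}^2 - 1$ is centered and subexponential with subexponential norm $O(v^2)$ (the square of a subgaussian is subexponential). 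Bernstein's inequality for sums of independent subexponential variables then yields, for $u \geq 0$,
\[
  \pr{ \absl{\tfrac1m\norml[2]{\bA x}^2 - 1} \geq u } \leq 2\exp\!\parenl{-c\,m\min\{u^2/v^4,\, u/v^2\}} .
\]

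Finally I would union-bound over $\mathcal{N}$. Choosing $u$ so that $c\,m\min\{u^2/v^4,\,u/v^2\} = n\ln 9 + t$ makes the total failure probability at most $2e^{-t}$, and solving for $u$ gives $u \leq \max\{\delta_0,\delta_0^2\}$ with $\delta_0 = C''v(\sqrt{n}+\sqrt{t})/\sqrt{m}$; combined with the factor $2$ lost to the net this gives $\norm[\mathrm{op}]{\frac1m\bA^\T\bA - I_n} \leq \max\{\delta,\delta^2\}$ for $\delta = 2C''v(\sqrt{n}+\sqrt{t})/\sqrt{m}$, and the linear-algebra reduction from the first paragraph closes the argument with $C := 2C''$. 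The main technical care is the bookkeeping between the quadratic and linear regimes of Bernstein's bound --- tracking which of $u^2/v^4$ and $u/v^2$ dominates and verifying this yields exactly the $\max\{\delta,\delta^2\}$ form demanded by the reduction, rather than a weaker bound; the remaining steps are routine.
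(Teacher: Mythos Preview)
Your proposal is correct and follows exactly the standard argument from the cited reference (Vershynin, Theorem~4.6.1): reduce to concentration of $\frac1m\bA^\T\bA$ around $I_n$, discretize the sphere with a $\frac14$-net, apply Bernstein's inequality for subexponential sums to each net point, and union-bound. The paper itself does not supply a proof of this lemma; it is quoted as a black box from Vershynin's textbook, so there is no ``paper's own proof'' to compare against beyond the reference you already reproduce.

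One small remark: you correctly note that the reduction $\EEl{\frac1m\bA^\T\bA} = I_n$ requires the rows to be isotropic, an assumption present in Vershynin's Theorem~4.6.1 but not explicitly stated in the paper's lemma. The paper's two applications of the lemma (in Lemma~\ref{lemma:concentration-sampled-parity} and Lemma~\ref{lemma:small-sample-parity-approx}) both have rows drawn from $\unif(\flip^d)$ or $\unif(\flip^{d-1})$, which are isotropic, so the omission is harmless in context---but you are right to flag it.
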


\begin{lemma}[Moment generating function for $\unif(\flip)$;  Lemma~2.2 in \citealp{blm13}]
  \label{lemma:mgf-flip}
  If $\bu \sim \unif(\flip)$, then $\EEl{ \exp(t \bu) } \leq \exp(t^2/2)$ for all $t \in \R$.
\end{lemma}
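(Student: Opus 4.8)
The plan is to compute the moment generating function exactly and then dominate it by the Gaussian moment generating function $e^{t^2/2}$ via an elementary power-series comparison. First I would note that since $\bu \sim \unif(\flip)$ takes the values $+1$ and $-1$ with probability $\frac12$ each, we have $\EEl{\exp(t\bu)} = \frac12(e^{t} + e^{-t}) = \cosh t$. Thus the claim reduces to the scalar inequality $\cosh t \le e^{t^2/2}$ for all $t \in \R$.

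To establish this inequality, I would expand both sides in their (everywhere-convergent) Taylor series, $\cosh t = \sum_{k \ge 0} t^{2k}/(2k)!$ and $e^{t^2/2} = \sum_{k \ge 0} t^{2k}/(2^k k!)$. Because both series involve only even powers of $t$, every term is nonnegative, so it suffices to compare coefficients term by term, i.e.\ to show $(2k)! \ge 2^k k!$ for every $k \ge 0$. This follows since $(2k)!/k! = \prod_{j=1}^{k}(k+j)$ is a product of $k$ integers, each at least $2$ when $k \ge 1$, hence at least $2^k$; the case $k = 0$ is trivial. Summing the term-wise inequalities gives $\cosh t \le e^{t^2/2}$, and hence $\EEl{\exp(t\bu)} \le \exp(t^2/2)$.

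There is essentially no obstacle here — the argument is a routine calculation — and the only point requiring a small amount of care is the justification of the term-by-term coefficient comparison, which is licensed by the absolute convergence and nonnegativity of all terms. As a backup route I would instead set $\psi(t) := \ln \cosh t$, observe that $\psi(0) = \psi'(0) = 0$ and $\psi''(t) = \operatorname{sech}^2 t \le 1$ for all $t$, and conclude $\psi(t) \le t^2/2$ by Taylor's theorem with Lagrange remainder, which yields the same bound after exponentiating.
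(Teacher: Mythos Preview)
Your proof is correct and is exactly the standard argument (including the backup via $\psi'' = \operatorname{sech}^2 t \le 1$). Note, however, that the paper does not actually prove this lemma: it is stated in the preliminaries as a citation to Lemma~2.2 of \citet{blm13}, with no proof given. Your write-up matches the textbook derivation that reference points to.
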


\subsection{Covering numbers}
Let $\mathcal{N}(\epsilon, A, \gamma)$ denote the \textit{covering number} over a metric space $A$ with metric $\gamma:A\times A \to \R_{+}$.
That is, $\mathcal{N}(\epsilon, A, d) = \min \absl{\mathcal{N}_\epsilon}$ for $\mathcal{N}_\epsilon \subset A$ such that for all $x \in A$, there exists $x' \in \mathcal{N}_\epsilon$ with $\gamma(x, x') \leq \epsilon$.
Note that $\mathcal{N}(\epsilon, [-1,1], \abs{\cdot}) \leq \frac{2}\epsilon$.

\begin{lemma}[Covering numbers of $\sph$; Corollary~4.2.13 in \citealp{vershynin2018high}]\label{lemma:covering}
  For any $\epsilon > 0$, $\mathcal{N}(\epsilon, \sph, \norm[2]{\cdot}) \leq (\frac2\epsilon + 1)^d$.
  If $\epsilon \in [0,1]$, then $\mathcal{N}(\epsilon, \sph, \norm[2]{\cdot}) \leq (\frac3\epsilon)^d$.
\end{lemma}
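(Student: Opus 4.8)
The plan is to prove this via the standard volumetric packing–covering argument. First I would fix $\epsilon > 0$ and let $v_1, \dotsc, v_N \in \sph$ be the points of a \emph{maximal} $\epsilon$-separated subset of the sphere, meaning $\norm[2]{v_i - v_j} > \epsilon$ for all $i \neq j$ and no further point of $\sph$ can be adjoined without destroying this property. Such a maximal set exists and is finite, either by compactness of $\sph$ or a posteriori from the volume bound derived below. Maximality forces this set to be an $\epsilon$-net: for any $x \in \sph$, if we had $\norm[2]{x - v_i} > \epsilon$ for every $i$, then $\setl{v_1, \dotsc, v_N, x}$ would still be $\epsilon$-separated, contradicting maximality; hence some $v_i$ satisfies $\norm[2]{x - v_i} \leq \epsilon$. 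Therefore $\mathcal{N}(\epsilon, \sph, \norm[2]{\cdot}) \leq N$, and it remains only to bound $N$.

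Next I would pass to disjoint small balls around the $v_i$. Set $B_i := \setl{ y \in \R^d : \norm[2]{y - v_i} < \epsilon/2 }$. If some $y \in B_i \cap B_j$ with $i \neq j$, then the triangle inequality gives $\norm[2]{v_i - v_j} \leq \norm[2]{v_i - y} + \norm[2]{y - v_j} < \epsilon$, contradicting the separation property; so the $B_i$ are pairwise disjoint. Moreover, since $\norm[2]{v_i} = 1$, every $y \in B_i$ has $\norm[2]{y} \leq \norm[2]{y - v_i} + \norm[2]{v_i} < 1 + \epsilon/2$, so $\bigcup_{i=1}^N B_i \subseteq (1 + \epsilon/2)\ball$.

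Then I would compare $d$-dimensional Lebesgue volumes. Writing $\Leb(\ball)$ for the volume of the unit Euclidean ball and using $\Leb(B_i) = (\epsilon/2)^d \Leb(\ball)$ together with $\Leb((1+\epsilon/2)\ball) = (1+\epsilon/2)^d \Leb(\ball)$, disjointness and containment yield
\[
  N \left(\tfrac{\epsilon}{2}\right)^{\!d} \Leb(\ball) \;=\; \sum_{i=1}^N \Leb(B_i) \;\leq\; \Leb\!\left((1+\tfrac{\epsilon}{2})\ball\right) \;=\; \left(1+\tfrac{\epsilon}{2}\right)^{\!d} \Leb(\ball) .
\]
Dividing through by $(\epsilon/2)^d \Leb(\ball) > 0$ gives $N \leq \bigl((1+\epsilon/2)/(\epsilon/2)\bigr)^d = (2/\epsilon + 1)^d$, which is the first bound. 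For the second, if $\epsilon \in (0,1]$ then $1 \leq 1/\epsilon$, so $2/\epsilon + 1 \leq 3/\epsilon$, whence $\mathcal{N}(\epsilon, \sph, \norm[2]{\cdot}) \leq N \leq (3/\epsilon)^d$.

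This argument is entirely routine and I do not anticipate any real obstacle; the only two points warranting a sentence of care are (i) the existence and finiteness of a maximal $\epsilon$-separated subset of $\sph$, which follows from compactness, and (ii) the observation that a maximal $\epsilon$-separated set automatically forms an $\epsilon$-net, which is the key step turning a packing into a cover. Both are standard, and the result is exactly Corollary~4.2.13 of \citet{vershynin2018high}.
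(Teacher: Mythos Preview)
Your argument is correct and is exactly the standard volumetric packing--covering proof that underlies Corollary~4.2.13 in \citet{vershynin2018high}. The paper itself gives no proof of this lemma---it simply cites Vershynin---so there is nothing to compare against beyond noting that your write-up reproduces that textbook argument.
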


 \section{Additional properties of \Rnorm}\label{asec:rnorm}

\subsection{Existence and sparsity of the solution to \ref{interpolating-r-norm}}

\begin{proposition}[Lemma 2 in \citealp{pn21b}]
\label{prop:integral-representation}
For any $g: \domain \to \R$ with $\rnorm{g} < \infty$, there exists an even Radon measure\footnote{Evenness of $\mu$ should interpreted in the distributional sense, but it roughly means $\mu(w,b) = \mu(-w,-b)$ when $\mu$ has a density.} $\mu$ over $\sph \times [-\sqrt{d},\sqrt{d}]$, and $v \in \R^d, c\in \R$, such that $g$ admits an integral of the form
$$ 
g(x) = \int_{\sph \times [-\sqrt{d} \times \sqrt{d}]}{ \relu(w^\T x +b) \, \mu(\dd w,\dd b) } + v^\T x + c \quad \forall x \in \domain.
$$
Moreover, $\mu$ attains the \eqref{r-norm}, i.e., $\rnorm{g} = \abs{\mu}$.
\end{proposition}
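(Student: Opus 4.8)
The plan is a direct-method argument in the space of finite signed Radon measures, followed by a symmetrization step to force evenness. The key steps, in order, are: (1) extract a weak-$*$ limit of a minimizing sequence of measures; (2) use weak-$*$ lower semicontinuity of the total-variation norm; (3) recover a limit of the affine components and verify the limiting triple is still feasible; and (4) symmetrize the attaining measure without increasing its total variation.

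In more detail: since $\rnorm{g} < \infty$, I would fix a minimizing sequence of feasible triples $(\mu_k, v_k, c_k)$, with $g(x) = g_{\mu_k}(x) + v_k^\T x + c_k$ on $\domain$ and $\abs{\mu_k} \to \rnorm{g}$, and assume without loss of generality $\abs{\mu_k} \le \rnorm{g} + 1$. The feature set $K := \sph \times [-\sqrt d, \sqrt d]$ is a compact metric space, so $\measures$ is the dual of the separable Banach space $C(K)$ (Riesz--Markov); by Banach--Alaoglu the bounded sequence $(\mu_k)$ has a weak-$*$ convergent subsequence $\mu_{k_j} \rightharpoonup \mu$, and weak-$*$ lower semicontinuity of $\abs{\cdot}$ gives $\abs{\mu} \le \liminf_j \abs{\mu_{k_j}} = \rnorm{g}$.

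Next I would control the affine parts. For $x \in \domain$ and $(w,b) \in K$ we have $\abs{w^\T x + b} \le 2\sqrt d$, so $\sup_{x \in \domain} \abs{g_{\mu_k}(x)} \le 2\sqrt d\,\abs{\mu_k} \le 2\sqrt d(\rnorm{g}+1)$ uniformly in $k$; since $g$ is itself bounded on $\domain$ (it equals $g_{\mu_0} + v_0^\T(\cdot) + c_0$ for some fixed feasible triple), the affine maps $x \mapsto v_k^\T x + c_k = g(x) - g_{\mu_k}(x)$ are uniformly bounded on $\domain$, and evaluating at $x = 0$ and $x = \pm\sqrt d\, e_i$ bounds $(v_k, c_k)$ in $\R^{d+1}$. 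Along a further subsequence, $v_{k_j} \to v$ and $c_{k_j} \to c$. For each fixed $x$, the map $(w,b) \mapsto \relu(w^\T x + b)$ lies in $C(K)$, so weak-$*$ convergence yields $g_{\mu_{k_j}}(x) \to g_\mu(x)$ pointwise; hence $g(x) = g_\mu(x) + v^\T x + c$ on $\domain$. Thus $(\mu, v, c)$ is feasible, and with $\abs{\mu} \le \rnorm{g}$ and the definition of $\rnorm{g}$ as an infimum we get $\abs{\mu} = \rnorm{g}$.

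Finally, for evenness I would symmetrize. Let $\mu^-$ be the reflection of $\mu$ through the origin, i.e., $\mu^-(A) := \mu(\{(-w,-b) : (w,b) \in A\})$, and set $\tilde\mu := \tfrac12(\mu + \mu^-)$, which is even and has $\abs{\tilde\mu} \le \tfrac12\abs{\mu} + \tfrac12\abs{\mu^-} = \abs{\mu}$. Using $\relu(-z) = \relu(z) - z$, a short computation gives $g_{\mu^-}(x) = g_\mu(x) - \bar w^\T x - \bar b$ with $\bar w := \int_K w\, \mu(\dd w, \dd b)$ and $\bar b := \int_K b\, \mu(\dd w, \dd b)$, so $g_{\tilde\mu}(x) = g_\mu(x) - \tfrac12(\bar w^\T x + \bar b)$ and therefore $g(x) = g_{\tilde\mu}(x) + (v + \tfrac12\bar w)^\T x + (c + \tfrac12\bar b)$ on $\domain$. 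This exhibits an even measure $\tilde\mu$, together with a feasible affine part, having $\abs{\tilde\mu} \le \abs{\mu} = \rnorm{g}$, hence equal to $\rnorm{g}$. I expect the only slightly delicate point to be the uniform-boundedness argument needed to extract a limit of the affine components in step (3); the remaining ingredients are routine functional analysis and elementary algebra.
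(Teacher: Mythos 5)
Your proposal is correct. Note, however, that the paper does not prove Proposition~\ref{prop:integral-representation} at all: it is imported verbatim as Lemma~2 of \citet{pn21b}, so there is no in-paper argument to compare against. What you have written is a clean, self-contained direct-method proof: weak-$*$ sequential compactness of the TV-ball in $\measures = C(K)^*$ for the compact metric space $K = \sph\times[-\sqrt d,\sqrt d]$, lower semicontinuity of $\abs{\cdot}$, pointwise weak-$*$ continuity of $\mu\mapsto g_\mu(x)$ against the continuous test functions $(w,b)\mapsto\relu(w^\T x+b)$, and a separate compactness argument for the affine parts via evaluation at $0,\sqrt d\,e_1,\dotsc,\sqrt d\,e_d$. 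The symmetrization step is also right: the identity $\relu(-z)=\relu(z)-z$ shows the reflected measure differs from $\mu$ only by an affine correction, which can be absorbed into $(v,c)$, and $\abs{\tilde\mu}\le\abs\mu$ combined with feasibility forces equality. This is arguably more elementary than the route in the cited reference, which derives attainment from representer-theorem machinery for the Radon-domain total-variation seminorm; your argument needs only Riesz--Markov, Banach--Alaoglu, and the symmetry of $K$ under $(w,b)\mapsto(-w,-b)$. The one point you flagged as delicate---uniform boundedness of $(v_k,c_k)$---is handled correctly, since $\sup_{\domain}\absl{g_{\mu_k}}\le 2\sqrt d\,\absl{\mu_k}$ and $g$ itself is bounded on $\domain$ by the existence of a single feasible triple.
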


The following theorem of \citet{pn21b} formalizes the fact that the \Rnorm-minimizing interpolant of a $d$-dimensional dataset can be represented as a finite-width neural network.
\begin{theorem}\label{thm:sparse-vp-solution}
    For any dataset $\{ (x_i,y_i)\}_{i \in [n]}$ from $\domain \times \R$,
    the infimum in \eqref{interpolating-r-norm} is achieved by the sum of an affine function
    $x \mapsto v^\T x + c$ and
    a finite-width neural network $g$ of the form
    $$g(x) = \sum_{j=1}^{m}{a^{(j)} \relu(w^{(j)} x + b^{(j)})} + v^\T x + c,$$
    with $m \le \max\{0, n - (d+1)\}$ and $(w_j,b_j) \in \sph \times [-\sqrt{d},\sqrt{d}]$ for all $i \in [m]$. 
\end{theorem}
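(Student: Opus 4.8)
The plan is to rewrite \eqref{interpolating-r-norm} as a norm–minimization over representing measures and then sparsify an optimal measure by a Caratheodory-type argument. First, unfolding the definition of the \Rnorm inside \eqref{interpolating-r-norm}, the value of \eqref{interpolating-r-norm} equals $\inf\{\abs{\mu} : g_\mu(x_i) + v^\T x_i + c = y_i \text{ for all } i\in[n]\}$ over $\mu\in\measures$, $v\in\R^d$, $c\in\R$: indeed, for any interpolant $g$ one has $\rnorm{g} = \inf\{\abs{\mu} : g = g_\mu + v^\T x + c \text{ on }\domain\}$ by definition, and conversely every such triple $(\mu,v,c)$ defines an interpolant $g_\mu + v^\T x + c$ of \Rnorm at most $\abs{\mu}$. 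So it suffices to produce a triple $(\mu^\star,v^\star,c^\star)$ achieving this infimum with $\mu^\star$ a sum of at most $\max\{0,n-(d+1)\}$ point masses; the function $g_{\mu^\star} + v^{\star\T}x + c^\star$ then solves \eqref{interpolating-r-norm} and has the claimed form, which also shows the infimum is attained.

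The sparsification is a Caratheodory-type reduction for an infinite-dimensional linear program with $n$ equality constraints (one per data point), objective $\abs{\mu}$, and $d+1$ \emph{unpenalized} affine coordinates $(v,c)$. First I would Jordan-decompose $\mu = \mu_+ - \mu_-$ and encode it as a single nonnegative measure $\pi$ on the augmented compact domain $(\sph \times [-\sqrt d,\sqrt d]) \times \{\pm 1\}$ through the feature $(w,b,\sigma)\mapsto \sigma\,\relu(w^\T(\cdot)+b)$, so that $\abs{\mu}$ equals the total mass of $\pi$. The $n$ interpolation constraints become $n$ linear functionals of $\pi$ (shifted by the affine contribution); projecting out the (at most $d+1$)-dimensional span of the affine functions restricted to the data leaves a conic feasibility problem with at most $n-(d+1)$ linear constraints on $\pi$ (when $n\le d+1$ the affine span already suffices and $m=0$). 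A Caratheodory-type theorem for conic combinations over a compact feature set---in the style of \citet{rosset2007}---then yields a nonnegative atomic measure $\pi'$ supported on at most $n-(d+1)$ atoms, with total mass no larger than that of $\pi$, satisfying the same reduced constraints; recovering $(v',c')$ from the eliminated equations gives $(\mu',v',c')$ with $\mu'$ a sum of $\le \max\{0,n-(d+1)\}$ point masses, i.e. $g_{\mu'}$ is a width-$m$ network with $m \le \max\{0,n-(d+1)\}$.

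Finally, $g' := g_{\mu'} + v'^\T x + c'$ interpolates the dataset, so it is feasible for \eqref{interpolating-r-norm} and $\rnorm{g'}$ is at least the optimal value; at the same time $\rnorm{g'} \le \abs{\mu'} = \mathrm{mass}(\pi') \le \mathrm{mass}(\pi) = \abs{\mu}$ equals the optimal value. Hence $g'$ attains the infimum and has the asserted form. I expect the main obstacle to be making the conic Caratheodory step fully rigorous for signed measures on the infinite-dimensional parameter space: one must sparsify \emph{without increasing} the total variation---the naive device of appending a ``total mass'' coordinate would cost one extra atom, so one should use the conic (rather than affine) version of Caratheodory, which recovers exactly that atom, or instead argue via extreme points of the weak-$*$ compact set of representing measures of bounded mass---and one must justify the existence of a minimizer, again via weak-$*$ compactness of bounded measures over the compact parameter domain (or, once sparsity is in hand, by minimizing $\norm[1]{a}$ directly over the resulting finite-dimensional, and after bounding $\norm[1]{a}$ compact, parameter space).
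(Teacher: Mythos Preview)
Your approach is genuinely different from the paper's. The paper's proof outsources the hard part entirely: it cites Theorem~5 of \citet{pn21b} (and Theorem~1 of \citet{pn21a}) to obtain directly a width-$m'$ network plus an affine piece achieving the infimum with $m' \le n-(d+1)$, and then spends its only paragraph folding any neurons whose biases fall outside $[-\sqrt d,\sqrt d]$ back into the affine component. Your route instead reproves that representer-theorem content from scratch via Jordan decomposition and a conic Caratheodory/extreme-point argument on the moment map $(w,b,\sigma)\mapsto(\sigma\,\relu(w^\T x_i+b))_{i\in[n]}$. A nice byproduct is that because you work inside $\measures$ from the outset, your atoms automatically satisfy $(w^{(j)},b^{(j)})\in\sph\times[-\sqrt d,\sqrt d]$, so you do not need the paper's bias-cleanup step at all. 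You also correctly flag the two genuine technicalities (existence of a minimizing measure via weak-$*$ compactness, and sparsifying without increasing total mass via the conic---not affine---Caratheodory or, equivalently, the extreme-point structure of the infinite-dimensional LP).

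One small slip to watch: you write that projecting out the ``at most $d+1$''-dimensional span of affine functions leaves ``at most $n-(d+1)$'' constraints on $\pi$. The inequality goes the other way: if the affine span restricted to the data has dimension $r\le d+1$, the orthogonal complement has dimension $n-r\ge n-(d+1)$, so your Caratheodory step yields $m\le n-r$, which can exceed $n-(d+1)$ when the data are degenerate (e.g., collinear in $\R^2$). This is not a flaw in your method so much as a caveat about the theorem as stated; the paper's cited representer theorem presumably handles this, and for the datasets used in the paper the augmented design $[X\ \vec1]$ has full rank almost surely.
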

\begin{proof}
By Theorem~5 of \citet{pn21b} (see also the proof of Theorem~1 of \citet{pn21a} which covers the interpolation form of the optimization problem), there exists a neural network $x \mapsto \sum_{j=1}^{m'} a^{(j)} \relu(w^{(j)\T} x + b^{(j)})$ of width $m' \le n-(d+1)$, and an affine function $x \mapsto v^{(0)\T} x + c^{(0)}$, such that their sum achieves the infimum in \eqref{interpolating-r-norm}.
We can divide neurons of the neural network into two sets based on whether their corresponding bias term is smaller or larger than $\sqrt{d}$ in absolute value.
Since every $x \in \domain$ satisfies $\norml[2]{x} \leq \sqrt{d}$,
without loss of generality (by possibly flipping the sign of some $a^{(j)}$ and $w^{(j)}$), assume the first $m$ neurons satisfy $\abs{b^{(j)}} \le \sqrt{d}$ and the rest satisfy $b^{(j)} > \sqrt{d}$. Then we have
\begin{align*}
g(x) &= \sum_{j=1}^{m}{a^{(j)} \relu(w^{(j)\T} x + b^{(j)})} + \sum_{j=m+1}^{m'}{a^{(j)} \relu(w^{(j)\T} x + b^{(j)})} + v^{(0)\T} x + c^{(0)} \\
&= \sum_{j=1}^{m}{a^{(j)} \relu(w^{(j)\T} x + b^{(j)})} + \sum_{j=m+1}^{m'}{a^{(j)} (w^{(j)\T} x + b^{(j)})} + v^{(0)\T} x + c^{(0)} \\
&= \sum_{j=1}^{m}{a^{(j)} \relu(w^{(j)\T} x + b^{(j)})} + \Bigl( \underbrace{v^{(0)}  + \sum_{j=m+1}^{m'}{a^{(j)}w^{(j)}}}_{=: v} \Bigr)^\T x + \underbrace{\sum_{j=m+1}^{m'}{b^{(j)}} + c^{(0)}}_{=: c}.
\end{align*}
Therefore, $g$ has the desired form with $m \leq m' \leq n - (d+1)$.
\end{proof}
 \section{Extending our results to a different variational norm}\label{asec:vnorm}

\newcommand{\Vnorm}{$\mathcal{V}_2$-norm\xspace}

This paper considers the approximation and generalization implications of bounding the complexity of shallow neural networks with the \Rnorm.
However, \Rnorm is not the only weight-based complexity measurement, and other works employ slightly different norms for similar purposes.
This appendix demonstrates that our results are not peculiarities of our formulation of \Rnorm and extend to other variational norms.
One alternative---which we refer to as the \textit{\Vnorm}---omits the linear component of the neural network whose measure determines the \Rnorm  and instead permits ReLU neurons whose thresholds lie outside the domain $\domain$.

We first introduce notation for an infinite-width neural network that permits such thresholds.
Let $\vmeasures$ denote the space of probability measures over $\sph \times [-2\sqrt{d},2\sqrt{d}]$. 
For some measure $\tilde{\mu} \in \vmeasures$, let $\tilde{g}_\mu: \domain \to \R$ be an infinite-width neural network with
\[\tilde{g}_{\tilde\mu}(x) = \int_{\sph \times [-2\sqrt{d},2\sqrt{d}]} \relu(w^\T x + b) \, \tilde\mu(\dd w, \dd b),\]
which has total variation norm $\abs{\tilde\mu} = \int_{\sph \times [-2\sqrt{d},2\sqrt{d}]} \abs{\tilde\mu}(\dd w, \dd b)$.
Now, we introduce the \Vnorm for some $g: \domain \to \R$:
\begin{equation}
    \label{v-norm}
    \tag{\Vnorm}
    \vnorm{g} = \inf_{\tilde\mu \in \vmeasures}{\abs{\tilde\mu}} \quad \text{s.t.} \quad g(x) = \tilde{g}_{\tilde\mu}(x), \quad \forall x \in \domain.
\end{equation}
In the same spirit as Lemma \ref{lemma:rnorm-relu-l1}, for a discrete network $g(x) = g_{\theta}(x)$ with $\theta = (a^{(j)}, w^{(j)}, b^{(j)})_{j \in [m]} \in (\R \times \sph \times [-2\sqrt{d}, 2\sqrt{d}])^m$, we have $\vnorm{g} \leq \norm[1]{a}$.

Our definition of the \Vnorm was introduced by \citet{sx21} as the norm corresponding to their variation space $\mathbb{P}_1$ with constants $c_1 = -2\sqrt{d}$ and $c_2 = 2\sqrt{d}$.
They relate the \Vnorm to the \emph{Barron norm} of \citet{emw19} and the Radon norm of \citet{owss19}.
We show that the \Vnorm and the \Rnorm are closely related and that all of our bounds apply equivalently to the \Vnorm.
We first place upper and lower bounds on the $\vnorm{g}$ in terms of $\rnorm{g}$ and then explain why each of our results transfers to this new variational norm.

\begin{theorem}\label{thm:vr-norm}
    Suppose $g: \domain \to \R$ has $\rnorm{g} < \infty$. Then, $\rnorm{g} \leq \vnorm{g}$.
If $g$ is bounded near the origin (i.e., $\abs{g(x)} \leq K$ for all $x$ with $\norm[2]{x} \leq 1$), then $\vnorm{g} \leq 12\rnorm{g} + 18K$.
\end{theorem}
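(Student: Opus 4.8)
The plan is to prove the two inequalities separately. For the easy direction $\rnorm{g} \le \vnorm{g}$: given any measure $\tilde\mu \in \vmeasures$ with $\tilde g_{\tilde\mu} = g$ on $\domain$, I would split the support into neurons with $\abs{b} \le \sqrt d$ and neurons with $\sqrt d < \abs b \le 2\sqrt d$. The former are already legal for the \Rnorm (their contribution becomes a piece of $\mu$), and each of the latter acts as an \emph{affine} function on $\domain$ (since $\relu(w^\T x + b) = w^\T x + b$ when $b > \sqrt d \ge \norm[2]{x} \ge \abs{w^\T x}$, and analogously $\relu(w^\T x+b) = 0$ when $b < -\sqrt d$), so it can be absorbed into the free affine component $v^\T x + c$ at zero cost. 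This yields a feasible point for the \Rnorm program with $\abs\mu \le \abs{\tilde\mu}$; taking the infimum over $\tilde\mu$ gives $\rnorm{g} \le \vnorm{g}$. (By Proposition~\ref{prop:integral-representation} the \Rnorm infimum is attained, so there is no subtlety about whether the bound is vacuous.)

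For the harder direction, I would start from an optimal \Rnorm representation $g(x) = g_\mu(x) + v^\T x + c$ with $\abs\mu = \rnorm{g}$, guaranteed by Proposition~\ref{prop:integral-representation}, and show how to re-express the affine part $v^\T x + c$ — and the ReLU part $g_\mu$ — using only neurons with thresholds in $[-2\sqrt d, 2\sqrt d]$. The standard trick is that a linear function $t \mapsto t$ on $[-1,1]$ can be written as $\relu(t + 1) - \relu(-t + 1) - 1$, and more generally any affine function on $\domain$ of the form $v^\T x + c$ can be built from a constant number of ReLUs $\relu(\pm \frac{v}{\norm[2]{v}}^\T x + b)$ with $\abs b$ roughly in $[\sqrt d, 2\sqrt d]$, plus a leftover constant; the cost is $O(\norm[2]{v} + \abs c)$ times an absolute constant. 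I would then bound $\norm[2]{v}$ and $\abs c$ in terms of $\rnorm g$ and the value $K$: since $g(0) = g_\mu(0) + c$ and $g_\mu(0) = \int \relu(b)\,\mu(\dd w,\dd b)$ has magnitude at most $\sqrt d\,\abs\mu$, and similarly evaluating $g$ at a few points near the origin controls $v$, we get $\abs c \le K + O(\sqrt d\,\rnorm g)$ and $\norm[2]{v} \le O(K + \sqrt d\,\rnorm g)$ — but a naive bound here introduces a $\sqrt d$ factor, which is too lossy. The fix is to instead use the bias-corrected network $\bar g_\mu(x) = g_\mu(x) - g_\mu(0)$ (defined in Appendix~\ref{asec:prelim}): replacing each neuron $\relu(w^\T x + b)$ by $\relu(w^\T x + b) - \relu(b)$ costs nothing in total variation and produces a network vanishing at the origin, and one shows $\bar g_\mu$ is $\abs\mu$-Lipschitz; hence the representation $g = \bar g_\mu + v^\T x + c$ with $c = g(0)$ and $\abs c \le K$, and with $v$ now satisfying $\norm[2]{v} \le \mathrm{Lip}(g) \le \abs\mu + \norm[2]{v}$... — so I would more carefully isolate $v$ by noting that the linear part is the ``derivative at infinity'' of $g$ and extract $\norm[2]{v} \le 2\rnorm g$ or so from a telescoping/rescaling argument. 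Combining, $g$ is represented by a \Vnorm-feasible measure of mass at most (absolute constant)$\cdot \rnorm g$ + (absolute constant)$\cdot K$, giving the claimed $\vnorm g \le 12\rnorm g + 18 K$.

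The main obstacle is getting the \emph{dimension-free} constants $12$ and $18$: the crude bounds $\abs{g_\mu(0)} \le \sqrt d \abs\mu$ and $\norm[2]{v} \le \mathrm{Lip}(g_\mu)$ are each fine, but one must avoid chaining them in a way that multiplies by $\sqrt d$. I expect the right route is: (i) use bias-correction so that $g(0) = c$ directly, bounding $\abs c \le K$; (ii) realize the pure linear map $x \mapsto v^\T x$ on $\domain$ exactly as $\frac{\norm[2]{v}}{\sqrt d}\big(\relu(\hat v^\T x + \sqrt d) - \relu(-\hat v^\T x + \sqrt d)\big) - \norm[2]{v}\sqrt d$ where $\hat v = v/\norm[2]{v}$ — the two ReLUs have bias exactly $\sqrt d \in [-2\sqrt d, 2\sqrt d]$ and together cost $2\norm[2]{v}/\sqrt d \cdot \sqrt d$? no — I'd instead scale so biases land at $2\sqrt d$: $x \mapsto v^\T x = \norm[2]{v}\hat v^\T x$, and on $[-\sqrt d,\sqrt d]$ write $t = \frac12\relu(t + 2\sqrt d) - \frac12 \relu(-t+2\sqrt d)$?? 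These coefficients will need to be tuned so the \Vnorm contribution of the linear term is exactly a small constant times $\norm[2]{v}$; (iii) bound $\norm[2]{v}$ by a constant times $\rnorm g + K$ using the Lipschitz bound on $\bar g_\mu$ and evaluation of $g$ at antipodal points near the origin, e.g.\ $\norm[2]{v} = \sup_{\norm[2]{u}\le 1}\frac12(g(u)-g(-u)) + O(\mathrm{Lip}(\bar g_\mu)) \le K + O(\rnorm g)$; (iv) likewise convert the constant $c$ into a single ReLU with large bias at cost $O(\abs c) \le O(K)$. Tracking these constants through is the only delicate part; everything else is bookkeeping, and the even/distributional structure of $\mu$ from Proposition~\ref{prop:integral-representation} does not cause trouble since the arguments are linear in $\mu$.
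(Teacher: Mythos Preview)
Your lower-bound argument ($\rnorm{g}\le\vnorm{g}$) is correct and is exactly what the paper does.

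For the upper bound, your overall shape is right and matches the paper: take an optimal $\mathcal{R}$-norm representation $g=g_\mu+v^\T x+c$, keep $\mu$ as is, realize the affine part with a constant number of ReLUs whose biases lie in $(\sqrt d,2\sqrt d]$, and control $\norml[2]{v}$ and $\abs{c}$ via evaluations of $g$ near the origin. Your antipodal-evaluation bound $\norml[2]{v}\le K+O(\abs\mu)$ is essentially the paper's argument (the paper uses $x_0=v/\norml[2]{v}$ and $\abs{g(x_0)-g(0)}\ge\norml[2]{v}-\abs\mu$ to get $\norml[2]{v}\le 2K+\abs\mu$).

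The gap is in your treatment of the constant. Your proposed fix---bias-correct to $\bar g_\mu$ so that ``$c=g(0)$'' and $\abs{c}\le K$---does not work as stated. After bias-correcting, the neural-network part of the decomposition is $\bar g_\mu=g_\mu-g_\mu(0)$, which is \emph{not} of the form $\tilde g_{\tilde\mu}$: you must still represent the constant $-g_\mu(0)$, and $\abs{g_\mu(0)}$ can be as large as $\sqrt d\,\abs\mu$. Your assertion that ``replacing each neuron $\relu(w^\T x+b)$ by $\relu(w^\T x+b)-\relu(b)$ costs nothing in total variation'' is true for the $\mathcal R$-norm (where constants are free) but false for the $\mathscr V_2$-norm (where they are not); this is precisely the distinction between the two norms. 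So bias-correction merely shifts the large constant from $c$ to the ``network'' side.

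The actual resolution, which the paper uses and you are missing, is that a constant of size $\abs c$ can be realized by ReLUs with biases $\Theta(\sqrt d)$ at cost only $O(\abs c/\sqrt d)$, not $O(\abs c)$. Concretely, the paper writes $v^\T x+c$ using two neurons in direction $v/\norml[2]{v}$ with biases $2\sqrt d$ and $\tfrac32\sqrt d$, at total cost $\le 7\norml[2]{v}+4\abs c/\sqrt d$. One then accepts the crude bound $\abs c\le K+\sqrt d\,\abs\mu$ (from $\abs{g(0)}\ge\abs c-\sqrt d\,\abs\mu$), and the $\sqrt d$'s cancel:
\[
\vnorm{g}\le\abs\mu+7(2K+\abs\mu)+\tfrac{4}{\sqrt d}(K+\sqrt d\,\abs\mu)\le 12\abs\mu+18K.
\]
Once you see the $\abs c/\sqrt d$ scaling, the bias-correction detour is unnecessary.
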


As a result, all of our results that apply to the \Rnorm translate modulo constants to the \Vnorm. 
Because $\vnorm{g} \geq \rnorm{g}$ always holds, every theorem that places lower bounds on an \Rnorm exactly translates to $\vnorm{g}$, including Theorems~\ref{thm:ridge-parity-lb}, \ref{thm:parity-rnorm-lb}, \ref{thm:parity-sample-rnorm-lb}, and \ref{thm:parity-gen-ub}.
The upper-bounds hold up to constants by observing that every target function we consider is bounded by some $K$ on $\domain$.

\begin{itemize}
    \item Because every sawtooth $s_{w,t}$ is bounded by 1, the averages of sawtooths $g$ in Theorems~\ref{thm:parity-rnorm-ub} and \ref{thm:parity-rnorm-approx-ub} are bounded by $K = \frac1q = O(\sqrt{d})$. Hence, $\vnorm{g} = O(d)$, just like $\rnorm{g}$.
    \item For the ``cap'' construction $\bg$ of Theorem~\ref{thm:parity-gen-lb}, there are $k = O(n \log (d) / d)$ neurons, none of which are active at the origin. Their biases are negative and---under the ``good event''---their weight norms are $O(1 / \log d)$. Thus, no neuron can output a value greater than $O(1 / \log d)$, so even if all $k$ neurons activate, every $x$ with $\norm[2]{x} \leq 1$ has $\abs{\bg(x)} = O(n / d)$, which is dominated by the \Rnorm of $O(n \sqrt{\log d} / d)$.
    \item The construction $g$ of Theorem~\ref{thm:periodic-ub} computes an average of functions bounded on $[-1, 1]$. 
    Therefore, $\bg$ is bounded by 1, and its \Vnorm is no more than its \Rnorm.
\end{itemize}

\begin{proof}[Proof of Theorem~\ref{thm:vr-norm}]
We show separately that $\vnorm{g} \geq \rnorm{g}$, and then
that $\vnorm{g} \leq 12 \rnorm{g} + 18K$ under the additional hypothesis that $|g(x)| \leq K$ for all $x \in \domain$ such that $\norml[2]{x} \leq 1$.

\textbf{Lower bound on \Vnorm:}
Fix any $\xi > 0$.
By the definition of \Vnorm, there exists $\tilde\mu \in \vmeasures$ such that $g(x) = \tilde{g}_{\tilde\mu}(x)$ for all $x \in \domain$ and $\absl{\tilde\mu} \leq \vnorm{g} + \xi$.\footnote{This relies on the assumption that $\vnorm{g} < \infty$, but if it is not, then the claim trivially follows because $\rnorm{g} < \infty$.}
We show that there exists $g_\mu$ (where $\mu$ is $\tilde{\mu}$ with the support of $b$ restricted to $[-\sqrt{d}, \sqrt{d}]$), $v$, and $c$ such that $\tilde{g}_{\tilde\mu}(x) = g_\mu(x) + v^\T x + c$ for all $x \in \domain$. Observe that for any $x \in \domain$, $w^\T x + b > 0$ if $b > \sqrt{d}$ and $w^\T x + b < 0$ if $ b < -\sqrt{d}$.
\begin{align*}
    \tilde{g}_{\tilde\mu}(x)
    &=  \int_{\sph \times [-2\sqrt{d},-\sqrt{d}]} \relu(w^\T x + b)\tilde\mu(\dd w, \dd b) + \int_{\sph \times [-\sqrt{d},\sqrt{d}]} \relu(w^\T x + b)\tilde\mu(\dd w, \dd b)\\ &\quad + \int_{\sph \times [\sqrt{d},2\sqrt{d}]} \relu(w^\T x + b) \tilde\mu(\dd w, \dd b) \\
    &= 0 + \int_{\sph \times [-\sqrt{d},\sqrt{d}]} \relu(w^\T x + b)\mu(\dd w, \dd b) + \int_{\sph \times [\sqrt{d},2\sqrt{d}]} (w^\T x + b) \tilde\mu(\dd w, \dd b) \\
    &= g_\mu(x) + \sum_{i=1}^d x_i \underbrace{\int_{\sph \times [\sqrt{d},2\sqrt{d}]} w_i \tilde\mu(\dd w, \dd b)}_{\coloneqq v_i} + \underbrace{\int_{\sph \times [\sqrt{d},2\sqrt{d}]} b \tilde\mu(\dd w, \dd b)}_{\coloneqq c} \\
    &=  g_\mu(x) + v^\T x + c.
\end{align*}

As a result, $\rnorm{g} \leq \abs{\mu} \leq \absl{\tilde\mu} \leq \vnorm{g}  + \xi$.
Because the argument holds simultaneously for all $\xi > 0$, we conclude that $\rnorm{g} \leq \vnorm{g}$.

\textbf{Upper bound on \Vnorm:}
By Proposition~\ref{prop:integral-representation}, there exist $\mu \in \vmeasures$, $v \in \R^d$, and $c \in \R$ such that $g(x) = g_\mu(x) + v^\T x + c$ for all $x \in \domain$ and $\abs\mu = \rnorm{g}$.
We construct $\tilde\mu \in \vmeasures$ such that $g_\mu(x) + v^\T x + c = \tilde{g}_{\tilde\mu}(x)$ for all $x \in \domain$:\footnote{In the event that $v = \vec0$, we use $\frac{v}{\norm[2]{v}} := \frac1{\sqrt{d}} \vec1 \in \sph$.}
\[\tilde{\mu}(w, b) = \begin{cases}
    \mu(w, b) & \text{if $b \in [-\sqrt{d}, \sqrt{d}]$} ; \\
    \paren{-3\norm[2]{v} + \frac{2c}{\sqrt{d}}}\delta\paren{\paren{w, b} - \paren{v,2\sqrt{d}}} \\\quad + \paren{4\norm[2]{v} - \frac{2c}{\sqrt{d}}} \delta\paren{\paren{w, b} - \paren{v,\frac32\sqrt{d}}} & \text{otherwise}.
    \end{cases}
\]
    
Fix any $x \in \domain$. Then:
\begin{align*}
    \tilde{g}_{\tilde\mu}(x) - g_\mu(x)
    &=  \paren{-3\norm[2]{v} + \frac{2c}{\sqrt{d}}} \relu\paren{\frac{v^\T}{\norm[2]{v}}  x +  2\sqrt{d}} \\&\quad + \paren{4\norm[2]{v} -  \frac{2c}{\sqrt{d}}}\relu\paren{\frac{v^\T}{\norm[2]{v}} x +  \frac32\sqrt{d}} \\
    &= \paren{-3\norm[2]{v} + \frac{2c}{\sqrt{d}}} \paren{\frac{v^\T}{\norm[2]{v}}  x +  2\sqrt{d}} + \paren{4\norm[2]{v} -  \frac{2c}{\sqrt{d}}}\paren{\frac{v^\T}{\norm[2]{v}} x +  \frac32\sqrt{d}} \\
    &= v^\T x + c.
\end{align*}
Therefore, $\absl{\tilde\mu} \leq \abs\mu + \absl{-3\norm[2]{v} + \frac{2c}{\sqrt{d}}} + \absl{4\norm[2]{v} -  \frac{2c}{\sqrt{d}}} \leq \abs\mu +  7 \norml[2]{v} + \frac{4\abs{c}}{\sqrt{d}}$.
It suffices to bound $\norm[2]{v}$ and $\abs{c}$.
\begin{itemize}
\item Let $x_0\coloneqq \frac{v}{\norm[2]{v}}$. By boundedness, the triangle inequality, and several applications of Holder's inequality:
    \begin{align*}
        \abs{g(x_0) - g(0)} &\geq \abs{v^\T x_0 } - \abs{g_\mu(x_0) - g_\mu(0)} \\
        &= \norm[2]{v} - \abs{\int_{\sph \times [-\sqrt{d}, \sqrt{d}]} (\relu(w^\T x_0 + b) - \relu( b))\mu(\dd w, \dd b)} \\
        &\geq \norm[2]{v} - \int_{\sph \times [-\sqrt{d}, \sqrt{d}]} \abs{\relu(w^\T x_0 + b) - \relu(b)}\abs{\mu}(\dd w, \dd b) \\
        &\geq \norm[2]{v} - \norml[2]{x_0} \abs{\mu}
        =\norm[2]{v} - \abs{\mu}.
    \end{align*}
    Hence, $\norm[2]{v} \leq \abs{g(x_0) - g(0)} + \abs{\mu} \leq 2K + \abs{\mu}$.

    \item We similarly employ our bound on $g(0)$:
\begin{align*}
        K
        &\geq \abs{g(0)} 
        \geq \abs{c} - \abs{\int_{\sph \times [-\sqrt{d}, \sqrt{d}]} \relu(b) \mu(\dd w, \dd b)} 
        \geq \abs{c} - \abs{\mu} \sqrt{d}.
    \end{align*}
    As a result, $\abs{c} \leq K + \abs\mu \sqrt{d}$.
\end{itemize}
Therefore, $\vnorm{g} \leq \absl{\tilde\mu} \leq 12\abs{\mu} + 18K \leq 12 \rnorm{g} + 18K$.
\end{proof}

 \section{Proofs for Section~\ref{sec:parity-approx}}\label{asec:parity-approx}

\subsection{Proofs for Section~\ref{ssec:parity-rnorm-ridge-approx-lb}}\label{assec:parity-rnorm-ridge-approx-lb}

The proof of Theorem~\ref{thm:ridge-parity-lb} relies on the following lemma, which is essentially a robust version of the mean value theorem.

\begin{lemma}\label{lemma:calc}
    Suppose $\phi: \R \to \R$ is Lipschitz continuous on the interval $[t_1, t_2]$ with $\phi(t_1) \neq \phi(t_2)$.
    Define
    \begin{multline*}
    A := \biggl\{ t \in [t_1, t_2] :
    \text{$\phi'(t)$ exists} , \\
    \abs{\phi'(t)} \geq \frac12 \cdot \frac{\abs{\phi(t_2) - \phi(t_1)}}{t_2 - t_1},
    \ \sign{\phi'(t)} = \sign{\phi(t_2) - \phi(t_1)} \biggr\}.
    \end{multline*}
    (The factor $1/2$ in the definition of $A$ can be replaced by any constant in $(0,1)$.)
    Then, $\Leb(A) > 0$, where $\Leb$ is the Lebesgue measure.
\end{lemma}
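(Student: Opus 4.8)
The plan is to argue by contradiction, exploiting that a function that is Lipschitz on $[t_1,t_2]$ is absolutely continuous there, so that $\phi'$ exists almost everywhere and $\phi(t_2)-\phi(t_1)=\int_{t_1}^{t_2}\phi'(t)\,\dd t$. First I would reduce to the case $\phi(t_2)>\phi(t_1)$ by replacing $\phi$ with $-\phi$ if needed, which flips the sign condition in the definition of $A$ consistently and hence is without loss of generality; note also $t_1<t_2$ since $\phi(t_1)\ne\phi(t_2)$. Write $m:=\frac{\phi(t_2)-\phi(t_1)}{t_2-t_1}>0$ for the average slope. The elementary observation that makes everything clean is that, at a point $t$ where $\phi'(t)$ exists, the two requirements ``$\abs{\phi'(t)}\ge m/2$'' and ``$\sign{\phi'(t)}=\sign{\phi(t_2)-\phi(t_1)}=+1$'' are together equivalent to the single requirement $\phi'(t)\ge m/2$. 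Hence, up to a Lebesgue-null set, $A=\setl{t\in[t_1,t_2]:\phi'(t)\text{ exists and }\phi'(t)\ge m/2}$, and in particular almost every point of $[t_1,t_2]\setminus A$ has $\phi'(t)$ existing with $\phi'(t)<m/2$.

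Next, let $L$ be a Lipschitz constant of $\phi$ on $[t_1,t_2]$, so $\abs{\phi'}\le L$ a.e. Splitting $\int_{t_1}^{t_2}\phi'$ over $A$ and its complement, I would bound $\int_A\phi'\le\int_A\abs{\phi'}\le L\,\Leb(A)$ and $\int_{[t_1,t_2]\setminus A}\phi'\le\frac m2\,\Leb([t_1,t_2]\setminus A)\le\frac m2(t_2-t_1)$, obtaining
\[
  m(t_2-t_1)\;=\;\phi(t_2)-\phi(t_1)\;\le\;L\,\Leb(A)+\tfrac{m}{2}(t_2-t_1),
\]
which rearranges to $\Leb(A)\ge\frac{m(t_2-t_1)}{2L}=\frac{\abs{\phi(t_2)-\phi(t_1)}}{2L}>0$. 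This proves the claim, and in fact gives a quantitative lower bound; repeating the argument with a general constant $\beta\in(0,1)$ in place of $1/2$ replaces the bound by $(1-\beta)\,\abs{\phi(t_2)-\phi(t_1)}/L$, which verifies the parenthetical remark in the statement.

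The argument is essentially routine. The only point worth stating carefully is the reformulation of the magnitude-plus-sign condition defining $A$ as the single condition $\phi'(t)\ge m/2$ (which hinges on the threshold $m/2$ being strictly positive), since this is exactly what lets one describe $[t_1,t_2]\setminus A$ as the set where $\phi'<m/2$ and thereby control its contribution to the integral by $\frac m2(t_2-t_1)$. I do not anticipate any genuine obstacle beyond invoking absolute continuity of Lipschitz functions and the fundamental theorem of calculus correctly.
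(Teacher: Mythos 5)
Your proof is correct. It shares its starting point with the paper's argument---absolute continuity of Lipschitz functions, the fundamental theorem of calculus, and the observation that (since the threshold $\frac12\cdot\frac{\abs{\phi(t_2)-\phi(t_1)}}{t_2-t_1}$ is strictly positive) the magnitude-plus-sign condition defining $A$ collapses to the one-sided condition $s\,\phi'(t)\ge \frac12\cdot\frac{\abs{\phi(t_2)-\phi(t_1)}}{t_2-t_1}$ with $s=\sign{\phi(t_2)-\phi(t_1)}$---but the mechanism for extracting positive measure is different. The paper bounds $\int_{t_1}^{t_2} s\phi'$ by $(t_2-t_1)\esssup s\phi'$, deduces that the essential supremum is at least the average slope, and then invokes the definition of $\esssup$ to conclude that the superlevel set at half that value is non-null; this is purely qualitative and never uses the Lipschitz constant quantitatively. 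You instead split $\int_{t_1}^{t_2}\phi'$ over $A$ and its complement, control the complement by the threshold $m/2$ and control $A$ by the Lipschitz constant $L$, which is a Chebyshev/Markov-style argument. What your route buys is an explicit quantitative conclusion, $\Leb(A)\ge \abs{\phi(t_2)-\phi(t_1)}/(2L)$ (and $(1-\beta)\abs{\phi(t_2)-\phi(t_1)}/L$ for a general constant $\beta$), at the mild cost of introducing $L$ into the argument; the paper's route avoids any reference to $L$ but yields only $\Leb(A)>0$, which is all that is needed downstream in the proof of Theorem~\ref{thm:ridge-parity-lb}.
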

\begin{proof}
Recall that $\phi'$ denotes the right continuous derivative of $g$ (or the right-hand Dini derivative) which is guaranteed to exist except on a null set by Rademacher's theorem.
Let $s := \sign{\phi(t_2) - \phi(t_1)}$.
By the assumption $\phi(t_1) \neq \phi(t_2)$ and the Fundamental Theorem of Calculus, we have
\[
0 < \absl{\phi(t_2) - \phi(t_1)}
= s\paren{\phi(t_2) - \phi(t_1)}
= \int_{t_2}^{t_1} s \phi'(z) \dd z 
\leq (t_2 - t_1) \esssup_{z \in [t_1, t_2]} s \phi'(z).
\]
Recall that, by definition,
\[
  \esssup_{z \in [t_1, t_2]} s \phi'(z)
  = \inf\left\{
    a :
    \Leb(\{
        z \in [t_1,t_2] :
        \text{$\phi'(z)$ exists} , \
        s \phi'(z) > a
    \}) = 0
  \right\} ,
\]
and thus
\[
  B :=
  \left\{
    z \in [t_1,t_2] :
    \text{$\phi'(z)$ exists} , \
    s(\phi(t_2) - \phi(t_1)) \leq 2 \cdot (t_2 - t_1) s \phi'(z)
  \right\}
\]
satisfies $\Leb(B) > 0$.
Observe that $B=A$, so $\Leb(A) > 0$, concluding the proof.
\end{proof}

\subsection{Proofs for Section~\ref{ssec:parity-rnorm-ub}}\label{assec:parity-rnorm-ub}

\begin{restatable}[Detailed version of Theorem~\ref{thm:parity-rnorm-approx-ub}]{theorem}{thmparityrnormapproxubfull}
  \label{thm:parity-rnorm-approx-ub-full}
There exists a universal constant $C$ such that for any even $d$, even sawtooth width $t \in \set{0, 2, \dots d}$, and accuracy $\epsilon \in (0, \frac12)$, there exists a $k$-index (see Appendix~\ref{assec:notation}) width-$m$  neural network $g$ with  
$\cubenorminfty{g - \parity} \leq \epsilon$ such that:
\begin{enumerate}
    \item If $t \leq C \sqrt{d \ln \frac1\epsilon}$, then $k = O(\frac{d^{3/2}}{t+1} \cdot \frac{\log^{1/2}\epsilon}{\epsilon^2})$, $m = O(d^{3/2} \frac{\log^{1/2}\epsilon}{\epsilon^2})$, and $\rnorm{g} = O(d \log \frac1\epsilon)$.
    \item Otherwise, $k=O(\frac{d^2}{\epsilon t^2})$, $m = O(\frac{d^2}{\epsilon t})$, and $\rnorm{g} = O(t \sqrt{d})$.
\end{enumerate}
\end{restatable}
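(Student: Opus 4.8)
The plan is to build $g$ as a suitably normalized average of $k$ i.i.d.\ random width-$t$ sawtooth functions, $g := \lambda\sum_{j=1}^{k} s_{\bw^{(j)},t}$ with $\bw^{(1)},\dotsc,\bw^{(k)} \simiid \unif(\flip^d)$ and a scalar $\lambda>0$ fixed below, and then to show that for the stated values of $k$ a generic draw of the $\bw^{(j)}$ yields a $g$ with all the claimed properties. Several of those properties hold for every draw and every $\lambda$: $g$ is a linear combination of $k$ width-$t$ sawtooths, so it depends on $x$ only through $(\bw^{(1)\T}x,\dotsc,\bw^{(k)\T}x)$ and is therefore $k$-index; since each $s_{w,t}$ is a ReLU network of width $O(t+1)$, the combination $g$ is a ReLU network of width $m = O(k(t+1))$; and by subadditivity of $\rnorm{\cdot}$ together with the fact that $\rnorm{s_{w,t}} = O((t+1)\sqrt d)$ --- which follows from Theorem~\ref{thm:single-index-r-norm} because $s_{w,t}$ is a ridge function pointing in a direction of norm $\sqrt d$ whose triangular-wave profile $\phi_t$ has $\tv{\phi_t'} = O(t+1)$ --- we get $\rnorm{g} \le \lambda k \cdot O((t+1)\sqrt d)$. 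So the whole argument reduces to choosing $\lambda$ and $k$ so that, with positive probability, $\cubenorminfty{g-\parity}\le\epsilon$ while $\lambda k(t+1)\sqrt d$ stays within the claimed $\calR$-norm budget.

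The key distributional fact is that, by the symmetry of the hypercube, $p_t := \pr{|\bw^\T x|\le t}$ (for $\bw\sim\unif(\flip^d)$) does not depend on $x\in\flip^d$, and $s_{w,t}(x) = \parity(x)\indicator{|w^\T x|\le t}$ for $x\in\flip^d$, so $\EE{s_{\bw,t}(x)} = p_t\,\parity(x)$ for every such $x$. I will use two estimates on $p_t$: an elementary Stirling bound giving $p_t = \Theta((t+1)/\sqrt d)$ for $t\le\sqrt d$, and the Hoeffding bound $1-p_t \le 2\exp(-t^2/(2d))$ (Lemma~\ref{lemma:hoeffding}); together these give $p_t = \Omega(\min\{1,(t+1)/\sqrt d\})$. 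In the regime $t\le C\sqrt{d\ln(1/\epsilon)}$ I take $\lambda = 1/(kp_t)$, so that $\EE{g(x)} = \parity(x)$ and $kp_t\,g(x)/\parity(x) = \sum_j\indicator{|\bw^{(j)\T}x|\le t}\sim\operatorname{Bin}(k,p_t)$; the two-sided multiplicative Chernoff bound (Lemma~\ref{lemma:chernoff}) then gives $\pr{|g(x)-\parity(x)|>\epsilon}\le 2\exp(-\epsilon^2 k p_t/3)$, and a union bound over the $2^d$ points of $\flip^d$ succeeds once $\epsilon^2 k p_t \gtrsim d$. Since $p_t = \Omega((t+1)/\sqrt{d\ln(1/\epsilon)})$ throughout this regime, $k = O(\tfrac{d^{3/2}}{t+1}\cdot\tfrac{\sqrt{\log(1/\epsilon)}}{\epsilon^2})$ suffices, and for such $k$ we get $\rnorm{g} = O((t+1)\sqrt d/p_t) = O(d\log(1/\epsilon))$ and $m = O(k(t+1)) = O(d^{3/2}\sqrt{\log(1/\epsilon)}/\epsilon^2)$, matching case~1.

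In the regime $t > C\sqrt{d\ln(1/\epsilon)}$, normalizing by $p_t$ is wasteful, so I instead take $\lambda = 1/k$. Then $\EE{g(x)} = p_t\,\parity(x)$ differs from $\parity(x)$ by at most $1-p_t\le\epsilon/2$ (for $C$ a large enough universal constant), and $g(x) = \parity(x)(1-N_x/k)$ where $N_x := \sum_j\indicator{|\bw^{(j)\T}x|>t}\sim\operatorname{Bin}(k,1-p_t)$ counts the ``misses''; since $1-p_t$ is exponentially small, the one-sided Chernoff bound yields $\pr{N_x\ge\epsilon k}\le (e(1-p_t)/\epsilon)^{\epsilon k}\le\exp(-\epsilon k t^2/(4d))$, using $1-p_t\le 2e^{-t^2/(2d)}$ and $t^2 \ge 4d\ln(2e/\epsilon)$ in this regime. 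A union bound over $\flip^d$ then succeeds once $\epsilon k t^2/d \gtrsim d$, i.e., $k = O(d^2/(\epsilon t^2))$, whence $m = O(k(t+1)) = O(d^2/(\epsilon t))$ and $\rnorm{g}\le\tfrac1k\cdot k\cdot O((t+1)\sqrt d) = O(t\sqrt d)$. In both regimes the union-bound failure probability is $<1$, so a good draw of the $\bw^{(j)}$ exists and the theorem follows. I expect the main obstacle to be the two-regime structure: recognizing that the naive normalization $\lambda = 1/(kp_t)$ gives the right bound only when $t = O(\sqrt{d\log(1/\epsilon)})$, and that for larger $t$ one must charge the $O(1-p_t)$ bias to the $\epsilon$-budget and exploit the exponential smallness of the miss probability to get the stronger $\exp(-\Omega(\epsilon k t^2/d))$ deviation bound --- this is what improves $k$ from $O(d/\epsilon^2)$ to $O(d^2/(\epsilon t^2))$ --- together with obtaining sharp enough two-sided control of $p_t = \pr{|\bw^\T x|\le t}$ over the full range of $t$. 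The remaining items (the $k$-index property, the width count, and the $\rnorm$ estimate via Theorem~\ref{thm:single-index-r-norm}) are routine.
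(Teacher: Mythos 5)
Your proposal is correct and follows essentially the same route as the paper's proof: average $k$ i.i.d.\ random width-$t$ sawtooths, use concentration plus a union bound over the $2^d$ cube points, and split into the two regimes according to whether $t$ exceeds roughly $\sqrt{d\log(1/\epsilon)}$, with the lower bound on $p_t=\Theta(\min\{1,(t{+}1)/\sqrt{d}\})$ driving the small-$t$ case and the exponential smallness of $1-p_t$ driving the large-$t$ case. The only (immaterial) differences are bookkeeping choices: the paper keeps the $1/(kq)$ normalization and applies Bernstein's inequality in both regimes, whereas you switch to a $1/k$ normalization and a Poisson-type Chernoff tail in the large-$t$ regime.
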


\begin{proof}
For  $\bw^{(1)}, \dots, \bw^{(k)} \simiid \unif(\flip^d)$ and $q := \pr{\abs{\bw^{(1)\T} x} \leq t}$ (for any $x \in \flip^d$), let \[\bg(x) := \frac{1}{kq} \sum_{j=1}^k s_{\bw^{(j)}, t}(x).\]
Because $\EE{s_{\bw, t}(x)} = q \cdot  \parity(x)$, we have $\EE{\bg(x)} = \parity(x)$.
Following the arguments in the proof of Theorem~\ref{thm:parity-rnorm-ub}, we have
$\rnorm{\bg} = O(\frac{(t+1) \sqrt{d}}{q})$, and $\bg$ has width $O(k(t+1))$.

It remains to place a lower bound on $q$ and to show that with non-zero probability, $\bg$ uniformly approximates $\parity$.
By applying a union bound, it suffices to show that $\pr{\abs{\bg(x) - \parity(x)} \geq \epsilon} \leq \frac{1}{2^{d+1}}$ for any fixed $x \in \flip^d$.

For fixed $x$, let $\br(x) := \absl{\setl{j \in [k]: \absl{x^\T \bw^{(j)}} > t}}$.
We upper-bound the accuracy of approximation of $\parity(x)$ by $\bg(x)$ in terms of $\br(x)$:
\begin{multline*}
    \abs{\bg(x) - \parity(x)}
    = \abs{\frac1{qk} \sum_{j=1}^k \indicatorl{\absl{x^\T \bw^{(j)}} \leq t}-q} \\
    = \abs{\frac{(k - \br(x))(1 - q)}{qk}-\frac{\br(x)q}{qk}}
    = \abs{\frac{1 - q}{q} - \frac{\br(x)}{qk}}.
\end{multline*}
Define \smash{$T := 2\ceill{\sqrt{(d/2) \ln(8/\epsilon)}}$}, and note that $\pr{|\bw^\T x| \geq T} \leq \frac{\epsilon}4$.
The proof naturally divides into two cases, depending on the value of $t$.

\textbf{Case 1: $t \leq T$.}
We first lower-bound $q$.
Because $\bw^\T x$ is a shifted symmetric binomial distribution around $\bw^\T x = 0$, if $\abs{t'} \geq \abs{t}$ and $t' \equiv t \pmod 2$, then $\pr{\bw^\T x = t'} \leq \pr{\bw^\T x = t}$. Then, for any $t \leq T$:
\begin{align*}
q
&= \sum_{\tau=-t/2}^{t/2} \pr{\bw^\T x = 2\tau} 
= (t+1) \cdot \frac1{t+1} \sum_{\tau=-t/2}^{t/2} \pr{\bw^\T x = 2\tau} \\
&\geq (t+1) \cdot \frac1{T+1} \sum_{\tau=-T/2}^{T/2} \pr{\bw^\T x = 2\tau}
= \frac{t+1}{T+1} \pr{|\bw^\T x| \leq T} \\
&\geq \frac{(t+1)(1 - \frac\epsilon2)}{2\sqrt{d \ln \frac4\epsilon }} 
\geq \frac{t+1}{4\sqrt{d \ln \frac4\epsilon }} .
\end{align*}

Now, we bound $\br(x)$ by Bernstein’s inequality (Lemma~\ref{lemma:bernstein}) by taking $k \geq \frac{C d^{3/2} \sqrt{\ln \frac1\epsilon}}{\epsilon^2 (t+1)}$:
\begin{align*}
\pr{|\bg(x) - \parity(x)| > \epsilon}
&= \pr{\abs{\br(x) - \EEl{\br(x)}} > \epsilon qk} \\
& \leq 2\exp\left(-\frac{\epsilon^2 q^2 k^2 }{2(kq(1 - q) + \epsilon q k/3) }\right) \\
& \leq 2\exp\left(-\frac{\epsilon^2 k(t+1)}{8(1+\epsilon/3)\sqrt{d \ln \frac4\epsilon}} \right)
\leq \frac1{2^{d+1}}.
\end{align*}

\textbf{Case 2: $t \geq T$.}
By Hoeffding's inequality (Lemma~\ref{lemma:hoeffding}) and the assumption on $t$, we have $q \geq 1 - 2\exp(-\frac{2 t^2}{d}) \geq 1 - \epsilon/4 \geq 3/4$.
Observe that $\EE{\br(x)} = (1-q)k \leq \frac{\epsilon k}{4}$.

We show that $ \abs{\bg(x) - \parity(x)}= \absl{\frac{1 - q}{q} - \frac{\br(x)}{qk}} \leq \epsilon$ by showing that $\br(x) \leq (1 - q)k + \epsilon qk$ and $\br(x) \geq (1 - q)k - \epsilon qk$.
Because $1-q \leq \frac\epsilon4$ and $q \geq \frac34$, $(1 - q)k - \epsilon qk \leq - \frac\epsilon2k$, so the second inequality is always satisfied because $\br(x) \geq 0$.
For the former inequality, it suffices to show that $\br(x) \leq \frac{3\epsilon k}4$ with probability at least $1-2^{-(d+1)}$.
We take $k \geq \frac{Cd^2}{\epsilon t^2}$,
which implies $k \geq C(\frac{d}{2\epsilon} + \frac{de^{-2t^2/d}}{2\epsilon^2}) \geq C(\frac{d}{2\epsilon} + \frac{d(1-q)}{4\epsilon^2})$ by the bounds on $t$ and $q$.
Then, by Bernstein's inequality (Lemma~\ref{lemma:bernstein}),
we have
\begin{align*}
\pr{|\bg(x) - \parity(x)| > \epsilon}
& = \pr{\abs{\br(x) - \EEl{\br(x)}} > \frac{3\epsilon k}4} \\
& \leq 2\exp\paren{-\frac{9\epsilon^2k^2/16}{2(kq(1-q) + \epsilon k/4)}}
\leq \frac1{2^{d+1}} ,
\end{align*}
so the claim follows.
\end{proof}

\subsection{Proofs for Section~\ref{ssec:parity-rnorm-approx-lb}}\label{assec:parity-rnorm-approx-lb}

\thmparityrnormlb*
\begin{proof}
Consider any measure $\mu$ over $\sph \times [-2 \sqrt{d}, 2 \sqrt{d}]$, $v \in \R^d$, and $c \in \R$ such that $g(x) = g_\mu(x) + v^\T x + c = \int_{\sph \times \R} \relu(w^\T x + b) \mu(\dd w, \dd b)$ for all $\norm[2]{x} \leq d$.
We prove the claim by showing that $\abs{\mu} \geq \frac{\alpha d}{8}$ for any such $\mu$.

By Fact~\ref{fact:ip-norm}, $\cubenormtwol{g - \parity} \leq 1 - \alpha$ implies that $\innerprod{g}{\parity} \geq \alpha.$
We show that this inner product bound is only possible if $\abs{\mu}$ is sufficiently large. 
By Lemma~\ref{lemma:relu-parity-correlation}, any fixed neuron $r_{w,b}(x) := \relu(w^\T x + b)$ has $\abs{\innerprod{ r_{w,b}}{\parity}} \leq \frac8d$. 
Because the inner-product over $\flip^d$ is a discrete sum and $\parity$ is orthogonal to any affine function (such as $x \mapsto v^\T x + c$), we can upper-bound the ability of $g$ to correlate with $\parity$ as follows:
\begin{align*}
\cubeinnerprod{g,\parity} 
&= \int_{\sph \times \R} \cubeinnerprod{r_{w,b},\parity} \mu(\dd w, \dd b)  + \EE{(v^\T \bx + c) \parity(\bx)}\\
&\leq \int_{\sph \times \R} \abs{\cubeinnerprod{r_{w,b},\parity}} \abs{\mu(\dd w, \dd b)} \\
&\leq \frac8d \int_{\sph \times \R} \abs{\mu(\dd w, \dd b)} 
= \frac{8 \abs{\mu}}d.
\end{align*}
Thus, $\innerprod{g_\mu}{\parity} \geq \alpha$ only if $\abs{\mu} \geq \frac{\alpha d}{8}$.
\end{proof}

\begin{fact}\label{fact:ip-norm}
    For any measure $\nu_0$ over $\flip^d$, $g \in L^2(\nu_0)$, $h: \flip^d \to \flip$, and $\alpha \in (0, 1)$, if $\normnuzero{g - h} \leq 1 - \alpha$, then $\innerprodnuzero{g}{h} \geq \alpha$.
\end{fact}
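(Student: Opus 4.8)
The plan is to treat this as an elementary computation in the Hilbert space $L^2(\nu_0)$, and the only real observation needed is that a $\flip$-valued function is a unit vector there. Concretely, since $h(x) \in \flip$ for every $x \in \flip^d$, we have $h(x)^2 = 1$, and hence $\normnuzero{h}^2 = \EE[\bx \sim \nu_0]{h(\bx)^2} = 1$ (using that $\nu_0$ is a probability measure, so $\innerprodnuzero{\cdot}{\cdot}$ is an expectation). So $\normnuzero{h} = 1$.

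Given that, I would simply decompose the inner product of interest using bilinearity:
\[
  \innerprodnuzero{g}{h} = \innerprodnuzero{h}{h} + \innerprodnuzero{g - h}{h} = 1 + \innerprodnuzero{g-h}{h},
\]
and then lower-bound the cross term by Cauchy--Schwarz,
\[
  \innerprodnuzero{g-h}{h} \geq -\normnuzero{g-h}\,\normnuzero{h} = -\normnuzero{g-h} \geq -(1-\alpha),
\]
where the last step invokes the hypothesis $\normnuzero{g-h} \leq 1-\alpha$. Combining the two displays yields $\innerprodnuzero{g}{h} \geq 1 - (1-\alpha) = \alpha$, which is the claim.

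There is no genuine obstacle here: the statement is a one-line fact once one notices $\normnuzero{h} = 1$. (Equivalently, one could expand $\normnuzero{g-h}^2 = \normnuzero{g}^2 - 2\innerprodnuzero{g}{h} + 1 \leq (1-\alpha)^2$ and combine it with the reverse triangle inequality $\normnuzero{g} \geq \normnuzero{h} - \normnuzero{g-h} \geq \alpha$; this gives the same bound.) No structure of $\flip^d$ and no earlier result in the paper is needed — only that $\nu_0$ is a probability measure so that constant-magnitude functions have unit norm.
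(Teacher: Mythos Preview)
Your proof is correct and essentially identical to the paper's own argument: both note that $\normnuzero{h}=1$ since $h$ is $\flip$-valued, write $\innerprodnuzero{g}{h} = 1 + \innerprodnuzero{g-h}{h}$, and apply Cauchy--Schwarz together with the hypothesis to conclude.
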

\begin{proof}
The claim is a consequence of the fact $\innerprodnuzero{h}{h} = 1$ and Cauchy-Schwarz:
\begin{align*}
\innerprodnuzero{g}{h}  
&= \innerprodnuzero{h}{h} + \innerprodnuzero{g - h}{h} \\
&= 1 + \innerprodnuzero{g - h}{h} \\
&\ge 1 - \normnuzero{g - h} \\
&\ge 1 - (1-\alpha) = \alpha.\qedhere
\end{align*} 
\end{proof}

\begin{lemma}\label{lemma:relu-parity-correlation}
    For $d \geq 8$, $w \in \R^d$ with $\norm[2]{w} \leq 1$, and $b \in [-2\sqrt{d}, 2\sqrt{d}]$, the neuron $r_{w, b}(x) := \relu(w^\T x+b)$ satisfies $\abs{\innerprod{r_{w, b}}{\parity}} \leq \frac8d$.
\end{lemma}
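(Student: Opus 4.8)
The plan is to expand the ReLU neuron in the Fourier (Walsh) basis over $\flip^d$ and show that its coefficient on the top-degree character $\parity = \parity_{[d]}$ is small. Write $r_{w,b}(x) = \relu(w^\T x + b)$ and recall that for a fixed $w \in \R^d$ with $\norm[2]{w} \le 1$ and $b \in [-2\sqrt d, 2\sqrt d]$, the quantity $\innerprod{r_{w,b}}{\parity} = \EE[\bx \sim \nu]{\relu(\bw^\T \bx + b) \prod_{i=1}^d \bx_i}$ is exactly the degree-$d$ Fourier coefficient $\hat r_{w,b}([d])$. First I would reduce to the case where every coordinate $w_i \ne 0$: if some $w_i = 0$ then pairing up $x$ and its flip in the $i$-th coordinate shows the expectation vanishes, so the bound $0 \le 8/d$ holds trivially.

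With all $w_i \ne 0$, the key tool is that $\relu$ is a fixed one-dimensional function and $\relu'' = \delta_0$ in the distributional sense, so the top Fourier coefficient can be computed via a multilinearity/derivative identity. Concretely, I would use the standard formula for the top Fourier coefficient of a function of the form $x \mapsto h(w^\T x + b)$ with $h$ sufficiently regular: writing $S(x_{\hat i})$ for the sum omitting coordinate $i$, one peels off coordinates one at a time, using $\EE[\bx_i]{\bx_i h(\bw^\T \bx + b)} = \tfrac12\paren{h(\cdot + w_i) - h(\cdot - w_i)}$, which is $w_i$ times a divided difference of $h$. Iterating over all $d$ coordinates turns $\hat r_{w,b}([d])$ into $\paren{\prod_i w_i}$ times a $d$-fold divided difference of $\relu$ evaluated at the $2^d$ sign combinations, which for a piecewise-linear function can be bounded by something like a uniform bound on an antiderivative. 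Alternatively — and I suspect this is the route the paper takes — one integrates by parts twice against $\relu'' = \delta_0$: since $r_{w,b}$ restricted to the segment through $x$ in direction $e_i$ has at most one kink, the degree-$d$ coefficient reduces to an integral of a Gaussian-like (binomial) density against $\delta$-masses, yielding a bound governed by the maximum value of the symmetric binomial pmf, which is $\Theta(1/\sqrt d)$, raised to an appropriate power, or more directly a bound of the form $\max_t \pr{\bw^\T\bx = t}$ type estimates.

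More carefully, I would argue as follows. Since $\parity$ is orthogonal to all functions of degree $< d$, and since any ReLU neuron $\relu(w^\T x + b)$ on $\flip^d$ differs from the affine function $w^\T x + b$ by $\relu(-(w^\T x + b))$, it suffices to bound $\abs{\EE{\relu(-(\bw^\T\bx+b))\parity(\bx)}}$; replacing $(w,b)$ by $(-w,-b)$ if needed, we may assume the neuron is ``mostly inactive.'' Then I would write $\relu(z) = \int_0^\infty \indicator{z \ge s}\,ds$ — actually cleaner: $\relu(z) = \int_{-\infty}^\infty \indicator{z \ge s}(\text{something})$ — and reduce to bounding $\abs{\EE{\indicator{\bw^\T\bx + b \ge s}\,\parity(\bx)}}$ for a threshold function, integrated over $s$ in a bounded range of length $O(\sqrt d)$ (using $\norm[2]{w}\le1$ so $\abs{\bw^\T\bx} \le \sqrt d$ and $\abs b \le 2\sqrt d$). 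For a halfspace indicator, its degree-$d$ Fourier coefficient is the alternating sum $2^{-d}\sum_{x\in\flip^d}\parity(x)\indicator{w^\T x + b \ge s}$; grouping antipodal pairs $\{x,-x\}$ (on which $\parity$ agrees since $d$ is even — and for odd $d$ a similar but separate computation works, though we only need $d \ge 8$ here), this telescopes and I would bound it by the number of pairs that straddle the threshold, i.e.\ by a level-set probability $\pr{\bw^\T\bx \in [s-b-c, s-b]}$-type quantity. Summing over a net of $O(\sqrt d)$ thresholds and using anticoncentration of $\bw^\T\bx$ on the hypercube (each level set has probability $O(1/\sqrt d)$ by the Erdős–Littlewood–Offord / binomial-pmf bound) gives a total of $O(\sqrt d) \cdot O(1/\sqrt d) \cdot (1/\sqrt d) = O(1/\sqrt d)$, which is not quite $8/d$; so I would instead need to exploit a second cancellation — pairing thresholds symmetrically about $0$, or integrating by parts once more — to gain the extra $1/\sqrt d$ factor.

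The main obstacle I anticipate is exactly getting the exponent right: a naïve halfspace argument gives $O(1/\sqrt d)$, and squeezing it to $O(1/d)$ requires using that $\relu$ (not just its indicator pieces) is being correlated, i.e.\ a \emph{second} order of cancellation coming from the convexity/second derivative of $\relu$. The cleanest way to capture this is probably the divided-difference viewpoint: $\hat r_{w,b}([d]) = \paren{\prod_{i=1}^d \tfrac{w_i}{2}} \cdot \paren{d\text{-th divided difference of }\relu\text{ on the grid } \{\pm w_i\}}$, and since $\relu$ is piecewise linear with a single breakpoint, its $d$-th divided difference is an integral of $\delta_0$ against a $(d-1)$-fold iterated average, which one controls via a binomial-sum bound; combined with $\prod\abs{w_i} \le d^{-d/2}$ (from $\norm[2]{w}\le1$ and AM–GM) and the combinatorial count of lattice paths, the two $\sqrt d$'s appear and the $8/d$ bound drops out for $d \ge 8$. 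I would present the argument in this divided-difference / Fourier-coefficient form, deferring the purely computational binomial estimate to a short auxiliary calculation.
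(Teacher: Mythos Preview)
Your diagnosis that the extra factor of $1/\sqrt{d}$ must come from a \emph{second-order} cancellation (morally, from $\relu'' = \delta_0$) is exactly right, but none of the routes you sketch is carried through, and the paper executes the idea quite differently from all of them.

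The paper does not peel off all $d$ coordinates, nor does it decompose $\relu$ into halfspaces. Instead it fixes just \emph{two} coordinates $j \neq j'$ and averages the inner product over the four corners of each square $\{x, x^j, x^{j'}, x^{j,j'}\}$ (here $x^j$ denotes $x$ with the $j$-th bit flipped), obtaining
\[
\abs{\innerprod{r_{w,b}}{\parity}} \le \frac{1}{4 \cdot 2^d} \sum_{x} \abs{r_{w,b}(x) - r_{w,b}(x^j) - r_{w,b}(x^{j'}) + r_{w,b}(x^{j,j'})}.
\]
This discrete second difference is \emph{zero} whenever all four corners lie on the same side of the hyperplane $\{w^\T x = -b\}$ (since $\relu$ is affine there), and is at most $4|w_j|$ otherwise by Lipschitzness. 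Hence the bound becomes $|w_j| \cdot |C_{j,j'}| / 2^d$, where $C_{j,j'}$ is the set of squares with at least one edge cut by the hyperplane. A classical edge-isoperimetric bound of O'Neil (1971) gives $\frac{1}{d}\sum_{j} |C_j| \le 2^d/(2\sqrt{d})$, and since $\norml[2]{w} \le 1$ at most $d/4$ coordinates have $|w_j| > 2/\sqrt{d}$; a pigeonhole over the remaining coordinates (this is where $d \ge 8$ enters) produces $j,j'$ with $|w_j| \le 2/\sqrt{d}$ and $|C_{j,j'}| \le 4 \cdot 2^d/\sqrt{d}$, yielding exactly $8/d$.

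Your proposed $d$-fold divided-difference route has a genuine gap. The object $2^{-d}\sum_{\epsilon}(\prod_i \epsilon_i)\,\relu(b + \sum_i \epsilon_i w_i)$ is an iterated central difference over $2^d$ points, not a classical divided difference on $d+1$ nodes, so the Peano-kernel/B-spline bound you allude to does not directly apply; and while $\prod_i |w_i| \le d^{-d/2}$ by AM--GM, the compensating combinatorial factor can be as large as $\Theta(d^{d/2-1})$ (it must be, since the remark after the lemma shows the bound $\Theta(1/d)$ is tight for $w = \vec1/\sqrt d$), so your assertion that ``the two $\sqrt{d}$'s appear and the $8/d$ bound drops out'' is doing all the work and is left unproved. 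The halfspace-integration route, as you already concede, stalls at $O(1/\sqrt d)$ without a further idea that you do not supply. The two-coordinate square argument is both simpler and delivers the sharp constant directly.
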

\begin{remark}
    Lemma~\ref{lemma:relu-parity-correlation} is asymptotically tight. 
    For even $d$, consider the ``single-blade'' sawtooth function \[s_{\vec1, 0}(x) = \sqrt{d}(r_{\vec1/\sqrt{d}, 1/\sqrt{d}}(x) - 2r_{\vec1/\sqrt{d}, 0}(x) + r_{\vec1/\sqrt{d}, -1/\sqrt{d}}(x))\] that satisfies $s_{\vec1,0}(x) = \parity(x) \indicatorl{\vec1^\T x = 0}$. 
    Then, \[\cubeinnerprodl{s_{\vec1, 0}, \parity} = \frac1{2^d}{d \choose d/2} \geq \frac1{\sqrt{2d}},\] and thus there exists $b$ with \smash{$\absl{\cubeinnerprodl{r_{\vec1/\sqrt{d}, b}, \parity}} \geq \frac1{4 \sqrt{2} d}$}.
\end{remark}
\begin{proof}
We directly bound the inner product by showing that we can bound a discrete second derivative. 
For any $x \in \flip^d$, let $x^{j} \in\flip^d$ denote $x$ with a flipped $j$th bit.
That is, $x^j_i = (-1)^{\indicator{i=j}} x_i $.
Observe that $\parity(x) = - \parity(x^j)$.
\begin{align*}
\abs{\innerprod{ r_{w,b}}{\parity}}
&= \frac{1}{2^d}\abs{ \sum_{x} r_{w,b}(x) \parity(x)} \\
&= \frac{1}{4 \cdot 2^d}\abs{ \sum_{x} (r_{w,b}(x) \parity(x) + r_{w,b}(x^j)\parity(x^j)+ r_{w,b}(x^{j'})\parity(x^{j'}) + r_{w,b}(x^{j,j'})\parity(x^{j,j'})) } \\
&= \frac1{4 \cdot 2^d} \left|\sum_x \parity(x) (r_{w,b}(x) - r_{w,b}(x^{j}) - r_{w,b}(x^{j'}) + r_{w,b}(x^{j,j'})) \right| \\
&\leq \frac{1}{4 \cdot 2^d} \sum_x |r_{w,b}(x) - r_{w,b}(x^{j}) - r_{w,b}(x^{j'}) + r_{w,b}(x^{j,j'})|.
\end{align*}

We say that $(x, x^j)$ is \textit{cut} and denote $(x, x^j) \in C_j$ if $x$ and $x^j$ lie on the opposite side of the ``hinge'' of the neuron $r_{w,b}$, that is $\text{sign}(w^\T x - b) \neq \text{sign}(w^\T x^j - b)$.
Let $S_{x, j, j'} = \setl{x, x^j, x^{j'}, x^{j, j'}}$ represent a ``square'' in $\flip^d$, and let $S_{x, j, j'} \in C_{j, j'}$ if any of its edges $(x, x^j), (x, x^{j'}), (x^j, x^{j,j'}), (x^{j'}, x^{j, j'})$ are cut.
We bound the term inside the sum by considering two cases.

\begin{enumerate}
    \item 
If $S_{x, j, j'} \not\in C_{j, j'}$,
    then $|r_{w,b}(x) - r_{w,b}(x^{j}) - r_{w,b}(x^{j'}) + r_{w,b}(x^{j,j'})| = 0$.
    \item Otherwise, the quantity is bounded by Lipschitzness:
    \begin{align*}
    |r_{w,b}(x) - r_{w,b}(x^{j}) - r_{w,b}(x^{j'}) + r_{w,b}(x^{j,j'})|
    &\leq |r_{w,b}(x) - r_{w,b}(x^{j})| + |r_{w,b}(x^{j'}) - r_{w,b}(x^{j,j'})| \\
    &\leq |w^\T x - w^\T x^{j}| +|w^\T x^{j'} - w^\T x^{j, j'}|
    = 4|w_{j}|.
    \end{align*}
\end{enumerate}

Therefore, $\absl{\innerprodl{r_{w,b}}{\parity}} \leq \min_{j \neq j'} \frac1{2^d} \abs{C_{j, j'}} \absl{w_j}$.
It remains to bound $\abs{C_{j, j'}}$ and $\absl{w_j}$ for some $j$ and $j'$.
By employing a bound on the total number of cut edges \citep{oneil71}:
\begin{align*}
    \frac{1}d \sum_{j=1}^d \abs{C_j}
    &\leq \frac1{2d} \cdot \ceil{\frac{d}2} {d \choose \floor{d/2}}
    \leq \frac{  2^d}{2\sqrt{d} }.
\end{align*}
As a result, at most $\frac{d}2$ choices of $j$ satisfy $\absl{C_{j}} \geq \fracl{ 2^d}{\sqrt{d}}$.
Because $\norm[2]{w} \leq 1$, at most $\frac{d}{4}$ coordinates $j$ have $|w_j| \geq \fracl{2}{\sqrt{d}}$. 
Thus, there exist at \textit{least} $\frac{d}4$ coordinates $j$ satisfying both $\absl{C_{j}} \leq \fracl{2^d}{\sqrt{d}}$ and $|w_j | \leq \fracl2{\sqrt{d}}$. 
Assuming $d \geq 8$, let $j, j'$ be two of those coordinates.
Since $\absl{C_{j, j'}} \leq 2\absl{C_j} + 2 \absl{C_{j'}}$, we conclude that $\absl{C_{j, j'}} \leq \fracl{4 \cdot 2^d}{\sqrt{d}}$, which gives the desired bound on the inner product.
\end{proof}

\subsection{\Rnorm lower bound for sampled parity datasets}
\label{asec:parity-approx-sampled}

\begin{restatable}{theorem}{thmparitysamplernormlb}\label{thm:parity-sample-rnorm-lb}
Fix any $\delta \in (0,1)$ and $\alpha = \omega(1/d)$, and assume $n \geq O \parenl{d^3(\log d + \log\parenl{1/\delta})}$.
With probability at least $1 - \delta$, $\inf \setl{ \rnorm{g} : \empnorm{g - \parity} \le 1 - \alpha } \geq \Omega(\alpha d)$.
\end{restatable}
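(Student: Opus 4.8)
The plan is to mirror the proof of Theorem~\ref{thm:parity-rnorm-lb}, replacing the population inner product with $\empinnerprod{\cdot,\cdot}$, while repairing the two places where the empirical setting differs from the population one: under $\nun$ the parity $\parity$ is no longer exactly orthogonal to affine functions, and the per-neuron correlation bound of Lemma~\ref{lemma:relu-parity-correlation} must be transferred to $\nun$ \emph{uniformly} over all neurons. To handle the first issue, let $\hat\parity$ denote the residual of the $L^2(\nun)$-projection of $\parity$ onto the span of $\set{1,x_1,\dots,x_d}$, so that $\hat\parity$ is exactly $\nun$-orthogonal to every affine function. The argument sandwiches $\empinnerprod{g,\hat\parity}$ from below by $\Omega(\alpha)$ and from above by $O(\rnorm{g}/d)$ whenever $\empnorm{g-\parity}\le 1-\alpha$.

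For the lower bound, write $g = g_\mu + v^\T x + c$ with the measure $\mu$ of Proposition~\ref{prop:integral-representation}, so $\rnorm{g}=\abs{\mu}$. Since $v^\T x + c$ is affine, $\empinnerprod{v^\T x + c,\hat\parity}=0$ and hence $\empinnerprod{g,\hat\parity}=\empinnerprod{g_\mu,\hat\parity}$. On the other hand $\empinnerprod{\parity,\hat\parity}=\empnorm{\hat\parity}^2$ because $\parity-\hat\parity$ is affine, so the Cauchy--Schwarz computation behind Fact~\ref{fact:ip-norm} gives $\empinnerprod{g,\hat\parity}\ge \empnorm{\hat\parity}^2-(1-\alpha)\empnorm{\hat\parity}=\empnorm{\hat\parity}(\empnorm{\hat\parity}-1+\alpha)$. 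It remains to show $\empnorm{\hat\parity}=1-o(\alpha)$, which is where the sample-size hypothesis enters: $1-\empnorm{\hat\parity}^2$ equals the squared $L^2(\nun)$-norm of the affine projection of $\parity$, and standard concentration---Hoeffding applied to each of the $O(d^2)$ entries of the empirical second-moment matrix of $(1,x)$ and to each of the $d+1$ quantities $\empinnerprod{1,\parity}$ and $\empinnerprod{x_k,\parity}$, with a union bound---shows this is $O(d(\log d+\log(1/\delta))/n)$ with probability at least $1-\delta/2$. For $n\ge\Omega(d^3(\log d+\log(1/\delta)))$ this is $O(1/d^2)$, which is $o(\alpha)$ precisely because $\alpha=\omega(1/d)$; therefore $\empinnerprod{g,\hat\parity}\ge\Omega(\alpha)$, and moreover $\empnorm{\hat\parity-\parity}=O(1/d)$.

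The crux is the uniform bound $\sup_{(w,b)\in\sph\times[-\sqrt d,\sqrt d]}\abs{\empinnerprod{\relu(w^\T x + b),\hat\parity}}=O(1/d)$. Because $\hat\parity$ is $\nun$-orthogonal to affine functions, $\empinnerprod{\relu(w^\T x + b),\hat\parity}=\empinnerprod{h_{w,b},\hat\parity}$, where $h_{w,b}$ is the \emph{population} non-affine part of the neuron---i.e.\ $\relu(w^\T x + b)$ minus its $L^2(\nu)$-projection onto affine functions. This residual has two crucial uniform properties: $\cubenormtwo{h_{w,b}}^2\le\tfrac14$ (since $\relu(z)=\tfrac12 z+\tfrac12\abs{z}$, the affine term $\tfrac12(w^\T x+b)$ drops out, and $\cubenormtwo{h_{w,b}}^2\le\tfrac14\Var[\nu]{\abs{w^\T x + b}}\le\tfrac14\expectationl[\nu]{(w^\T x)^2}=\tfrac14$, using $\bigl|\abs{w^\T x + b}-\abs{b}\bigr|\le\abs{w^\T x}$); and $\cubeinnerprod{h_{w,b},\parity}=\cubeinnerprod{\relu(w^\T x + b),\parity}\le 8/d$ by Lemma~\ref{lemma:relu-parity-correlation}, since the subtracted affine part is $\nu$-orthogonal to $\parity$. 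I would then cover $\sph\times[-\sqrt d,\sqrt d]$ by a net of resolution $\poly(d)^{-1}$---the map $(w,b)\mapsto h_{w,b}(x)$ is $\poly(d)$-Lipschitz uniformly over $x\in\domain$, so the covering number $N$ has $\log N=O(d\log d)$---and apply Bernstein's inequality at each net point. The point is to use the \emph{variance} proxy $\cubenormtwo{h_{w,b}}^2=O(1)$ rather than the $O(d)$ sup-norm of $h_{w,b}$: the Bernstein variance term is $O(\sqrt{(d\log d+\log(1/\delta))/n})$ and the range term is $O(d(d\log d+\log(1/\delta))/n)$, both $O(1/d)$ once $n\ge\Omega(d^3(\log d+\log(1/\delta)))$, whereas a crude bound that controlled the fluctuations only through the sup-norm would give a uniform deviation of order $1/\sqrt d$---larger than the target $8/d$---and force $n$ to be polynomially larger. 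Adding the $O(1/d)$ discretization error, and passing from $\hat\parity$ back to $\parity$ via $\abs{\empinnerprod{h_{w,b},\hat\parity-\parity}}\le\empnorm{h_{w,b}}\cdot\empnorm{\hat\parity-\parity}=O(1)\cdot O(1/d)$ (where $\empnorm{h_{w,b}}=O(1)$ uniformly, again by Bernstein), yields the claimed bound on an event of probability at least $1-\delta/2$.

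Intersecting the two good events gives, with probability at least $1-\delta$, that every $g$ with $\empnorm{g-\parity}\le 1-\alpha$ satisfies
\[
  \Omega(\alpha)\;\le\;\empinnerprod{g,\hat\parity}\;=\;\empinnerprod{g_\mu,\hat\parity}\;\le\;\abs{\mu}\cdot\sup_{(w,b)}\abs{\empinnerprod{h_{w,b},\hat\parity}}\;\le\;\abs{\mu}\cdot O(1/d),
\]
so $\rnorm{g}=\abs{\mu}\ge\Omega(\alpha d)$. I expect the main obstacle to be the uniform concentration step of the third paragraph: the correct sample complexity hinges on recognizing that the ``free'' affine component of each neuron is absorbed by $\hat\parity$ and that the residual $h_{w,b}$ has dimension-independent variance, and then on balancing the net size against the Bernstein variance and range terms and the Lipschitz discretization error so that everything lands at $n=O(d^3(\log d+\log(1/\delta)))$ rather than a larger power of $d$.
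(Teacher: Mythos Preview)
Your proposal is correct and arrives at the same $n=\Theta(d^3(\log d+\log(1/\delta)))$ threshold, but it proceeds along a different route from the paper in two places.

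First, the paper never forms the empirical residual $\hat\parity$. Instead it keeps $\parity$ and separately controls the affine contribution $\empinnerprod{v^\T x+c,\parity}$ by proving (Lemma~\ref{lemm:linear-term-bound}) that any $g=g_\mu+v^\T x+c$ with $\empnorm{g-\parity}\le 1$ must have $\max\{|c+g_\mu(0)|,\|v\|_2\}\le 8\max\{|\mu|,1\}$; combining this with the empirical moments of $(\bx_i,\by_i)$ yields $|\empinnerprod{v^\T x+c,\parity}|\le O(\max\{|\mu|,1\}/d)$. Your projection trick is cleaner here: by making $\hat\parity$ exactly $\nun$-orthogonal to affine functions you avoid Lemma~\ref{lemm:linear-term-bound} altogether, at the modest price of checking $\empnorm{\hat\parity-\parity}=O(1/d)$.

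Second, for the uniform per-neuron bound the paper works with the origin-centered neuron $\bar r_{w,b}(x)=\relu(w^\T x+b)-\relu(b)$ and exploits that $|\bar r_{w,b}(\bx)|\le|w^\T\bx|$ to get an $O(1)$ \emph{subgaussian} norm, then applies a covering argument with the subgaussian maximal inequality (Lemma~\ref{lemm:nonlinear-term-bound}). You instead subtract the full population-affine part to get $h_{w,b}$ with $O(1)$ \emph{variance} and $O(d)$ sup-norm, and use Bernstein. Both centerings produce a dimension-free fluctuation scale, which is the essential point; the paper's subgaussian route gives a slightly tidier one-line deviation bound, while your variance computation via $\relu(z)=\tfrac12 z+\tfrac12|z|$ is a nice alternative that dovetails naturally with your $\hat\parity$ device.
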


\begin{proof}
Let $g$ be a function with finite \Rnorm which satisfies the $L^2(\nun)$ approximability condition, which admits an integral representation due to Proposition~\ref{prop:integral-representation}. That is, 
\[
g(x) = \int_{\sph \times [-\sqrt{d}, \sqrt{d}]}{(\relu(w^\T x + b) -\relu(b)) \mu(\dd w,\dd b)} + c + v^\T x \quad \forall x \in \domain
\]
for some measure $\mu$ and $v \in \R^d, c = g(0)$. Moreover, $g$ can be represented compactly as $g(x) = \bar{g}_{\mu}(x) + v^\T x + c$ where $\bar{g}_{\mu}(x) = g_{\mu}(x) - g_{\mu}(0)$. 

By Fact~\ref{fact:ip-norm}, $\empnorm{g - \parity} \leq 1 - \alpha$ only if $\empinnerprod{g, \parity} \geq \alpha$.
We use this correlation to prove lower bounds on  $\abs{\mu}$ (the total variation of measure $\mu$).
At a high level, we upper-bound \[\empinnerprod{g, \parity} = \empinnerprod{\bar{g}_{\mu}, \parity} + \empinnerprod{v^\T x + c, \parity(x)}\]
in terms of $\abs{\mu}$ by relating quantities in $L^2(\nun)$ with their $L^2(\nu)$ counterparts. 
We show that each component of the sum is small for sufficiently large $n$ and $d$.

We first bound the correlation of the linear combination of neurons with parity, proving upper bounds on $\empinnerprod{g, \parity}$. 
We denote $\bar{r}_{w,b}(x) = \relu(w^\T x + b) -\relu(b)$ to be the adjusted ReLU.
By the triangle inequality,
\begin{align*}
    \empinnerprod{\bar{g}_{\mu}, \parity} &\le  \int_{\sph \times [-\sqrt{d}, \sqrt{d}]}{\abs{\empinnerprod{\bar{r}_{w,b},\parity}} \abs{\mu}(dw,db)} \\
    &\le \abs{\mu} \sup_{\substack{w \in \sph,  b \in [-\sqrt{d}, \sqrt{d}]}}{\abs{ \empinnerprod{\bar{r}_{w,b} , \parity} }}.
\end{align*}

Lemmas~\ref{lemma:relu-parity-correlation} and \ref{lemm:nonlinear-term-bound} together bound the correlation of any neuron $\bar{r}_{w,b}$ with $\parity$.
That is, for any $w \in \sph$ and  $b \in [-\sqrt{d},\sqrt{d}]$, with probability at least $1-\delta/3$:
\[
\abs{ \empinnerprod{\bar{r}_{w,b} , \parity} } \le \abs{ \cubeinnerprod{\bar{r}_{w,b} , \parity} } + C_1 \sqrt{\frac{d \parenl{\ln{n} + \ln\parenl{3/\delta}} }{n}  } \le  \frac{8}{d} + 2C_1 \sqrt{\frac{d \ln{n}}{n}  } \le \frac{C_2}{d},
\]
where $C_1$ is the constant from Lemma \ref{lemm:nonlinear-term-bound} and $n > C \parenl{d^3 (\log{d} + \log\parenl{1/\delta})}$ by assumption. 

We now show that the linear components cannot be substantially correlated with the parity function and bound $\empinnerprod{v^\T x + c, \parity}$.
Because no linear term correlates with the full parity dataset, Lemma~\ref{lemm:linear-term-bound} provides an upper bound on the inner product between the linear perturbation and sampled parity dataset and implies the following bound with probability at least $1-\delta/3$:
\begin{align*}
    \label{first-term}
    \abs{\empinnerprod{ v^\T x + c, \parity}} &\le  8\max\{\abs{\mu}, 1\} \sup_{\substack{\abs{c} \le 1 \\ \norm[2]{v} \le 1}}{ \abs{ \empinnerprod{v^{\T} x + c, \parity} }} \\
    &= 8\max\{\abs{\mu}, 1\}\left( \abs{\frac{1}{n}\sum_{i=1}^{n}{\by_i}} + \norm[2]{ \frac{1}{n}\sum_{i=1}^{n}{\by_i \bx_i} } \right)
    .
\end{align*}
By Lemma~\ref{lemma:concentration-sampled-parity} and our assumptions on $n$, we bound the two data-dependent terms with probability at least $1 - \frac\delta3$ for some absolute constant $C_2$:
\begin{align*}
    \abs{\empinnerprod{ v^\T x + c, \parity}}
    &\leq 8\max\{\abs{\mu}, 1\}\paren{\sqrt{\frac{2 \ln(12/\delta)}{n}} + 2 \sqrt{\frac{d}n}} \\
    &\leq \frac{C_2}{d}\max\{\abs{\mu}, 1\}.
\end{align*}

Combining both bounds, we have with probability at least $1-\delta$,
\begin{align*}
    \alpha \le \empinnerprod{\bar{g}_{\mu}(x) + v^{\T}x + c, \parity} \le \frac{C_2}{d}\left(\abs{\mu} + \max\{\abs{\mu},1\} \right) \le \frac{2C_2}{d}\max\{\abs{\mu}, 1\}.
\end{align*}
Therefore, we conclude
\[
    \abs{\mu} \geq \frac{\alpha d}{2C_2} - 1 .
    \qedhere
\]
\end{proof}

\begin{lemma}\label{lemma:concentration-sampled-parity}
    Fix any $\delta \in (0, 1)$.
    Assume $n \geq O(\log(1/\delta))$ and
    $n = \omega(d)$, let $\setl{(\bx_i, \by_i)}_{i \in [n]}$ be the sampled parity dataset (where $\by_i = \parity(\bx_i)$ for all $i \in[n]$), and let $\bX \in \R^{n \times d}$ be the data matrix containing all samples.
    All of the following hold with probability $1 - \delta$:
    \begin{enumerate}[label=(\roman*)]
        \item  $\abs{\frac{1}{n}\sum_{i=1}^{n}{\by_i}} \le \sqrt{\frac{2\ln(4/\delta)}{n}}$;
        \item $\norm[2]{\frac{1}{n}\sum_{i=1}^{n}{\bx_i}} \le 2 \sqrt{\frac{d}{n}}$;
        \item  $\norm[2]{\frac{1}{n}\sum_{i=1}^{n}{\by_i \bx_i}} \le 2 \sqrt{\frac{d}{n}}$; and
        \item $\frac34\sqrt{n} \le \sigma_{d}(\bX) \le \sigma_{1}(\bX) \le 2\sqrt{n}.$
    \end{enumerate}
\end{lemma}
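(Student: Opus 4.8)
The plan is to prove the four bounds separately, each holding outside an event of probability at most $\delta/4$ (with a slightly uneven allocation so that the stated constants come out right), and then conclude by a union bound. Throughout, recall that $\bx_1,\dotsc,\bx_n \simiid \unif(\flip^d)$, so each coordinate of each $\bx_i$ is an independent $\unif(\flip)$ variable, and $\by_i = \parity(\bx_i) \in \flip$. We will use only the following elementary facts: $\EE{\by_i} = \prod_{k \in [d]} \EE{(\bx_i)_k} = 0$; for $d \ge 2$, $\EE{\by_i (\bx_i)_j} = \EE{\prod_{k \ne j}(\bx_i)_k} = 0$; and, for even $d$, the map $x \mapsto \parity(x)\,x$ is a bijection of $\flip^d$, so $\by_i\bx_i \sim \unif(\flip^d)$ as well. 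Hence each of $\by_i$, $\bx_i$, and $\by_i\bx_i$ is independent across $i$, mean-zero, and $\flip$-valued coordinatewise. Part (i) is then immediate from Hoeffding's inequality (Lemma~\ref{lemma:hoeffding}) applied to the independent, mean-zero, $[-1,1]$-valued variables $\by_i$: $\pr{\absl{\frac1n\sum_i \by_i} \ge t} \le 2\exp(-nt^2/2)$, and choosing $t \asymp \sqrt{\ln(1/\delta)/n}$ makes this at most $\delta/4$, matching the stated bound up to the constant in the exponent.

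For parts (ii) and (iii), both quantities have the form $\norm[2]{\frac1n\sum_i \bz_i}$ with $\bz_i$ independent, mean-zero, and $\flip^d$-valued. First I would compute the second moment exactly: $\EE{\norm[2]{\frac1n\sum_i\bz_i}^2} = \frac1{n^2}\sum_{j\in[d]}\sum_{i\in[n]}\EE{(\bz_i)_j^2} = d/n$, so by Jensen $\EE{\norm[2]{\frac1n\sum_i\bz_i}} \le \sqrt{d/n}$. To concentrate, I would apply McDiarmid's inequality (Lemma~\ref{lemma:mcdiarmid}) to $f(\bz_1,\dotsc,\bz_n) := \norm[2]{\frac1n\sum_i\bz_i}$: replacing one $\bz_i$ moves the average by at most $\frac1n\norm[2]{\bz_i-\bz_i'} \le 2\sqrt d/n$, so $f$ exceeds its mean by $s$ with probability at most $\exp(-ns^2/(2d))$. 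Combining with the mean bound gives $\norm[2]{\frac1n\sum_i\bz_i} \le \sqrt{d/n} + s$, and taking $s$ so that this probability equals $\delta/4$ yields the claimed $2\sqrt{d/n}$ as long as $s \le \sqrt{d/n}$. Alternatively, one may view $\frac1{\sqrt n}\sum_i\bz_i$ as a random vector with independent $O(1)$-subgaussian coordinates (Lemma~\ref{lemma:subg}(iii)) and bound the deviation of its squared norm $\sum_{j\in[d]} (\frac1{\sqrt n}\sum_i (\bz_i)_j)^2$ from $d$ via Bernstein's inequality (Lemma~\ref{lemma:bernstein}) applied to the sub-exponential summands. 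For (iii) the reduction is to (ii) with $\bz_i = \by_i\bx_i$.

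For part (iv), the rows $\bx_i^\T$ of $\bX$ are independent and mean-zero, and by Lemma~\ref{lemma:mgf-flip} every unit-norm projection satisfies $\EE{\exp(t\, u^\T\bx_i)} = \prod_j \EE{\exp(t u_j (\bx_i)_j)} \le \exp(t^2/2)$, so $\bx_i$ is $v$-subgaussian with $v = O(1)$. I would then invoke Lemma~\ref{lemma:random-matrix-singular-values} with $t = \Theta(\ln(1/\delta))$: with probability at least $1-\delta/4$, every singular value of $\bX$ lies within $Cv(\sqrt d + \sqrt{\ln(1/\delta)})$ of $\sqrt n$. Since $n = \omega(d)$ and $n \ge O(\log(1/\delta))$, this perturbation is at most $\frac14\sqrt n$ once $n$ is large enough, so $\frac34\sqrt n \le \sigma_d(\bX) \le \sigma_1(\bX) \le \frac54\sqrt n \le 2\sqrt n$. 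A final union bound over the four events completes the proof.

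The main obstacle is parts (ii) and (iii): the target $d/n$ is small, so one must control fluctuations of order $d/n$ rather than of order $\sqrt{d/n}$, and it is here that the explicit constant $2$ must be tracked with care. This requires the deviation term $s$ (equivalently $\sqrt{d\log(1/\delta)/n}$) to be dominated by the mean $\sqrt{d/n}$, i.e., $d$ is not tiny relative to $\log(1/\delta)$ — which holds comfortably in the regime of interest (where $d \to \infty$ and $\delta$ is at worst inverse-polynomial in $d$). Everything else is a routine invocation of the textbook concentration inequalities collected in Appendix~\ref{asec:prelim}.
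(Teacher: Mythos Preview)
Your handling of (i) via Hoeffding and (iv) via the random-matrix singular value lemma is exactly what the paper does; the paper in fact allocates only $\delta/2$ to each of these two events and spends nothing on (ii) and (iii).

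For (ii) and (iii) the routes genuinely differ. The paper does not run a separate concentration argument at all: it writes $\sum_i\bx_i=\bX^\T\vec1$ and $\sum_i\by_i\bx_i=\bX^\T\by$ and bounds both norms deterministically from the singular-value (and trace) information about $\bX$ already secured in (iv). Your direct McDiarmid/Bernstein approach is a legitimate and more self-contained alternative; the reduction of (iii) to (ii) via $\bz_i=\by_i\bx_i$ is a nice touch (with the caveat that $\by_i\bx_i$ is uniform on $\flip^d$ only for even $d$---for odd $d$ it is supported on the even-parity slice---though mean-zero and $\flip^d$-valued is all your argument actually uses).

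One slip in your obstacle paragraph: when you compare the McDiarmid fluctuation $s\asymp\sqrt{d\log(1/\delta)/n}$ to the mean $\sqrt{d/n}$, the factor $d$ cancels, so $s\le\sqrt{d/n}$ reduces to $\log(1/\delta)=O(1)$ and has nothing to do with $d$ being large relative to $\log(1/\delta)$. Your route therefore delivers $\bigl(1+O(\sqrt{\log(1/\delta)})\bigr)\sqrt{d/n}$ rather than the stated constant $2$, no matter how large $d$ is. This is harmless for the downstream use of the lemma (all constants are eventually absorbed), but McDiarmid alone does not recover the exact constant in the statement; the paper sidesteps this by piggybacking on (iv) instead.
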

\begin{proof}
    Claim \textit{(i)} holds with probability at least $1 - \frac\delta2$ as a result of a standard application of Hoeffding's inequality (Lemma~\ref{lemma:hoeffding}) to a sum of Rademacher random variables.

    Claim \textit{(iv)} also holds with probability at least $1 - \frac\delta2$, since Lemma~\ref{lemma:random-matrix-singular-values} and the assumptions on $n$ imply that \[\sigma_1(\bX) \leq \sqrt{n} + C\paren{\sqrt{d} + \sqrt{\ln\frac2\delta}} \leq 2\sqrt{n}\] and \[\sigma_d(\bX) \geq \sqrt{n} - C\paren{\sqrt{d} - \sqrt{\ln\frac2\delta}} \geq \frac34\sqrt{n}.\]

    Claims \textit{(ii)} and \textit{(iii)} follow from the singular value bounds on $\bX$.
    \begin{align*}
    \norm[2]{\frac{1}{n}\sum_{i=1}^{n}{\bx_i}} &\le \frac1n \sqrt{\tr(\bX^\T \bX)} \le \frac1n \cdot 2\sqrt{nd} =  2 \sqrt{\frac{d}{n}}; \\
    \norm[2]{\frac{1}{n}\sum_{i=1}^{n}{\by_i \bx_i}} &\leq \frac1n \sqrt{\by^\T \bX \bX^\T \by} \leq \frac1n \cdot \sigma_1(\bX) \sqrt{d} \leq 2 \sqrt{\frac{d}{n}}.\qedhere
    \end{align*}
\end{proof}

\begin{lemma}\label{lemm:linear-term-bound}
Fix any
$\delta \in (0, 1)$.
Assume $n \geq O(\log(1/\delta))$ and
$n = \omega(d)$.
With probability at least $1 - \delta$ over the random measure $\nun$, if $\mu \in \measures$ satisfies
$\empnorm{g_{\mu}(\bx) + c + v^\T \bx - \parity(\bx) } \le 1$,
then $\max\{\abs{c + g_{\mu}(0)}, \norm[2]{v}\} \le 8\max\{\abs{\mu}, 1\}.$
\end{lemma}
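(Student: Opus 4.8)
The plan is to condition on the high-probability event of Lemma~\ref{lemma:concentration-sampled-parity} (which holds with probability at least $1-\delta$), after which the argument is deterministic and exploits one structural point: the constant term and the linear coefficient should be controlled \emph{jointly}, as a single vector $(\tilde c, v)\in\R\times\R^d$ with $\tilde c := c + g_\mu(0)$, rather than separately. Decompose $g_\mu(x) + c + v^\T x = \bar g_\mu(x) + a(x)$, where $\bar g_\mu(x) := g_\mu(x) - g_\mu(0)$ is the bias-corrected network and $a(x) := \tilde c + v^\T x$ is affine. Since $\parity$ is $\flip$-valued on $\flip^d$, we have $\empnorm{\parity} = 1$ exactly, so two triangle inequalities applied to the hypothesis $\empnorm{\bar g_\mu + a - \parity}\le 1$ give $\empnorm{a} \le \empnorm{\bar g_\mu + a - \parity} + \empnorm{\parity} + \empnorm{\bar g_\mu} \le 2 + \empnorm{\bar g_\mu}$.

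First I would bound $\empnorm{\bar g_\mu}$ uniformly over the measure. For $w\in\sph$ and any $b$, the $1$-Lipschitzness of $\relu$ gives the pointwise bound $\absl{\relu(w^\T x + b) - \relu(b)}\le\absl{w^\T x}$, so the adjusted neuron $\bar r_{w,b}(x) := \relu(w^\T x + b) - \relu(b)$ satisfies $\empnorm{\bar r_{w,b}}^2 \le \tfrac1n\norml[2]{\bX w}^2 \le \sigma_1(\bX)^2/n \le 4$ on the event of Lemma~\ref{lemma:concentration-sampled-parity}(iv), where $\bX\in\R^{n\times d}$ is the data matrix with rows $\bx_1,\dotsc,\bx_n$. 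Minkowski's integral inequality then gives $\empnorm{\bar g_\mu}\le\int\empnorm{\bar r_{w,b}}\,\absl{\mu}(\dd w,\dd b)\le 2\absl{\mu}$, hence $\empnorm{a}\le 2 + 2\absl{\mu}\le 4\max\{\absl{\mu},1\}$. Note that this step needs no covering-number argument: the top singular value of $\bX$ bounds every neuron simultaneously.

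It remains to convert the bound on $\empnorm{a}$ into a bound on $\norml[2]{(\tilde c, v)}$. Let $\bX' := [\one\mid\bX]\in\R^{n\times(d+1)}$ be the augmented data matrix, so that $\empnorm{a}^2 = (\tilde c, v)^\T\bigl(\tfrac1n(\bX')^\T\bX'\bigr)(\tilde c, v)$, and $\tfrac1n(\bX')^\T\bX'$ has the block form with diagonal blocks $1$ and $\Sigma := \tfrac1n\bX^\T\bX$ and off-diagonal block $m := \tfrac1n\sum_{i=1}^n\bx_i$. By parts (ii) and (iv) of Lemma~\ref{lemma:concentration-sampled-parity}, $\norml[2]{m}\le 2\sqrt{d/n}$ and $\lambda_{\min}(\Sigma)\ge 9/16$, and a one-line quadratic-form estimate (using $2\absl{s}\norml[2]{u}\le s^2 + \norml[2]{u}^2$) gives $\lambda_{\min}(\tfrac1n(\bX')^\T\bX')\ge 9/16 - \norml[2]{m}\ge 1/2$ once $n = \omega(d)$ is large enough. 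Therefore $\tfrac1{\sqrt2}\norml[2]{(\tilde c, v)}\le\empnorm{a}\le 4\max\{\absl{\mu},1\}$, so $\max\{\absl{\tilde c},\norml[2]{v}\}\le\norml[2]{(\tilde c, v)}\le 4\sqrt2\max\{\absl{\mu},1\}\le 8\max\{\absl{\mu},1\}$, which is the claim since $\tilde c = c + g_\mu(0)$.

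The main obstacle is a temptation to be resisted rather than a hard computation: bounding $\absl{c + g_\mu(0)}$ and $\norml[2]{v}$ in isolation --- say, by taking the $\nun$-mean of $g - \parity$ to control the constant and by correlating $g - \parity$ with each coordinate to control $v$ --- yields two inequalities whose right-hand sides each involve the other quantity, and these do not close. Grouping $(\tilde c, v)$ and invoking the least-singular-value lower bound on the \emph{augmented} design matrix $\bX'$ --- which is precisely where the hypothesis $n = \omega(d)$ is used, through $\tfrac1n\sum_i\bx_i\approx\zero$ --- breaks the circularity. The remaining ingredients (the triangle inequalities, the uniform neuron bound, and the block-matrix eigenvalue estimate) are routine.
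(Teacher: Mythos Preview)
Your proof is correct and shares the paper's overall skeleton: decompose $g = \bar g_\mu + a$ with $\bar g_\mu(x) = g_\mu(x) - g_\mu(0)$, use the top singular value of $\bX$ to get $\empnorm{\bar g_\mu}\le 2\absl{\mu}$, and then show that on the good event the empirical norm of the affine part dominates $\max\{\absl{\tilde c},\norml[2]{v}\}$. The difference is in how that last step is carried out. The paper keeps $\parity$ inside the affine residual and expands $\empnorm{c + v^\T x - \parity}^2$ explicitly, which brings in the label-dependent cross terms $\frac1n\sum_i\by_i$ and $\frac1n\sum_i\by_i\bx_i$ and thus requires all four parts of Lemma~\ref{lemma:concentration-sampled-parity} (in particular (i) and (iii)); after some AM/GM manipulations it arrives at $\empnorm{c + v^\T x - \parity}\ge\tfrac12\max\{\absl{c},\norml[2]{v}\}$. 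You instead peel $\parity$ off at the outset via the triangle inequality (using that $\empnorm{\parity}=1$ exactly), so no $\by$-cross-terms ever appear and only parts (ii) and (iv) are needed; the remaining work is the clean eigenvalue bound for the \emph{augmented} Gram matrix $\tfrac1n(\bX')^\T\bX'$. Your route is a bit more economical and even produces a marginally sharper constant ($4\sqrt2$ versus $8$); the paper's route makes the dependence on each empirical moment more explicit. Either way the hypothesis $n=\omega(d)$ enters in the same place, namely to make the off-diagonal block $\tfrac1n\sum_i\bx_i$ small.
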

\begin{proof}
We draw inspiration from the fact that the full parity dataset is orthogonal to any linear term and can never be well-approximated with large linear components. 
In other words, the square loss on approximating the full parity dataset with a linear function is minimized by the constant-zero function and strictly worsens as the linear terms increase.
That is, orthogonality ensures that $\cubenormtwol{c + v^\T x - \parity(x)}^2 = 1 + \absl{c}^2 + \norml[2]{v}^2$.
Thus, having an upper bound on the squared error imposes similar upper bounds on the norms of the linear terms.
We make a similar argument for the \textit{sampled} parity dataset, where we replace $\nu$ with $\nun$.

Without loss of generality, we incorporate $g_{\mu}(0)$ into $c$ and define $\bar{g}_\mu(x) = g_{\mu}(x) - g_{\mu}(0)$ which can be also represented as $\bar{g}_{\mu}(x) = \int{\bar{r}_{w,b}(x) \mu(dw,db)}$ where $\bar{r}_{w,b} = \relu(w^\T x + b) - \relu(b)$.
Let $\bX \in \R^{n \times d}$ be the collection of samples $\bx_i$ and let $\by_i = \parity(\bx_i)$.
We bound the squared loss of the linear component $v^\T x + c$, ignoring the neural network $\bar{g}_\mu$:
\begin{align*} 
\empnorm{c + v^\T x - \parity(x)}^2 
&= 1 + c^{2} + v^{\T}\paren{\frac{1}{n}\sum_{i=1}^{n}{\bx_i \bx_i^{\T}}}v - \frac{2}{n}v^{\T}\paren{\sum_{i=1}^{n}{(\by_i-c) \bx_i }} -  \frac{2c}{n}\sum_{i=1}^{n}\by_i \\
&\ge  1 + c^{2} + \frac1n \norm[2]{v}^{2} \sigma_{d}(\bX)^2 \\
&\quad - 2\norm[2]{v} \left( \norm[2]{\frac{1}{n}\sum_{i=1}^{n}{\by_i \bx_i}} + \abs{c}\norm[2]{\frac{1}{n}\sum_{i=1}^{n}{\bx_i}} \right) 
 - 2\abs{c}\abs{\frac{1}{n}\sum_{i=1}^{n}{\by_i}}.
\end{align*}

With probability $1 - \delta$, all events of Lemma~\ref{lemma:concentration-sampled-parity} hold, and we use them to lower-bound the squared loss.
\begin{align*}
\empnorm{c + v^\T x - \parity(x)}^2
&\ge 1 + c^{2} +\frac{9}{16}\norm[2]{v}^{2} - 4\sqrt{\frac{d}{n}}(1 + \abs{c})\norm[2]{v} - \frac{2 \sqrt{2 \ln(8 / \delta)}}{\sqrt{n}}\abs{c} \\
&\ge \frac{1}{4} \max\{\abs{c}, \norm[2]{v}\}^2.
\end{align*}
where we have used the assumptions on $n$ and the AM/GM inequality.
We now provide upper bounds on the square loss based on measure $\mu$ using the triangle inequality:
\[
\empnorm{c + v^\T x - \parity(x)} \le \empnorm{\bar{g}_{\mu}} + \empnorm{\bar{g}_{\mu}(x) + c + v^{\T}x - \parity(x)} \le \empnorm{\bar{g}_{\mu}} + 1.
\]
We may now connect $L^2(\nun)$ norm of $\bar{g}_{\mu}$ to its variational norm.
We bound the output of $\bar{g}_\mu$ on a single input $\bx_i$ by employing Cauchy-Schwarz:
\begin{align*}
 \bar{g}_{\mu}(\bx_i)^2 
 &\le \paren{\int{|\bar{r}_{w,b}(\bx_i)| \abs{\mu}(\dd w,\dd b)}}^2 \\
&\le \abs{\mu} \int{\bar{r}_{w,b}(\bx_i)^2 \abs{\mu}(\dd w,\dd b)}. 
\end{align*}

We sum over all $i$ to bound the norm of $\bar{g}_\mu$:
\begin{align*}
\empnorm{\bar{g}_{\mu}(x)}^2 
&\le \abs{\mu} \int{\empnorm{\bar{r}_{w,b}(x)}^2 } \abs{\mu}(\dd w,\dd b) 
\le \abs{\mu}^2 \sup_{\substack{w \in \sph, |b| \le \sqrt{d}}}{\empnorm{\bar{r}_{w,b}}^2 } \\
&\le \abs{\mu}^2 \sup_{w \in \sph}{\frac{1}{n}\sum_{i=1}^{n}{|w^{\T}\bx_i|^2 }} 
= \abs{\mu}^2 \frac{\sigma_1(\bX)^2}n \le 4 \abs{\mu}^2.
\end{align*} 
The second inequality relies on the Lipschitzness of $\relu$. Combining all the above,
\[
\frac{1}{2} \max\{\abs{c}, \norm[2]{v}\} \le \empnorm{c + v^\T x - \parity(x)} \le 1 + \empnorm{g_{\mu}} < 2 + 2 \abs{\mu}.\qedhere
\]
\end{proof}

\begin{lemma}\label{lemm:nonlinear-term-bound}
For $\bar{r}_{w,b}(x) = \relu(w^{\T} x + b)-\relu(b)$ and $n \geq d$,  there exists an absolute constant $C$ such that for any $\delta \in (0,1)$ with probability at least $1-\delta$,
\[
\abs{ \empinnerprod{\bar{r}_{w,b} , \parity} } \le \abs{ \cubeinnerprod{\bar{r}_{w,b} , \parity} } + C\sqrt{\frac{d \parenl{ \ln{n} + \ln\parenl{\fracl{1}{\delta}}}}{n}   },
\]
for all $w \in \sph, b \in [-\sqrt{d},\sqrt{d}]$.
\end{lemma}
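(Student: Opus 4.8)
The plan is to prove this uniform deviation bound by pointwise concentration plus a covering argument over the neuron parameter space $\sph \times [-\sqrt{d},\sqrt{d}]$. Write $\Delta_{w,b} := \empinnerprod{\bar{r}_{w,b},\parity} - \cubeinnerprod{\bar{r}_{w,b},\parity}$; by the triangle inequality it suffices to show $\sup_{w,b}\absl{\Delta_{w,b}} \le C\sqrt{d(\ln n+\ln(1/\delta))/n}$ with probability $1-\delta$. The one structural fact that makes the dimension appear as $\sqrt d$ and not $d$ is that the bias correction makes each neuron small on average: since $\relu$ is $1$-Lipschitz, $\absl{\bar{r}_{w,b}(x)} = \absl{\relu(w^\T x+b)-\relu(b)} \le \absl{w^\T x}$ for all $x$, so the summand $\bar{r}_{w,b}(\bx)\parity(\bx)$ is bounded by $\absl{w^\T\bx}\le\sqrt d$ (range) and has second moment $\EE{(w^\T\bx)^2} = \norml[2]{w}^2 = 1$ (variance). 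A plain Hoeffding bound, which sees only the $\sqrt d$ range, is too weak.

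First I would fix $(w,b)$ and apply Bernstein's inequality (Lemma~\ref{lemma:bernstein}) to $\pm\sum_i(\bar{r}_{w,b}(\bx_i)\parity(\bx_i)-\cubeinnerprod{\bar{r}_{w,b},\parity})$, whose terms are bounded by $2\sqrt d$ with total variance at most $n$; this yields $\pr{\absl{\Delta_{w,b}}\ge\epsilon} \le 2\exp(-c\min\{n\epsilon^2,\ n\epsilon/\sqrt d\})$. Next I would make this uniform: the map $(w,b)\mapsto\bar{r}_{w,b}(x)$ satisfies $\absl{\bar{r}_{w,b}(x)-\bar{r}_{w',b'}(x)} \le \sqrt d\,\norml[2]{w-w'} + 2\absl{b-b'}$ for all $x\in\flip^d$, so moving $(w,b)$ within $\eta$ of a net point changes both $\empinnerprod{\bar{r}_{w,b},\parity}$ and $\cubeinnerprod{\bar{r}_{w,b},\parity}$ by $O(\sqrt d\,\eta)$. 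Taking an $\eta$-net of $\sph$ of size $(3/\eta)^d$ (Lemma~\ref{lemma:covering}) and an $\eta$-net of $[-\sqrt d,\sqrt d]$ of size $2\sqrt d/\eta$, choosing $\eta = 1/(n\sqrt d)$ so the discretization slack is $O(1/n)$, and union-bounding over the product net, the event holds with probability $1-\delta$ once $c\min\{n\epsilon^2,\ n\epsilon/\sqrt d\} = \Omega(d\ln(nd)+\ln(1/\delta)) = \Omega(d\ln n + \ln(1/\delta))$ (using $n\ge d$, the one place that hypothesis enters). In the $n\epsilon^2$ branch this rearranges to $\epsilon = O(\sqrt{(d\ln n+\ln(1/\delta))/n})$, which together with the $O(1/n)$ slack lies inside the claimed bound (using $\ln(1/\delta)\le d\ln(1/\delta)$).

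The main care-point is the two-regime bookkeeping of Bernstein. When the target radius $\epsilon = \Theta(\sqrt{(d\ln n+\ln(1/\delta))/n})$ is below the sub-Gaussian threshold $\Theta(1/\sqrt d)$, the argument above closes directly. When it is above that threshold --- roughly when $n$ is small relative to $d$ --- the deterministic bound $\absl{\bar{r}_{w,b}(x)}\le\sqrt d$ forces $\absl{\empinnerprod{\bar{r}_{w,b},\parity}}\le\sqrt d$, while the claimed right-hand side is then already at least $\sqrt d$, so the inequality holds trivially. Checking that this dichotomy is exhaustive, and keeping the $d\ln(nd)$ net-cardinality term from blowing up past $d\ln n$ (which is exactly why $n\ge d$ is assumed), is the bulk of the work; everything else --- the explicit Bernstein constants, the choice of $\eta$, and the final algebra --- is routine.

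A cleaner alternative that avoids the regime split is to bound $\EE{\sup_{w,b}\absl{\Delta_{w,b}}}$ by symmetrization followed by Ledoux--Talagrand contraction: since $\relu$ is $1$-Lipschitz with $\relu(0)=0$ and $\sigma_i\parity(\bx_i)$ are again i.i.d.\ signs conditionally on the data, contraction removes the $\relu$ and the $-\relu(b)$ piece contributes at most $\sqrt d\,\EE{\absl{\sum_i\sigma_i}}$, so the whole expression is controlled by $\tfrac1n\EE{\norml[2]{\sum_i\sigma_i\bx_i} + \sqrt d\,\absl{\sum_i\sigma_i}} = O(\sqrt{d/n})$, using $\norml[2]{\bx_i}^2=d$ and Jensen. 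Concentrating the supremum around this expectation via McDiarmid's inequality (Lemma~\ref{lemma:mcdiarmid}), with bounded differences $2\sqrt d/n$ coming from the same range bound, then gives $\sup_{w,b}\absl{\Delta_{w,b}} = O(\sqrt{d(1+\ln(1/\delta))/n})$ for all $n$, which again implies the stated bound.
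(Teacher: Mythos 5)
Your primary route (pointwise Bernstein for each neuron, then a union bound over an $\eta$-net of $\sph\times[-\sqrt d,\sqrt d]$) has a genuine gap, and it sits exactly where you flagged "the bulk of the work": the two-regime dichotomy is not exhaustive. The net has cardinality $\exp(\Theta(d\ln(nd)))$, so the union bound forces the Bernstein exponent to be at least $B:=\Theta(d\ln(nd)+\ln(1/\delta))$. With range $K=2\sqrt d$ and variance $v\le n$, the deviation you can certify is $\epsilon=\Theta\bigl(\sqrt{B/n}+\sqrt d\,B/n\bigr)$; the Poissonian term $\sqrt d\,B/n=\Theta(d^{3/2}\ln(nd)/n)$ is dominated by the claimed $\sqrt{d\ln n/n}$ only when $n\gtrsim d^{2}\ln n$. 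In the intermediate regime $d\le n\ll d^{2}\ln n$ (say $n=d^{3/2}$, $\delta$ constant) Bernstein yields $\epsilon\approx\ln d$ while the lemma claims $O(\sqrt{\ln d}/d^{1/4})$; and your fallback ``the right-hand side already exceeds $\sqrt d$'' only kicks in when $n\le C^{2}(\ln n+\ln(1/\delta))$, nowhere near covering this range. The fix is to upgrade your own observation $\absl{\bar r_{w,b}(x)}\le\absl{w^\T x}$ from a variance-plus-range bound to full subgaussianity: since $u\mapsto\relu(u+b)-\relu(b)$ is $1$-Lipschitz and vanishes at $0$, Lemma~\ref{lemma:subg}(v) gives $\norml[\psi_2]{\bar r_{w,b}(\bx)}\le\norml[\psi_2]{w^\T\bx}\le 2$, hence the centered empirical average is $O(1/\sqrt n)$-subgaussian with a purely Gaussian tail $\exp(-cn\epsilon^{2})$ at \emph{all} scales. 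This is precisely Step~2 of the paper's proof, and with it the union bound (or the paper's expected-maximum-over-the-net bound plus McDiarmid) closes for every $n\ge d$ with no regime split.

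Your ``cleaner alternative'' is sound and does prove the lemma: symmetrization, absorbing $\parity(\bx_i)$ into the Rademacher signs, Ledoux--Talagrand contraction through the fixed $1$-Lipschitz $\relu$, the $-\relu(b)$ piece contributing $\sqrt d\,\EE{\absl{\sum_i\sigma_i}}/n=O(\sqrt{d/n})$, and McDiarmid with bounded differences $O(\sqrt d/n)$ give $O(\sqrt{d(1+\ln(1/\delta))/n})$, which is even slightly sharper than the stated bound (no $\ln n$). This differs from the paper, which instead bounds $\EE{\sup_{w,b}\absl{\bZ_{w,b}}}$ via a $1/\sqrt n$-covering of the parameter space together with the subgaussian maximal inequality (incurring the $\sqrt{\ln n}$ factor) before applying the same McDiarmid step. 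I would recommend writing up the contraction route, or repairing the first route with the $\psi_2$ observation above; as written, the first route does not establish the lemma for all $n\ge d$.
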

\begin{proof}
Observe that the inner product over the sampled parity dataset is an unbiased estimate of the inner product over the full parity dataset,
$$
\EE{\empinnerprod{\bar{r}_{w,b}, \parity}} = \cubeinnerprod{ \bar{r}_{w,b}, \parity}.
$$
Let $\bZ_{w,b}$ denote the deviation from the mean, i.e.
$$\bZ_{w,b} = \empinnerprod{\bar{r}_{w,b}, \parity}  - \cubeinnerprod{ \bar{r}_{w,b}, \parity}.$$
We use standard concentration of measure techniques for the following steps:
\begin{enumerate}
    \item \label{itm:Lipschitz-process} $\bZ_{w,b}$ is Lipschitz in terms of its parameterization $(w,b)$ in the sense that $\abs{\bZ_{w_1,b_1} - \bZ_{w_2,b_2}} \le 4 \sqrt{d} \gamma((w_1,b_1), (w_2,b_2))$, where $\gamma$ is a distance defined later on. 
    \item \label{itm:subgaussian} $\bZ_{w,b}$ is $O(\frac{1}{\sqrt{n}})$-subgaussian for fixed $w,b$.
    \item \label{itm:bound-on-expectation} $\EE{\sup_{w \in \sph, b\in [-\sqrt{d},\sqrt{d}]}{\abs{\bZ_{w,b}}}} = O(\sqrt{\frac{d}{n}})$ using a covering argument.
    \item \label{itm:bounded-difference} The maximum of $\abs{\bZ_{w,b}}$ is close to its expectation due to the bounded difference inequality.
\end{enumerate}

\textbf{(Step~\ref{itm:Lipschitz-process})} 
Using the fact that $\relu$ is 1-Lipschitz and triangle inequality,
\begin{align*}\abs{\bZ_{w_1,b_1} - \bZ_{w_2,b_2}} 
&\leq \abs{\empinnerprod{\bar{r}_{w_1,b_1}, \parity}  - \empinnerprod{\bar{r}_{w_2,b_2}, \parity}} + \abs{\cubeinnerprod{ \bar{r}_{w_1,b_1}, \parity}   - \cubeinnerprod{ \bar{r}_{w_2,b_2}, \parity}} \\
&\le 2 \cubenorminfty{ \bar{r}_{w_1,b_1} - \bar{r}_{w_2,b_2}}  \\ 
&\le 2 ( \cubenorminfty{ \bar{r}_{w_1,b_1} - \bar{r}_{w_2,b_1} } + \cubenorminfty{\bar{r}_{w_2,b_1} - \bar{r}_{w_2,b_2}}) \\
&\le 2\paren{ \max_{x \in \flip^d}(w_1-w_2)^{\T}x  + 2 \abs{b_1 - b_2} } \\
&\le 4\sqrt{d} \left( \norm[2]{w_1 - w_2} + \frac{|b_1 - b_2|}{\sqrt{d}} \right) \eqqcolon 4\sqrt{d} \gamma((w_1,b_1), (w_2,b_2)).
\end{align*}
Thus $\bZ_{w,b}$ is $4 \sqrt{d}$-Lipschitz with respect to $\gamma$.

\textbf{(Step~\ref{itm:subgaussian})}
We bound the subgaussianity of $\bZ_{w,b}$.
\begin{align*}
    \norm[\psi_2]{\bZ_{w,b}} 
    &\leq C_1 \norm[\psi_2]{ \empinnerprod{ \bar{r}_{w,b}, \parity} } 
    = C_1 \norm[\psi_2]{\sum_{i=1}^n \by_{i}(\relu(w^{\T}\bx_i + b)- \relu(b)) } \\
    &\le \frac{C_2}{\sqrt{n}} \norm[\psi_2]{ \by_{1}(\relu(w^{\T}\bx_1 + b)- \relu(b)) } \\
    &\le  \frac{C_2}{\sqrt{n}} \norm[\psi_2]{ \relu(w^{\T}\bx_1 + b) - \relu(b)} \\
    &\leq \frac{C_2}{\sqrt{n}} \norm[\psi_2]{w^\T \bx_1}
    \leq \frac{2C_2}{\sqrt{n}} 
\end{align*}
The first, second, and fourth inequalities rely on the centering, averaging, and Lipschitzness properties of subgaussian random variables in Lemma~\ref{lemma:subg}.
The third inequality follows from $\abs{\by_1} = 1$, and the final is due to the 2-subgaussianity of a vector with i.i.d.~Rademacher components.

\textbf{(Step~\ref{itm:bound-on-expectation})}
Let $\mathcal{N}_\epsilon$ be an $\epsilon$-covering of $\sph \times [-\sqrt{d}, \sqrt{d}]$ with respect to $\gamma$.
We bound its size using the standard $\epsilon$-net result in Lemma~\ref{lemma:covering} for $\epsilon \leq 2$.
\begin{align*}
\mathcal{N}\left( \epsilon, \mathbb{S}^{d-1} \times [-\sqrt{d},\sqrt{d}], \gamma \right) &\le \mathcal{N}\left( \frac{\epsilon}{2}, \mathbb{S}^{d-1} , \|\cdot\|_{2} \right) \times \mathcal{N}\left( \frac{\epsilon}{2},[-1,1], |\cdot| \right) \\
&\le \paren{\frac{6}{\epsilon}}^{d} \cdot \frac{4}{\epsilon} \le \paren{\frac{6}{\epsilon}}^{d+1}.
\end{align*}

We bound the expected maximum deviation over all $w$ and $b$ by employing a bound on the expected maximum of subgaussian random variables (Lemma~\ref{lemma:subg}), applying the covering numbers argument, letting $\pi(w, b) = \argmin_{(w', b') \in \mathcal{N}_\epsilon} \gamma((w, b), (w', b'))$, and setting $\epsilon \coloneqq 1 / \sqrt{n}$.
\begin{align*}
\EE{ \sup_{\substack{w \in \mathbb{S}^{d-1}, b\in [-\sqrt{d},\sqrt{d}] }}{ \abs{\bZ_{w,b}} } } 
&\le \EE{ \sup_{w,b}{ \abs{ \bZ_{w,b} - \bZ_{\pi(w,b)} } } } + \EE{ \sup_{(w,b) \in \mathcal{N}_{\epsilon}}{ \abs{\bZ_{w,b}} } }  \\
&\le 4\sqrt{d}\epsilon + \frac{2C_2}{\sqrt{n}} \sqrt{\ln \mathcal{N}\left( \epsilon, \mathbb{S}^{d-1} \times [-\sqrt{d},\sqrt{d}], \gamma \right)} \\
&\le 4\sqrt{d}\epsilon + 2C_2\sqrt{\frac{d+1}{n} \ln \frac{6}{\epsilon}} 
\leq C_3 \sqrt{\frac{d \ln n}n}.
\end{align*}

\textbf{(Step~\ref{itm:bounded-difference})}
We conclude by showing that $\sup_{w, b} \abs{\bZ_{w,b}}$ is close to its expectation with high probability due to the McDiarmid's inequality (Lemma~\ref{lemma:mcdiarmid}).
Consider a perturbation where $\bx_i$ is replaced by some $\bx_i' \in \flip^d$ with $\by_i' = \parity(\bx_i')$, and let $\bZ_{w,b}^i$ denote the resulting deviation term.
\begin{align*}
    \abs{\sup_{w, b} \abs{\bZ_{w,b}} - \sup_{w, b} \abs{\bZ_{w, b}^i}}
    &\leq \sup_{w, b} \abs{\bZ_{w,b} - \bZ_{w,b}^i} 
    = \frac1n \sup_{w, b}  \abs{\by_i \bar{r}_{w, b}(\bx_i) - \by_i' \bar{r}_{w, b}(\bx_i')} \\
    &\leq\frac1n \sup_{w, b}  \bracket{\abs{\bar{r}_{w, b}(\bx_i) - \bar{r}_{w, b}(\bx_i')} + \abs{(\by_i - \by_i')\bar{r}_{w, b}(\bx_i)}} \\
    &\leq \frac1n\bracket{\norm[2]{\bx_i - \bx_i'} + 2 \norm[2]{\bx_i}} 
    \leq \frac{4\sqrt{d}}n
\end{align*}

Hence, with probability at least $1 - \delta$:
\[\sup_{w, b} \abs{\bZ_{w, b}} 
\leq \sqrt{\frac{8d\ln 1/\delta}{n}} + \EE{\sup_{w, b} \abs{\bZ_{w, b}}} 
\leq C_4 \sqrt{\frac{d(\ln n + \ln 1/\delta)}{n}}.\]
The bound in the lemma statement immediately follows.
\end{proof}
 \section{Proofs for Section~\ref{sec:parity-gen}}\label{asec:parity-gen}

\subsection{Proofs for Section~\ref{ssec:parity-gen-lb}}\label{assec:parity-gen-lb}

\begin{lemma}
  \label{lemma:emprademacher-conditionally-subgaussian}
  Fix $S \subseteq [d]$ with $|S| \geq 3$, and let $\bx \sim \unif(\flip^d)$.
  Conditional on the value of $\parity_S(\bx)$, the random vector $\bx$ is mean-zero, isotropic, and satisfies
  \begin{equation*}
    \E{ \exp(u^\T \bx) \mid \parity_S(\bx) } \leq \exp(\norml[2]{u}^2)
  \end{equation*} 
  for all $u = (u_1,\dotsc,u_d) \in \R^d$.
\end{lemma}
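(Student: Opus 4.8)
The plan is to first describe the conditional law of $\bx$ given $\parity_S(\bx)$, then obtain the mean‑zero and isotropy claims from a sign‑flip symmetry, and finally establish the moment generating function bound by an exact computation that reduces to an elementary scalar inequality.

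Fix $\sigma\in\flip$ and condition on $\parity_S(\bx)=\sigma$. Under this conditioning, the coordinates $(\bx_j)_{j\notin S}$ are still i.i.d.\ $\unif(\flip)$ and independent of $(\bx_i)_{i\in S}$, while $(\bx_i)_{i\in S}$ is uniform on the $2^{|S|-1}$‑element set $\{y\in\flip^{S}:\prod_{i\in S}y_i=\sigma\}$. For the mean, $\E{\bx_j\mid\parity_S(\bx)}=0$ is immediate for $j\notin S$, and for $j\in S$ I would use the involution of the constraint set that flips the signs of $y_j$ and $y_\ell$ for some fixed $\ell\in S\setminus\{j\}$ (which exists since $|S|\ge 2$): it preserves the uniform measure and negates $y_j$. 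For isotropy, diagonal entries equal $1$ because $\bx_i^2=1$; an off‑diagonal entry $\E{\bx_j\bx_k\mid\parity_S(\bx)}$ with $j\ne k$ vanishes trivially whenever at least one of $j,k$ lies outside $S$ (by the independence above), and when both $j,k\in S$ I would apply the involution flipping the signs of $y_j$ and $y_\ell$ with $\ell\in S\setminus\{j,k\}$ --- which is exactly where the hypothesis $|S|\ge 3$ is used --- since this negates $y_jy_k$ and so forces the entry to equal its own negative.

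For the MGF bound, write $u^{\T}\bx=\sum_{j\notin S}u_j\bx_j+\sum_{i\in S}u_i\bx_i$, a sum of two conditionally independent pieces. The $j\notin S$ piece contributes $\prod_{j\notin S}\cosh u_j\le\exp\big(\tfrac12\sum_{j\notin S}u_j^2\big)$ by Lemma~\ref{lemma:mgf-flip}. For the $i\in S$ piece, using $\indicator{\prod_{i\in S}\bx_i=\sigma}=\tfrac12\big(1+\sigma\prod_{i\in S}\bx_i\big)$, $\pr{\parity_S(\bx)=\sigma}=\tfrac12$, and $\E{\bx e^{u\bx}}=\sinh u$ for $\bx\sim\unif(\flip)$, I obtain the exact identity
\[
  \E{\exp\left(\sum_{i\in S}u_i\bx_i\right) \mid \parity_S(\bx)=\sigma}
  =\prod_{i\in S}\cosh u_i+\sigma\prod_{i\in S}\sinh u_i .
\]
Multiplying the two pieces, the lemma reduces to showing $\prod_{i\in S}\cosh u_i+\sigma\prod_{i\in S}\sinh u_i\le\exp\big(\sum_{i\in S}u_i^2\big)$.

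The main obstacle is this last scalar inequality. I would first bound the left side by $\big(\prod_{i\in S}\cosh u_i\big)\big(1+\prod_{i\in S}|u_i|\big)$ using $|\sinh u|=\cosh u\cdot\tanh|u|\le|u|\cosh u$, and then by $\exp\big(\tfrac12\sum_{i\in S}u_i^2\big)\big(1+\prod_{i\in S}|u_i|\big)$ using $\cosh u\le e^{u^2/2}$. Writing $s:=\sum_{i\in S}u_i^2$ and $m:=|S|$, AM--GM gives $\prod_{i\in S}|u_i|\le(s/m)^{m/2}$, so it suffices to prove $1+(s/m)^{m/2}\le e^{s/2}$ for all $s\ge 0$ and $m\ge 2$ (hence certainly for $m=|S|\ge 3$). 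This follows by noting equality at $s=0$ and that the derivative $\tfrac12 e^{s/2}-\tfrac12(s/m)^{m/2-1}$ is nonnegative: via $\ln x\le x-1$ it is enough to check $(\tfrac m2-1)\ln\tfrac sm\le\tfrac s2$, which is trivial for $s\le m$ and follows from $\ln\tfrac sm\le\tfrac sm-1<\tfrac sm$ when $s>m$. Combining the two pieces then yields $\E{\exp(u^{\T}\bx)\mid\parity_S(\bx)}\le\exp\big(\sum_{j\notin S}u_j^2+\sum_{i\in S}u_i^2\big)=\exp(\norml[2]{u}^2)$, as desired.
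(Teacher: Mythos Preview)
Your proof is correct, but it takes a genuinely different route from the paper for the MGF bound. The paper fixes a single index $i\in S$, splits $u^\T\bx = u_{-i}^\T\bx_{-i} + u_i\bx_i$, and applies Cauchy--Schwarz to the conditional expectation. The key observation is that $\bx_{-i}\mid\parity_S(\bx)\sim\unif(\flip^{d-1})$ with fully independent coordinates (since $\bx_i$ is determined by $\bx_{-i}$ and the parity), and $\bx_i\mid\parity_S(\bx)\sim\unif(\flip)$ marginally; both factors are then bounded directly by Lemma~\ref{lemma:mgf-flip}, giving $\exp(\norml[2]{u}^2)$ in two lines with no explicit computation. Your approach instead separates the $S$ and $S^c$ blocks, computes the exact conditional MGF on $S$ as $\prod_{i\in S}\cosh u_i + \sigma\prod_{i\in S}\sinh u_i$, and then proves the scalar inequality $1+(s/m)^{m/2}\le e^{s/2}$ to close the argument. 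Both arguments use the hypothesis $|S|\ge 3$ only for isotropy (pairwise uncorrelatedness), not for the MGF bound itself. The paper's Cauchy--Schwarz trick is considerably shorter and avoids the scalar inequality entirely; your approach has the virtue of giving the exact conditional MGF and a self-contained elementary bound, and it makes transparent that the $S^c$ block contributes only $\exp(\tfrac12\sum_{j\notin S}u_j^2)$ (which you then relax to $\exp(\sum_{j\notin S}u_j^2)$ in the final line --- a harmless over-bound, though worth stating explicitly).
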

\begin{proof}
  The assumption $|S| \geq 3$ implies that, conditioned on $\parity_S(\bx)$, the $\{ \bx_i \}_{i \in [d]}$ are mean-zero and pairwise uncorrelated.
So it remains to show that, for any vector $u = (u_1,\dotsc,u_d) \in \R^d$,
  \begin{equation*}
    \E{ \exp(u^\T \bx) \mid \parity_S(\bx) } \leq \exp(\norml[2]{u}^2) .
  \end{equation*}
  So fix $u$, and fix any $i \in S$.
Let $u_{-i}$ (respectively, $\bx_{-i}$) be the vector obtained from $u$ (respectively, $\bx$) by removing the $i$-th entry.
  Observe that $\bx_{-i} \mid \parity_S(\bx) \sim \unif(\flip^{d-1})$, and also that $\bx_i \mid \parity_S(\bx) \sim \unif(\flip)$.
  (But, of course, $\bx_{-i}$ and $\bx_i$ are not conditionally independent given $\parity_S(\bx)$.)
  Therefore, using Cauchy-Schwarz,
  \begin{align*}
    \E{ \exp(u^\T \bx) \mid \parity_S(\bx) }
    & = \E{ \exp\parenl{ u_{-i}^\T \bx_{-i} } \exp\parenl{ u_i \bx_i } \mid \parity_S(\bx) } \\
    & \leq
    \sqrt{ \E{ \exp\parenl{2 u_{-i}^\T \bx_{-i} } \mid \parity_S(\bx) } }
    \sqrt{ \E{ \exp\parenl{2 u_i \bx_i } \mid \parity_S(\bx) } } \\
    & \leq
    \sqrt{ \exp\parenl{\norml[2]{2 u_{-i}}^2/2} }
    \sqrt{ \exp\parenl{(2u_i)^2 / 2 } }
    \\
    & = \exp\parenl{ \norml[2]{u}^2 } .
  \end{align*}
  Above, the second inequality uses the moment generating function bound from Lemma~\ref{lemma:mgf-flip}, as well as the conditional independence of $\{ \bx_j : j \neq i \}$ given $\parity(\bx)$.
\end{proof}

\lemmasmallsampleparityapprox*

\begin{proof}
  Throughout, we take $C>0$ to be a suitably large constant, and we assume $n \leq d^2/C$.
  The construction of $\bg \colon \domain \to \R$ is based on typical statistical behavior of the random examples $(\bx_1,\by_1), \dotsc, (\bx_n,\by_n)$, where $\by_i := \parity_S(\bx_i)$ for each $i \in [n]$.
  We may assume that $n \geq d$, since otherwise the examples can be perfectly fit with a linear function $\bg$, and this function has $\rnorm{\bg} = 0$.
  So, combining the assumption $n\geq d$ with the assumption
  $n \leq d^2/C$
implies that $d \geq C$.
  Observe that $\by_1, \dotsc, \by_n$ are i.i.d.~$\unif(\flip)$ random variables.
  Since $n \geq d \geq C$, it follows by standard binomial tail bounds that with probability at least $5/6$ over the realizations of $\by_1, \dotsc, \by_n$, the number of $\by_i$ that are equal to $1$ is at least $n/3$, and also that the number of $\by_i$ that are equal to $-1$ is also at least $n/3$.
  We henceforth condition on this ``good event'' (which depends only on $\by_1, \dotsc, \by_n$).

  To help define our construction of $\bg \colon \domain \to \R$ and set up the rest of the analysis, we partition $[n]$ into disjoint groups 
$G_1, G_2, \dotsc, G_m$ so that for each $j \in [m]$, (i) the size $n_j := |G_j|$ of the $j$-th group is between $c_1 d / \ln d$ and $2c_1 d / \ln d$, and (ii) all $\by_i$ for $i \in G_j$ are the same (i.e., all $+1$ or all $-1$).
  Here, with foresight, we set $c_1 := 1/256$; by using $d \geq C$, we ensure that each group is non-empty, and also that $n_j < d$.
  The feasibility of this partitioning is ensured because, in the ``good event'' (and using $d \geq C$), the number of $i \in [n]$ with $\by_i = 1$ is at least $n/3 \geq d/3 \geq c_1 d/\ln d$, and same for the number of $i \in [n]$ with $\by_i = -1$.
  Let $z^{(j)}$ denote the common $\by_i$ value for all $i \in G_j$.
  Finally, note that the number of groups $m$ satisfies $m \leq n \ln(d) / (c_1 d)$.

  We now define our construction of $\bg \colon \domain \to \R$.
  Let $\bA_j$ denote the random $n_j \times d$ matrix whose rows are the $\bx_i^\T$ for $i \in G_j$, and define the random vector $\bw^{(j)} := \bA_j^{\dag} (z^{(j)} \vec1)$.
  Observe that $\bw^{(j)}$ is a least squares solution to the system of linear equations $\{ \bx_i^\T w = \by_i : i \in G_j \}$, since $\by_i = z^{(j)}$ for all $i \in G_j$.
  We define $\bg$ as follows:
  \begin{equation*}
    \bg(x) = \sum_{j=1}^m z^{(j)} \relu(2 z^{(j)} \bw^{(j)\T} x - 1) .
  \end{equation*}

  To analyze our construction, we consider the realizations of $\bx_1, \dotsc, \bx_n$, and establish some basic properties that hold with sufficiently high probability (conditional on the ``good event'').
  Note that within a group $G_j$, the $\{ \bx_i \}_{i \in G_j}$ are (conditionally) iid, and the realizations across groups are also (conditionally) independent.

  We claim that with probability at least $5/6$ (conditional on the ``good event''),
  \begin{itemize}
    \item (P1) $\bw^{(j)\T} \bx_i = \by_i$ for all $j \in [m]$ and $i \in G_j$;
    \item (P2) $\norml[2]{\bw^{(j)}} \leq 2\sqrt{n_j/d}$ for all $j \in [m]$.
  \end{itemize}
  To establish this claim, we lower-bound the $n_j$-th largest singular value $\sigma_{n_j}(\bA_j)$.
Note that $\sigma_{n_j}(\bA_j)$ is at least the corresponding singular value of the $n_j \times (d-1)$ submatrix $\bB_j$ obtained from $\bA_j$ by removing the $t$-th column of $\bA_j$ for some $t \in S$.
  (If $S$ is empty, we may remove any column.)
  Since the rows of $\bA_j$ are independent, and since the entries of $\bx_i$ after removing the $t$-th one are iid $\unif(\flip)$ random variables, it follows that the $n_j \times (d-1)$ entries of $\bB_j$ are iid $\unif(\flip)$ random variables.
  Hence, the rows of $\bB_j^\T$ are independent, mean-zero, isotropic, and $O(1)$-subgaussian.
  By Lemma~\ref{lemma:random-matrix-singular-values} and a union bound, with probability at least $1 - 2m\exp(-\min_{j \in [m]} \{ n_j \})$,
  \begin{equation*}
    \sigma_{n_j}(\bA_j)
    \geq \sigma_{n_j}(\bB_j^\T)
    \geq \sqrt{d-1} - C_2 \sqrt{n_j}
    \geq \paren{ \sqrt{1 - \frac{1}{d}} - C_2 \sqrt{\frac{c_1}{\ln d}} } \sqrt{d}
    \quad \text{for all $j \in [m]$}
    ,
  \end{equation*}
  where $C_2>0$ is twice the absolute constant from Lemma~\ref{lemma:random-matrix-singular-values}, and the final inequality uses the upper-bound on $n_j$.
  The fact $d \geq C$ and the upper-bounds on $m$ and $n$ altogether imply that the probability of the above event is at least $5/6$, and also that $\sqrt{1 - 1/d} - C_2\sqrt{c_1/\ln d} \geq 1/2$.
  So in this event, for each $j \in [m]$, the column space of $\bA_j$ has rank $n_j$, so the system of linear equations defining $\bw^{(j)}$ is feasible, and
  \begin{equation*}
    \norml[2]{\bw^{(j)}}
    = \norml[2]{\bA_j^\dag (z^{(j)} \vec1)}
    \leq \sigma_1(\bA_j^\dag) \norml[2]{\vec1}
    = \frac{\sqrt{n_j}}{\sigma_{n_j}(\bA_j)}
    \leq 2\sqrt{\frac{n_j}{d}} .
  \end{equation*}
  This establishes P1 and P2 in the event as claimed.

  We further claim that with probability at least $5/6$ (conditional on the ``good event''),
  \begin{itemize}
    \item (P3) $|\bw^{(j)\T} \bx_i| \leq 4\norml[2]{\bw^{(j)}} \sqrt{\ln d}$ for all $j \in [m]$ and $i \in [n] \setminus G_j$.
  \end{itemize}
  To establish this claim, first observe that $\bx_i$ and $\bw^{(j)}$ are independent for $i \notin G_j$. 
Moreover, by Lemma~\ref{lemma:emprademacher-conditionally-subgaussian}, conditional on $\bw^{(j)}$ (with $G_j \not\ni i$), $\bx_i^\T \bw^{(j)}$ is a mean-zero random variable satisfying
  \begin{equation*}
    \E{ \exp(\bw^{(j)\T} \bx) \mid \parity_S(\bx), \bw^{(j)} } \leq \exp(\norml[2]{\bw^{(j)}}^2) .
  \end{equation*}
  So, by Markov's inequality and a union bound, we have with probability at least $5/6$,
  \begin{equation*}
    |\bw^{(j)\T} \bx_i| \leq (\sqrt{2} \norml[2]{\bw^{(j)}}) \sqrt{2 \ln (12mn)}
    \quad \text{for all $j \in [m]$ and $i \in [n] \setminus G_j$} .
  \end{equation*}
  Using $d \geq C$ and the upper-bounds on $m$ and $n$, we obtain $\sqrt{\ln(12mn)} \leq 2\sqrt{\ln d}$, and hence we deduce P3 from the above inequality.

  So, by a union bound, with probability at least $2/3$ (conditional on the ``good event''), the properties P1, P2, and P3 all hold simultaneously.
  We can now establish the desired properties of $\bg$.
  Using $d \geq C$, P2, and the upper-bounds on $m$ and $n$, we obtain
  \begin{equation*}
    \rnorm{\bg}
    \leq 2\sum_{j=1}^m \norml[2]{\bw^{(j)}}
    \leq 4\sum_{j=1}^m \sqrt{\frac{n_j}{d}}
    \leq 4\sqrt{m \sum_{j=1}^m \frac{n_j}{d}}
    = 4\sqrt{\frac{mn}{d}}
    \leq \frac{4n\sqrt{\ln d}}{d}
.
  \end{equation*}
  Furthermore, by P1, we have for any $j \in [m]$ and $i \in G_j$,
  \begin{equation*}
    2z^{(j)} \bw^{(j)\T} \bx_i - 1 = 2 z^{(j)} \by_i - 1 = 1 .
  \end{equation*}
  And by P2, P3, and the upper-bound on $n_j$, we have for any $j \in [m]$ and $i \in [n] \setminus G_j$,
  \begin{equation*}
    2z^{(j)} \bw^{(j)\T} \bx_i - 1
    \leq 2 |\bw^{(j)\T} \bx_i| - 1
    \leq 16 \sqrt{\frac{n_j \ln d}{d}} - 1
    \leq 16 \sqrt{c_1} - 1
    = 0 ,
  \end{equation*}
  and hence $\relu(2z^{(j)} \bw^{(j)\T} \bx_i - 1) = 0$.
  Therefore, for any $i \in [n]$, if $i \in G_j$,
  \begin{equation*}
    \bg(\bx_i) = z^{(j)} \relu(2z^{(j)} \bw^{(j)\T} \bx_i - 1) = z^{(j)} = \by_i .
    \qedhere
  \end{equation*}
\end{proof}

\subsection{Proofs for Section~\ref{ssec:parity-gen-ub}}\label{assec:parity-gen-ub}

\thmparitygenub*

\begin{proof}
Let $\bG$ denote all solutions to
\eqref{interpolating-r-norm} on the sampled parity dataset, so $\empnorm{\bg - \parity} = 0$ for all $\bg \in \bG$.
By Proposition~\ref{prop:integral-representation}, we can write each $\bg$ as $\bg(x) = g_{\bmu}(x) + \bv^\T x + \bc$, where $\bmu \in \measures$, $\bv \in \R^d$, and $\bc \in \R$.
Furthermore, we can assume that $g_{\bmu}(0) = 0$ by absorbing the value of $g_{\bmu}(0)$ into $\bc$ (at the cost of losing the evenness of $\bmu$, but evenness is not needed in the sequel).
Lemma~\ref{lemma:rnorm-relu-l1}, Theorem~\ref{thm:sparse-vp-solution}, and Theorem~\ref{thm:parity-rnorm-ub} together imply that every $\bg \in \bG$ satisfies $\rnorm{\bg} \leq Cd$ for some absolute constant $C>0$.
Let $\mathcal{E}$ be the event that $\max\{\absl{\bc}, \norml[2]{\bv}\} \leq 8Cd$ (for all $\bg \in \bG$), and let $\mathcal{E}^{\operatorname{c}}$ be its complement; event $\mathcal{E}$ occurs with probability at least $1-\delta/2$ by Lemma~\ref{lemm:linear-term-bound}, for another absolute constant $C'>0$.

Since, for each $\bg \in \bG$, we have $\bg(\bx_i) = \parity(\bx_i)$ for every example $(\bx_i,\parity(\bx_i))$ in the sampled parity dataset, it follows that $\empnorm{\clip \circ\, \bg - \parity} = \empnorm{\bg - \parity} = 0$ for all such $\bg \in \bG$.
For any $t>0$,
\begin{align*}
    \lefteqn{
        \pr{ \sup_{\bg \in \bG} \cubenormtwo{\clip \circ\, \bg - \parity}^2 \geq t }
    } \\
    & \leq
    \pr{ \sup_{\bg \in \bG} \cubenormtwo{\clip \circ\, \bg - \parity}^2 \geq t \mid \mathcal{E} }
    + \pr{\mathcal{E}^{\operatorname{c}}}
    \\
    & =
    \pr{
        \sup_{\bg \in \bG}
        \cubenormtwo{\clip \circ\, \bg - \parity}^2 - \empnorm{\clip \circ\, \bg - \parity}^2 \geq t \mid \mathcal{E}
    }
    + \pr{\mathcal{E}^{\operatorname{c}}}
    \\
    & \leq
    \pr{
        \sup_{g \in \mathcal{G}_0}
        \cubenormtwo{\clip \circ\, g - \parity}^2 - \empnorm{\clip \circ\, g - \parity}^2
        \geq t
    }
    + \delta/2
    ,
\end{align*}
where
\[
    \mathcal{G}_0 :=
    \left\{
        x \mapsto g(x) + v^\T x + c : \rnorm{g} \leq Cd , \ \max\{\norml[2]{v}, \absl{c} \} \leq 8Cd
    \right\}
    .
\]
Define
\[
    t_0 := 4 \underbrace{\EE{ \sup_{g \in \mathcal{G}_0} \frac1n \sum_{i=1}^n \beps_i g(\bx_i) }}_{\Rad_n(\mathcal{G}_0)}
    \, + \, 4\sqrt{\frac{\log(2/\delta)}{n}} .
\]
Above, $\Rad_n(\mathcal{G}_0)$ denotes the Rademacher complexity of $\mathcal{G}_0$, where $\beps_1,\dotsc,\beps_n \simiid \unif(\flip)$, independent of $\bx_1,\dotsc,\bx_n$.
Since, for any $y \in \flip$, the mapping $z \mapsto (y - \clip(z))^2 = (1-y \clip(z))^2$ is $4$-Lipschitz and has range $[-4,4]$, it follows by standard Rademacher complexity arguments~\citep[see, e.g.,][Theorem 8]{meir2003generalization} that
\begin{equation*}
    \pr{
        \sup_{g \in \mathcal{G}_0}
        \cubenormtwo{\clip \circ\, g - \parity}^2 - \empnorm{\clip \circ\, g - \parity}^2
        \geq
        t_0
    } \leq \delta / 2
    .
\end{equation*}

So it remains to show that $t_0 \leq \epsilon$ under the assumption $n \geq C_0 ((d^3 + \log(1/\delta))/\epsilon^2)$ for suitably large absolute constant $C_0>0$.
The second term in the definition of $t_0$ is at most $\epsilon/2$ provided that $C_0$ is chosen large enough.
To bound the first term ($\Rad_n(\mathcal{G}_0)$), we use the fact that
\[
    \Rad_n(\mathcal{G}_0)
    = \Rad_n(\mathcal{G}_1)
    + \Rad_n(\mathcal{G}_2)
\]
where $\mathcal{G}_1 := \{ g : \rnorm{g} \leq Cd \}$ and $\mathcal{G}_2 := \{ x \mapsto v^\T x + c : \max\{\norml[2]{v} , \absl{c} \} \leq 8Cd \}$.
Theorem~10 of \citet{pn21a} implies
\[
    \Rad_n(\mathcal{G}_1)
    \leq \frac{2 \cdot (Cd) \cdot \sqrt{d}}{\sqrt{n}}
    = O\left( \sqrt{\frac{d^3}{n}} \right) ,
\]
while Theorem~3 of \citet{kakade2008complexity} implies
\[
    \Rad_n(\mathcal{G}_2)
    \leq \sqrt{d+1} \cdot \sqrt{(8Cd)^2} \cdot \sqrt{\frac{2}{n}}
    = O\left( \sqrt{\frac{d^3}{n}} \right) .
\]
By choosing $C_0$ large enough, it follows that $\Rad_n(\mathcal{G}_0) \leq \epsilon/8$.
Hence, we have shown that $t_0 \leq \epsilon$ as required.
\end{proof}

 \section{Proofs for Section~\ref{sec:cosine-approx}}\label{asec:cosine-approx}

\subsection{Proof of Theorem~\ref{thm:periodic-ub}}
\label{assec:periodic-rnorm-ub}

\begin{restatable}[Detailed version of Theorem~\ref{thm:periodic-ub}]{theorem}{thmperiodicubfull}\label{thm:periodic-ub-detailed}
  Suppose $f \colon \domain \to [-1,1]$ is given by $f(x) = \phi(v^\T x)$ for some unit vector $v \in \sph$ and some $\phi \colon [-\sqrt{d}, \sqrt{d}] \to [-1,1]$ that is $L$-Lipschitz and $\rho$-periodic for $\rho \in [\norm[\infty]{v},1]$.
  Let $\sigma_{\rho,v} := \sqrt{2\rho\norml[1]{v} - 1}$, and fix any $\epsilon \in (0,1)$.
  There exists a function $g \colon \domain \to \R$ such that the following properties hold:
  \begin{enumerate}
    \item $\absl{f(x) - g(x)} \leq \epsilon$ for all $x \in \flip^d$;
    \item $g$ is represented by a neural network of width at most
      \begin{equation*}
        O\paren{\frac{dL(\sigma_{\rho,v} \sqrt{\log(1/\epsilon)} + \rho \log(1/\epsilon))}{\epsilon^2}} ;
      \end{equation*}
    \item $g$ satisfies
      \begin{equation*}
       \rnorm{g} =
        O\paren{ \frac{L^2(\sigma_{\rho,v} \sqrt{\log(1/\epsilon)} + \rho \log(1/\epsilon))(\sigma_{\rho,v} + \sqrt{\log(d/\epsilon)})}{\epsilon} } .
      \end{equation*}
  \end{enumerate}
\end{restatable}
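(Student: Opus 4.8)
The plan is an averaging construction in the spirit of Theorem~\ref{thm:parity-rnorm-approx-ub}. I would build $g$ as an ordinary average $g=\tfrac1k\sum_{j=1}^k h_{\bw^{(j)}}$ of $k=O(d/\epsilon)$ ridge functions $h_{\bw}(x):=\psi(\bw^\T x)$, where $\psi$ is one fixed truncated, piecewise-linear profile and $\bw^{(1)},\dots,\bw^{(k)}$ are i.i.d.\ random directions in $\R^d$, and then invoke the probabilistic method to show that with positive probability this single $g$ simultaneously (i) $\epsilon$-approximates $f$ on all of $\flip^d$, (ii) has small width, and (iii) has small $\rnorm{\cdot}$. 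The difference from the parity dataset is that the $f$-dataset has no symmetry to exploit, so the care goes into choosing the distribution of $\bw$ and the truncation window.

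For the ingredients: first I would use (a periodic version of) Lemma~\ref{lemma:lipschitz-approx} to replace $\phi$ by a $\rho$-periodic, $O(L)$-Lipschitz, piecewise-linear $\tilde\phi$ with $\norm[\infty]{\tilde\phi-\phi}\le\epsilon$, having $O(L\rho/\epsilon)$ breakpoints per period and total variation of $\tilde\phi'$ at most $O(L^2\rho/\epsilon)$ per period. Truncating $\tilde\phi$ smoothly to the window $[-W/2,W/2]$ with $W:=O\bigl(\sigma_{\rho,v}\sqrt{\log(1/\epsilon)}+\rho\log(1/\epsilon)\bigr)$ yields $\psi$, which is $O(1)$-bounded, $O(L)$-Lipschitz, has $O(WL/\epsilon)$ breakpoints, and has $\tv{\psi'}=O(WL^2/\epsilon)$. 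For the random direction, I would set, independently over $i\in[d]$, $\bw_i:=v_i$ with probability $1-\abs{v_i}/\rho$ and $\bw_i:=v_i-\rho\sign{v_i}$ with probability $\abs{v_i}/\rho$; this is a valid distribution precisely because $\rho\ge\norm[\infty]{v}$. Its key properties are: $\EE{\bw}=\zero$; $\bw_i-v_i\in\rho\Z$, so $(\bw-v)^\T x\in\rho\Z$ for every $x\in\flip^d$; $\abs{\bw_i}\le\rho$; for each $x\in\flip^d$, $\bw^\T x$ is a sum of $d$ independent mean-zero terms each bounded by $\rho$, with total variance $\sum_i\EE{\bw_i^2}=\rho\norm[1]{v}-1\le\sigma_{\rho,v}^2$; and $\EE{\norml[2]{\bw}^2}=\rho\norm[1]{v}-1\le\sigma_{\rho,v}^2$.

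The heart of the argument is then that the average works pointwise on the cube with small variance. Fix $x\in\flip^d$. If $\abs{\bw^\T x}\le W/2$, then by $\rho$-periodicity of $\tilde\phi$ together with $(\bw-v)^\T x\in\rho\Z$ one gets $h_{\bw}(x)=\tilde\phi(\bw^\T x)=\tilde\phi(v^\T x)$, which is within $\epsilon$ of $f(x)$; otherwise $h_{\bw}(x)=O(1)$. Bernstein's inequality applied to $\bw^\T x$ (mean $0$, variance $\le\sigma_{\rho,v}^2$, summands bounded by $\rho$) shows $\pr{\abs{\bw^\T x}>W/2}\le\epsilon$ for the stated $W$ --- the Bernstein-type tail $\sigma_{\rho,v}\sqrt{\log(1/\epsilon)}+\rho\log(1/\epsilon)$ rather than a pure sub-Gaussian tail is exactly why the window takes this form. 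Hence $\abs{\EE{h_{\bw}(x)}-f(x)}=O(\epsilon)$ and, crucially, $\Varl{h_{\bw}(x)}=O(\epsilon)$, since $h_{\bw}(x)$ equals $f(x)$ up to $\epsilon$ except on the probability-$O(\epsilon)$ event that the window is missed. Applying Bernstein once more to the average (variance $O(\epsilon)$, range $O(1)$) gives $\pr{\abs{g(x)-\EE{g(x)}}>\epsilon}\le 2\exp(-\Omega(k\epsilon))$, so a union bound over the $2^d$ points of $\flip^d$ shows $k=O(d/\epsilon)$ suffices for $\abs{g(x)-f(x)}=O(\epsilon)$ for all $x\in\flip^d$, with probability at least $9/10$; rescaling $\epsilon$ by a constant gives property~(i).

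Finally, the width of $g$ is at most $k$ times the number of breakpoints of $\psi$, i.e.\ $O(k\cdot WL/\epsilon)=O(dWL/\epsilon^2)$ deterministically (realizing each $\psi(\bw^{(j)\T}x)$ by ReLUs with unit weight vectors and biases in $[-\sqrt d,\sqrt d]$ is routine since $\domain$ has radius $\sqrt d$), which is property~(ii); and, writing $\bw=\norml[2]{\bw}\,\hat\bw$ with $\hat\bw\in\sph$, subadditivity of $\rnorm{\cdot}$ and Theorem~\ref{thm:single-index-r-norm} give $\rnorm{g}\le\tfrac1k\sum_j\rnorm{h_{\bw^{(j)}}}=\tv{\psi'}\cdot\tfrac1k\sum_j\norml[2]{\bw^{(j)}}$, while a Bernstein bound on $\norml[2]{\bw^{(j)}}^2$ (mean $\le\sigma_{\rho,v}^2$, each $\bw_i^2\le\rho^2$) and a union bound over $j\le k$ give $\max_j\norml[2]{\bw^{(j)}}=O\bigl(\sigma_{\rho,v}+\sqrt{\log(d/\epsilon)}\bigr)$ with probability at least $9/10$, so $\rnorm{g}=O\bigl(WL^2(\sigma_{\rho,v}+\sqrt{\log(d/\epsilon)})/\epsilon\bigr)$, which is property~(iii); intersecting the two good events yields the desired realization (a harmless additive constant fixes the value of $g$ at the origin if needed). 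I expect the main obstacle to be the uniform-over-$\flip^d$ control of the window-miss probability: in the parity case this was handled by exact symmetry, whereas here it forces both the choice $\EE{\bw}=\zero$ (so $\bw^\T x$ is centered at $0$ for \emph{every} $x$, which is why the naive choice $\bw=v+(\text{mean-zero lattice noise})$ fails, its mean $v^\T x$ being as large as $\norm[1]{v}$) and the Bernstein-shaped window; a secondary subtlety is that the $O(\epsilon)$ variance bound is what keeps $k=O(d/\epsilon)$ rather than $O(d/\epsilon^2)$, which is needed for the width bound.
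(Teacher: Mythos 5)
Your proposal is correct and is essentially the paper's own argument: an average of $k=O(d/\epsilon)$ truncated ridge functions in i.i.d.\ mean-zero random directions of the form $v+\rho\cdot(\text{integer noise})$, with $\rho$-periodicity giving exact agreement with $f$ on $\flip^d$ inside a Bernstein-sized window of radius $O(\sigma_{\rho,v}\sqrt{\log(1/\epsilon)}+\rho\log(1/\epsilon))$, a Chernoff/Bernstein union bound over the $2^d$ cube points to get $k=O(d/\epsilon)$, and concentration of $\norml[2]{\bw^{(j)}}$ to control the $\mathcal{R}$-norm. The only differences are cosmetic: your lattice shift is $-\rho\sign{v_i}$ with probability $\absl{v_i}/\rho$ rather than the paper's $-2\rho\sign{v_i}$ with probability $\absl{v_i}/(2\rho)$, you apply one fixed truncated profile to the unnormalized argument $\bw^\T x$ instead of per-direction rescaled profiles, and you use Bernstein on the average (via the $O(\epsilon)$ variance) where the paper uses a multiplicative Chernoff bound on the count of window misses; all of these yield the stated bounds.
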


\begin{proof}
  We first describe the (randomized) construction of our approximating neural network $\bg \colon \R^d \to \R$.
  For $w \in \Z^d$, define $h_w \colon \R^d \to \R$ by $h_w(x):=\phi(v^\T x + \rho w^\T x)$.
  Let $\bw \in \Z^d \setminus \{-(1/\rho)v\}$ be a random vector with distribution to be specified later in the proof.
  Let $\bw^{(1)}, \dotsc, \bw^{(k)}$ be i.i.d.~copies of $\bw$ for a positive integer $k > (9(d+1)\ln(2))/\epsilon$, and let $\bh^{(j)} := h_{\bw^{(j)}}$ for each $j$.
  Observe that each $\bh^{(j)}$ can be written as $\bh^{(j)}(x) = \phi^{(j)}(x^\T \bu^{(j)})$ for
  \begin{equation*}
    \bu^{(j)} := \frac1{\norml[2]{v+\rho\bw^{(j)}}} (v + \rho\bw^{(j)}) \in \sph
    \quad \text{and} \quad
    \phi^{(j)}(z) := \phi(\norml[2]{v+\rho\bw^{(j)}} z) ,
  \end{equation*}
  where $\phi^{(j)} \colon \R \to [-1,1]$ is $L_j$-Lipschitz for $L_j := L\norml[2]{v+\rho\bw^{(j)}}$ (using the $L$-Lipschitzness of $\phi$).
  Let $\tau>0$ be a value (depending on $\rho$, $v$, and $\epsilon$) also to be specified later.
  By Lemma~\ref{lemma:lipschitz-approx} (with $t := \tau/\norml[2]{v + \rho \bw^{(j)}}$ and $\delta := \epsilon/3$), there exist $\tilde\bh^{(1)}, \dotsc, \tilde\bh^{(k)}$ such that:
  \begin{itemize}
\item (H1) $\tilde\bh^{(j)} \colon \R^d \to \R$ is represented by a neural network of width at most $O(\tau L/\epsilon)$;
    \item (H2) $\rnorml{\tilde\bh^{(j)}} = O(\tau L^2 \norml[2]{v + \rho \bw^{(j)}}/\epsilon)$; 
    \item (H3) $|\tilde\bh^{(j)}(x) - \bh^{(j)}(x)| \leq \epsilon/3$ for all $x \in \flip^d$ such that $|x^\T \bu^{(j)}| \leq \tau / \norml[2]{v + \rho \bw^{(j)}}$;
    \item (H4) $|\tilde\bh^{(j)}(x) - \bh^{(j)}(x)| \leq 1$ for all $x \in \R^d$.
  \end{itemize}
  Our approximating neural network $\bg \colon \R^d \to \R$ is defined by
  \begin{equation*}
    \bg(x) := \frac1k \sum_{j=1}^k \tilde\bh^{(j)}(x) .
  \end{equation*}
  By construction and using properties H1 and H2 (above), the following properties of $\bg$ are immediate:
  \begin{itemize}
    \item (G1) $\bg$ is represented by a neural network of width at most $O(k\tau L/\epsilon)$;
    \item (G2) $\max\{\rnorml{\bg}, \vnorml{\bg} \} = O(\tau L^2 \max_{j \in [k]} \norml[2]{v + \rho \bw^{(j)}}/\epsilon)$.
  \end{itemize}
  Note that these properties are given in terms of $\tau$, which has yet to be specified, as well as $\max_{j \in [k]} \norml[2]{v + \rho \bw^{(j)}}$, which is a random variable.
  So, in the remainder of the proof, we choose a particular distribution for $\bw$ (and hence also for $\bw^{(1)},\dotsc,\bw^{(k)}$) and a value of $\tau$ that, together, will ultimately allow us to establish the existence of an approximating neural network with the desired properties via the probabilistic method.

  We first specify the probability distribution of $\bw$ and establish some of its properties.
  We let $\bw = (\bw_1,\dotsc,\bw_d)$ be a vector of independent random variables $\bw_1,\dotsc,\bw_d$ with $p_i := \prl{ \bw_i = -2\sign{v_i} } = \abs{v_i}/(2\rho)$ and $\prl{ \bw_i = 0 } = 1-p_i$.
  Note that $p_i \in [0,1/2]$ for all $i$ since we have assumed $\rho \geq \norm[\infty]{v}$, so the distribution of $\bw$ is well-defined.
  Furthermore, observe that $\bw \neq -(1/\rho) v$ almost surely (since $v \neq 0$ and $\rho \geq \norml[\infty]{v}$ by assumption), $\EEl{v + \rho\bw} = 0$, and
  \begin{equation*}
    \EE{\norm[2]{v + \rho \bw}^2}
    = \sum_{i=1}^d \Varl{\rho \bw_i}
    = \sum_{i=1}^d 4\rho^2 \cdot \frac{\abs{v_i}}{2\rho} \cdot \left( 1- \frac{\abs{v_i}}{2\rho} \right)
    = 2\rho \norm[1]{v} - \norm[2]{v}^2
    = \sigma_{\rho,v}^2 .
  \end{equation*}
  Moreover, $\norml[2]{v + \rho \bw}$ is a function of independent random variables $\bw_1,\dotsc,\bw_d$ that satisfies the $(2|v_1|,\dotsc,2|v_d|)$-bounded differences property.
  By McDiarmid's inequality (Lemma~\ref{lemma:mcdiarmid}) and Jensen's inequality,\begin{equation}
    \label{eq:cos-approx-mcdiarmid}
    \pr{ \norml[2]{v + \rho \bw} \geq \sigma_{\rho,v} + \sqrt{2\ln(1/\delta)} } \leq \delta \quad \text{for all $\delta \in (0,1)$} .
  \end{equation}
  Finally, for any fixed $x \in \flip^d$, $x^\T (v + \rho \bw) = \sum_{i=1}^d x_i (v_i + \rho \bw_i)$ is a sum of $d$ independent, mean-zero random variables, with variance $\Var{x^\T (v + \rho \bw)} = \sigma_{\rho,v}^2$ and $|x_i (v_i + \rho \bw_i)| \leq 2\rho$ almost surely for each $i$.
  By Bernstein's inequality (Lemma~\ref{lemma:bernstein}),\begin{equation}
    \label{eq:cos-approx-bernstein}
    \pr{ \abs{x^\T (v + \rho \bw)} \geq \sigma_{\rho,v} \sqrt{2 \ln(2/\delta)} + 2\rho\ln(2/\delta)/3 } \leq \delta
    \quad \text{for all $x \in \flip^d$, $\delta \in (0,1)$}
    .
  \end{equation}

  We now show that $\bg$ has the desired properties with positive probability.
  Since $w^\T x$ is an integer for any $w \in \Z^d$ and $x \in \flip^d$, and since $v + \rho \bw^{(j)} \neq 0$ almost surely, the $\rho$-periodicity of $\phi$ implies that $g_{\bw^{(j)}}(x) = f(x)$ for all $x \in \flip^d$ and all $j \in [k]$.
  Therefore, the intermediate (random) function $\bg_1 \colon \R^d \to [-1,1]$ defined by $\bg_1(x) := \frac1k \sum_{j=1}^k \bh^{(j)}(x)$ satisfies $\bg_1(x) = f(x)$ for all $x \in \flip^d$.
  For each $x \in \flip^d$, let \[\br(x) := |\{ j \in [k] : \absl{x^\T (v + \rho \bw^{(j)})} \geq \tau \}| = |\{ j \in [k] : \absl{x^\T \bu^{(j)}} \geq \tau / \norml[2]{v + \rho \bw^{(j)}} \}|.\]
  Using the approximation properties of $\tilde\bh^{(j)}$ (i.e., H3 and H4 from above), we have for each $x \in \flip^d$,
  \begin{equation*}
    \abs{\bg(x) - \bg_1(x)}
    = \frac1k \abs{ \sum_{j=1}^k (\tilde\bh^{(j)}(x) - \bh^{(j)}(x)) }
    \leq \left(1 - \frac{\br(x)}{k} \right) \cdot \frac\epsilon3 + \frac{\br(x)}{k} \cdot 1 .
  \end{equation*}
  This final expression is at most $\epsilon$ if $\br(x) \leq 2k\epsilon/3$.
  We choose $\tau$ such that for any $x \in \flip^d$, we have $\prl{ |x^\T (v + \rho \bw)| > \tau } \leq \epsilon/3$.
  By~\eqref{eq:cos-approx-bernstein}, it suffices to choose
  \begin{equation*}
    \tau := \sigma_{\rho,v} \sqrt{2 \ln(6/\epsilon)} + 2\rho\ln(6/\epsilon)/3 .
  \end{equation*}
  By a multiplicative Chernoff bound (Lemma~\ref{lemma:chernoff}) and a union bound over all $x \in \flip^d$,
  \begin{equation*}
    \pr{ \exists x \in \flip^d \ \text{s.t.} \ \br(x) > 2k\epsilon/3 } \leq 2^d \cdot e^{-k\epsilon/9} < \frac12 ,
  \end{equation*}
  where the final inequality uses the choice of $k > (9(d+1) \ln 2)/\epsilon$.
  Therefore, with probability more than $1/2$, we have $\br(x) \leq 2k\epsilon/3$ for all $x \in \flip^d$, and hence
  \begin{equation}
    \label{eq:cos-approx-accuracy}
    \absl{\bg(x) - f(x)}
    = \absl{\bg(x) - \bg_1(x)} \leq \epsilon
    \quad \text{for all $x \in \flip^d$} .
  \end{equation}
  Finally, by \eqref{eq:cos-approx-mcdiarmid} an a union bound over all $j \in [k]$, we have that with probability more than $1/2$,
  \begin{equation}
    \label{eq:cos-approx-norms}
    \max_{j \in [k]} \norml[2]{v + \rho \bw^{(j)}}
    \leq \sigma_{\rho,v} + \sqrt{2\ln(2k)}
    .
  \end{equation}
  So, there is a positive probability that both~\eqref{eq:cos-approx-accuracy} and~\eqref{eq:cos-approx-norms} hold simultaneously, and in this event, it can be checked (via G1 and G2 above) that the function $\bg$ satisfies the desired properties in the theorem.
\end{proof}

\begin{lemma}\label{lemma:lipschitz-approx}
    Suppose $f(x) = \phi(v^\T x)$ is an $L$-Lipschitz function for $v \in \sph$, $\phi: \R \to [-1, 1]$, and $L \geq 1$.
    For any $t \in [1, \sqrt{d} - 1]$ and $\delta \in (0, 1)$, there exists a neural network $g$ of width $O(\frac{tL}{\delta})$ such that:
    \begin{enumerate}
        \item $\rnorm{g} = O(\frac{tL^2}{\delta})$;
        \item $\abs{f(x) - g(x)} \leq \delta$ for all $x$ with $\abs{v^\T x} \leq t$; 
        \item $\abs{f(x) - g(x)} \leq 1$ for all $x \in \R^d$;
        \item $g(x) = 0$ for all $x$ with $\abs{v^\T x} \geq t + \frac1L$; and 
        \item $g$ is a ridge function that in direction $v$.
    \end{enumerate} 
\end{lemma}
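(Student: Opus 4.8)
The plan is to reduce to a one-dimensional approximation problem and then build a compactly supported piecewise-linear ridge function. By the rotation invariance of the \Rnorm (as invoked in the proof of Theorem~\ref{thm:single-index-r-norm}), it suffices to exhibit a Lipschitz, piecewise-linear $\psi \colon \R \to \R$ and set $g(x) := \psi(v^\T x)$: then $g$ is a ridge function in direction $v$ (property~5), Theorem~\ref{thm:single-index-r-norm} gives $\rnorm{g} = \tv{\psi'}$, and any support or error statement about $\psi$ on an interval transfers to $g$ on the corresponding slab $\{x : v^\T x \in \text{that interval}\}$.

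First I would fix a grid $-t = z_0 < z_1 < \dots < z_N = t$ with spacing at most $h := \delta/L$, so $N = O(tL/\delta)$, and let $\psi$ be the piecewise-linear interpolant of $\phi$ at the $z_i$ on $[-t,t]$. Outside $[-t,t]$ I would let $\psi$ drop linearly from $\phi(t)$ at $t$ to $0$ at $t+1/L$ (symmetrically on the left, from $\phi(-t)$ at $-t$ to $0$ at $-(t+1/L)$), and set $\psi \equiv 0$ for $|z| \ge t + 1/L$; this gives property~4. Because $\phi$ is $L$-Lipschitz and $|\phi|\le 1$, every linear piece of $\psi$ has slope of magnitude at most $L$ (on a grid cell it is an average slope of $\phi$; on a ramp its magnitude is $L\,|\phi(\pm t)| \le L$). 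Hence the slope change at each of the $O(tL/\delta)$ breakpoints of $\psi$ is at most $2L$, so $\tv{\psi'} = O(tL^2/\delta)$, which is property~1. Writing $\psi(z) = \sum_j a_j \relu(z - z_j)$ over the breakpoints $z_j$, with $a_j$ the slope change at $z_j$ (these telescope correctly because $\psi$ vanishes near both $\pm\infty$), yields $g(x) = \sum_j a_j \relu(v^\T x - z_j)$: a network of width $O(tL/\delta)$ with unit first-layer weights $v$ and biases $-z_j \in [-(t+1/L), t+1/L] \subseteq [-\sqrt d,\sqrt d]$ (using $t \le \sqrt d - 1$ and $L \ge 1$), and $\sum_j |a_j| = \tv{\psi'}$ re-confirms the \Rnorm bound via Lemma~\ref{lemma:rnorm-relu-l1}.

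For property~2, the error $e := \psi - \phi$ is $2L$-Lipschitz and vanishes at the two endpoints of whichever grid cell contains $z$, so $|e(z)| \le 2L\cdot h/2 = Lh = \delta$ for all $z \in [-t,t]$. For property~3 I would split $\R^d$ into three regions. On $\{|v^\T x| \le t\}$ the bound is $\delta \le 1$ by property~2; on $\{|v^\T x| \ge t+1/L\}$ we have $g = 0$ and $|f(x)| = |\phi(v^\T x)| \le 1$; and on a ramp, say $v^\T x = z \in [t, t+1/L]$, I would write $\psi(z) = \phi(t)\bigl(1 - L(z-t)\bigr)$ and, after normalizing so that $\phi(t) \ge 0$ (whence $\psi(z) \in [0,1]$), combine the Lipschitz lower bound $\phi(z) \ge \phi(t) - L(z-t)$ with the a priori bound $\phi(z) \le 1$ to sandwich $\psi(z) - \phi(z)$ between $-1$ and $1$.

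The only real subtlety I anticipate is this last ramp region in property~3: neither the Lipschitz bound nor the boundedness $|\phi|\le 1$ alone suffices there (each gives only $|\psi-\phi|\le 2$), and one must combine both together with the sign normalization. Everything else is routine grid and interpolation bookkeeping.
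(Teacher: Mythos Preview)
Your proposal is correct and follows essentially the same approach as the paper: construct a compactly supported piecewise-linear $\psi$ by interpolating $\phi$ on a $\delta/L$-grid over $[-t,t]$, append linear ramps to zero outside, and set $g(x)=\psi(v^\T x)$. The only cosmetic difference is the ramp: the paper uses slope-$L$ ramps (so $\psi$ reaches zero at $t+|\phi(t)|/L$), whereas you interpolate linearly from $\phi(t)$ to $0$ over the full interval $[t,t+1/L]$; both choices give an $L$-Lipschitz $\psi$ and the same width and \Rnorm bounds. Your write-up is in fact more careful than the paper's on property~3, where you correctly identify and handle the subtlety on the ramp region (combining Lipschitzness with the $[-1,1]$ bound), which the paper asserts without argument.
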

\begin{proof}
    We first introduce an $L$-Lipschitz function $\phi_t$ (visualized in Figure~\ref{fig:phi-t}) that perfectly fits $\phi$ on the interval $[-t, t]$ and is zero in $(\infty, -t-\frac1L] \cup [t+\frac1L, \infty]$:
    \[\phi_t(z) := \begin{cases}
        \phi(z) & \text{if $z \in [-t, t]$} ; \\
        \sign{\phi(t)} \max\set{\abs{\phi(t)} - L(z - t)), 0} & \text{if $z \geq t$} ; \\
        \sign{\phi(-t)} \max\set{\abs{\phi(-t)} - L(-z + t)), 0} & \text{if $z \leq -t$} .
    \end{cases}\]
    
    \begin{figure}
        \centering
        \includegraphics[width=0.8\textwidth]{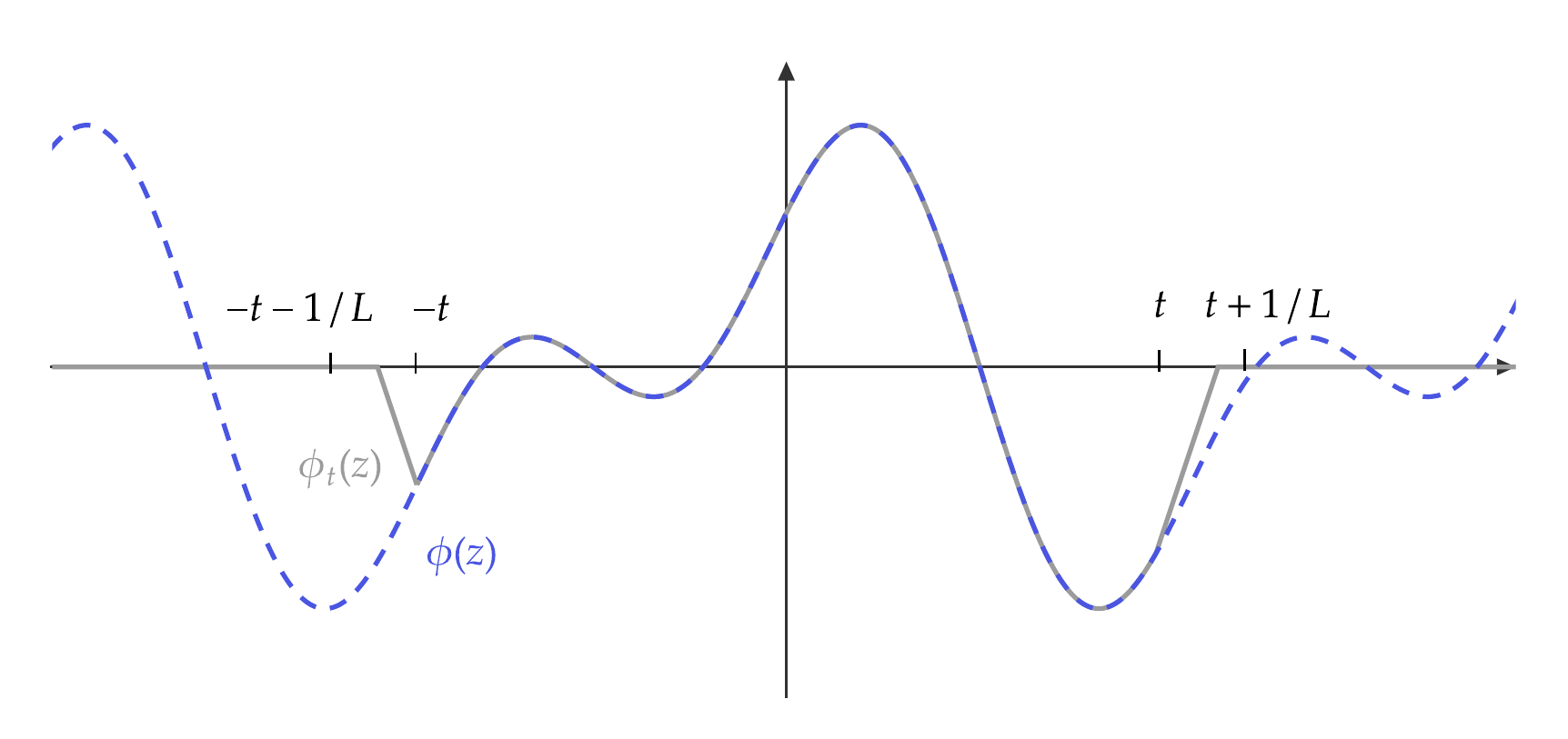}
        \caption{A visualization of how the truncated $\phi_t$ (gray) is generated from $\phi$ (blue), $t$, and $L$.}
        \label{fig:phi-t}
    \end{figure}
    
    Then, there exists a piecewise-linear function $\psi_t$ that 
    \begin{itemize}
        \item point-wise approximates $\phi_t$ to accuracy $\delta$;
        \item has $\psi_t(z) = \phi_t(z)$ for all $z \not\in [-t, t]$;
        \item has $\frac{2tL}{\delta}$ evenly-spaced knots on the interval $[-t, t]$ where $\psi_t$ exactly fits $\phi_t$; and
        \item is $L$-Lipschitz.
    \end{itemize}
    
    As a result $\psi_t$ can be written as a neural network with $\psi_t(z) = \sum_{j=1}^m a^{(j)} \relu{z - b^{(j)}}$ where $m = \frac{2Lt}\delta$, $b^{(j)} \in [-t - \frac1L, t + \frac1L]$, and $\abs{a^{(j)}} \leq 2L$.
    
    By taking $g(x) := \psi_t(v^\T x)$, we have a neural network that satisfies conditions 2, 3, 4, and 5. 
    The bound on $\rnorm{g}$ is immediate from the fact that $g$ can be expressed as a neural network with $O(\frac{tL}{\delta})$ neurons with unit weights, biases in $[-\sqrt{d}, \sqrt{d}]$, and bounded coefficients $a^{(j)}$.
\end{proof}

\subsection{Proof of Theorem~\ref{thm:ridge-cosine-lb}}
\label{assec:periodic-rnorm-ridge-approx-lb}

\thmridgecosinelb* 
\begin{proof}
We prove the claim by a reduction to Theorem~\ref{thm:ridge-parity-lb}.
That is, we show that an interpolant with better \Rnorm than the bound stipulates can be used to construct a neural network that contradicts Theorem~\ref{thm:ridge-parity-lb}.

To do so, we consider a lower dimension $d' = 4\floorl{d/ 4q} - 4$ and create a mapping from points $z \in \flip^{d'}$ to $x_z \in \flip^d$.
We define $a \in [0, 4q - 1]$ such that $2a \equiv d \pmod{4q}$.
For any $z$, we define $x_z$ as follows:
\[x_z = (\underbrace{z_1, \dots, z_1}_{q}, \dots, \underbrace{z_{d'}, \dots, z_{d'}}_{q}, \underbrace{1, \dots, 1}_{a}, \underbrace{-1, \dots, -1}_{d - d'q - a}).\]
Observe that \[\vec1^\T x_z =  q \vec1^\T z + 2a -d + d'q \equiv q \vec1^\T z \pmod{4q}.\] Due to the periodicity of cosine and the fact that  $d'$ is a multiple of 4, \[\cos\parenl{\frac{2\pi}{\rho} v^\T x_z} = \cos\parenl{\frac{\pi}{2} \vec1^\T z} = \parity(z).\]

Consider some $g(x) = \phi(w^\T x)$ with $\norml[\infty]{g - \cos\parenl{\frac{2\pi}{\rho} v^\T \cdot}} \leq \frac12$.
Define $w' \in \R^{d'}$ such that $w'_i := \sum_{j=1}^q w_{(i-1)q + j}$. Observe that $\norm[2]{w'} \leq \sqrt{q}$ and that $w^\T x_z = w^{\prime \T} z + c_w$, where $c_w$ depends only on the remaining elements of $w$.
Define $\tilde{g}(z) = \phi(w^{\prime \T} z + c_w)$.
Then, \[\absl{\tilde{g}(z) - \chi(z)} = \absl{\phi(w^\T x_z) -  \cos\parenl{\frac{2\pi}{\rho} v^\T x_z}} \leq \frac12\] for all $z \in \flip^{d'}$.
Since translation can only decrease the \Rnorm (by exhausting some neurons to effectively behave linearly in the domain) namely, $\rnorm{\tilde{g}} \le \norm[2]{w'} \tv{\phi'} =  \norm[2]{w'} \rnorm{g}$, Theorem~\ref{thm:ridge-parity-lb} implies that $\rnorm{g} = \Omega({d^{\prime 3/2}}/{\sqrt{q}})$.
The theorem statement follows by plugging in $q$ and $d'$.
\end{proof}

\end{document}